\newif\ifextendedv
\newif\iffuturev
\newif\ifextendstaggered
\newif\ifijrrv
\newif\iftrov
\newcommand\BibTeX{{\rmfamily B\kern-.05em \textsc{i\kern-.025em b}\kern-.08em
T\kern-.1667em\lower.7ex\hbox{E}\kern-.125emX}}
\let\NAT@parse\undefined
\algnewcommand\algorithmicto{\textbf{to}}
\algnewcommand\RETURN{\State \textbf{return} }
\newif\ifrelaxedv  
 \def\C{\mathcal{C}} 
  \def\L{\mathcal{L}}
 \def\L{\mathcal{L}} 
  \def\M{\mathcal{M}}
\def\X{\mathcal{X}}  \def\Y{\mathcal{Y}}
\def\L{\mathcal{L}} 
  \def\dC{\mathbb{C}}
 \def\dN{\mathbb{N}} 
\def\dR{\mathbb{R}} \def\dM{\mathbb{M}} 
  \def\dM{\mathbb{M}}
\def\eps{\varepsilon}
\def\CH{{\textrm{CH}}}
\def\OPT{{\textrm{OPT}}}
\def\cost{{\textrm{cost}}}
\newcommand{\prm}{{\tt PRM}\xspace}
\newcommand{\drrt}{{\tt dRRT}\xspace}
\newcommand{\rrt}{{\tt RRT}\xspace}
\newcommand{\drrtstar}{{\tt dRRT$^*$}\xspace}
\newcommand{\mccbs}{{\tt MC-CBS}\xspace}
\newcommand{\cbs}{{\tt CBS}\xspace}
\newcommand{\mstar}{{\tt M$^*$}\xspace}
\newcommand{\astar}{{\tt A$^*$}\xspace}
\newcommand{\abs}[1]{\left| #1 \right|}
\newcommand{\norm}[1]{\left\| #1 \right\|}
\newcommand{\bigbrace}[1]{\left\{ #1 \right\}}
\newcommand{\bigpar}[1]{\left( #1 \right)}
\newcommand{\fakeparagraph}[1]{\vspace{5pt} \noindent\textbf{#1}}
\newcommand{\xcover}[1]{#1-cover}
\theoremstyle{definition}
\newtheorem{definition}{Definition}
\newtheorem{theorem}{Theorem}
\newtheorem{lemma}[theorem]{Lemma}
\def\marrow{\marginpar[\hfill$\longrightarrow$]{$\longleftarrow$}}
\def\dror#1{{\color{orange}\textsc{(\textbf{Dror says}: }\marrow\textsf{#1})}}
\def\kiril#1{{\color{blue}\textsc{(\textbf{KS}: }\marrow\textsf{#1})}}
\DeclareMathOperator*{\argmin}{arg\,min}
\let\directcite\cite
\let\indirectcite\citep
\let\indirectcite\cite
\title{\LARGE \bf{
Near-Optimal Multi-Robot Motion Planning with Finite Sampling}}
\author{Dror Dayan\affilnum{1}, Kiril Solovey\affilnum{2}, Marco Pavone\affilnum{3}, and Dan Halperin\affilnum{1}}
\affiliation{\affilnum{1}Blavatnik School of Computer Science, Tel-Aviv University, Israel.\\
\affilnum{2}Computer Science Department, Technion -- Israel Institute of Technology, Israel.\\
\affilnum{3}Aeronautics and Astronautics Department, Stanford University, CA~94305, USA.}
\email{drordayan@mail.tau.ac.il, dror.d.dayan@gmail.com}
\author{Dror Dayan$^{1}$, Kiril Solovey$^{2}$, Marco Pavone$^{3}$, and Dan Halperin$^{1}$
	\thanks{$^{1}$ D.\ Dayan and D.\ Halperin are with the Blavatnik School of Computer Science, Tel-Aviv University, Israel. Their work was supported in part by the Israel Science Foundation (grant no.~1736/19), by the US NSF/US-Israel BSF (grant no. 2019754), by the Isarel Ministry of Science and Technology (grant no.~103129), by the Blavatnik Computer Science Research Fund, and by the Yandex Machine Learning Initiative for Machine Learning at Tel Aviv University.}%
	\thanks{$^{2}$ K.\ Solovey is with the Faculty of Electrical and Computer Engineering, Technion -- Israel Institute of Technology, where he is a Ravitz Fellow.}
	\thanks{$^{3}$ M.\ Pavone is with the Aeronautics and Astronautics Department, Stanford University, CA~94305, USA. His work is supported in part by the Toyota Research Institute (TRI) and the Center for Automotive Research at Stanford (CARS).}%
}
\begin{document}

\ifijrrv
\runninghead{Dayan et al.}

\fi

\ifijrrv
\else
\maketitle
\fi

\ifijrrv
\else
\ifextendedv
\thispagestyle{plain}
\pagestyle{plain}
\fi
\fi

\ifijrrv

\begin{abstract}
An underlying structure in several sampling-based methods for continuous multi-robot motion planning (MRMP) is the \emph{tensor roadmap}, which emerges from combining multiple probabilistic roadmap (PRM) graphs constructed for the individual robots via a tensor product. We study the conditions under which the tensor roadmap encodes a near-optimal solution for MRMP---satisfying these conditions implies near optimality for a variety of popular planners, including dRRT*, and the discrete methods M* and conflict-based search, when applied to the continuous domain. 
We develop the first finite-sample analysis of this kind, which specifies the number of samples, their deterministic distribution, and magnitude of the connection radii that should be used by each individual PRM graph, to guarantee near-optimality using the tensor roadmap. 
This significantly improves upon a previous asymptotic analysis, wherein the number of samples tends to infinity. Our new finite sample-size analysis supports guaranteed high-quality solutions in practice within finite time.
To achieve our new result, we first develop a sampling scheme, which we call the \emph{staggered grid}, for finite-sample motion planning for individual robots, which requires significantly fewer samples than previous work.  We then extend it to the much more involved MRMP setting which requires to account for interactions among multiple robots. Finally, we report on a few experiments that serve as a  verification of our theoretical findings and raise interesting questions for further investigation.  
\end{abstract}

\keywords{Multi-robot motion planning (MRMP), Probabilistic roadmap (PRM), Staggered grid, Tensor roadmap, Near optimality, Finite sample set}

\maketitle

\else 

\iftrov
\begin{abstract}
An underlying structure in several sampling-based methods for continuous multi-robot motion planning (MRMP) is the \emph{tensor roadmap}, which emerges from combining multiple probabilistic roadmap (PRM) graphs constructed for the individual robots via a tensor product. We study the conditions under which the tensor roadmap encodes a near-optimal solution for MRMP---satisfying these conditions implies near optimality for a variety of popular planners, including dRRT*, and the discrete methods M* and conflict-based search, when applied to the continuous domain. 
We develop the first finite-sample analysis of this kind, which specifies the number of samples, their deterministic distribution, and magnitude of the connection radii that should be used by each individual PRM graph, to guarantee near-optimality using the tensor roadmap. 
This significantly improves upon a previous asymptotic analysis, wherein the number of samples tends to infinity. Our new finite sample-size analysis supports guaranteed high-quality solutions in practice within finite time.
To achieve our new result, we first develop a sampling scheme, which we call the \emph{staggered grid}, for finite-sample motion planning for individual robots, which requires significantly fewer samples than previous work.  We then extend it to the much more involved MRMP setting which requires to account for interactions among multiple robots. Finally, we report on a few experiments that serve as a  verification of our theoretical findings and raise interesting questions for further investigation.
\end{abstract}

\else 

\begin{abstract}
An underlying structure in several sampling-based methods for continuous multi-robot motion planning (MRMP) is the tensor roadmap (TR), which emerges from combining multiple PRM graphs constructed for the individual robots via a tensor product. We study the conditions under which the TR encodes a near-optimal solution for MRMP---satisfying these conditions implies near optimality for a variety of popular planners, including dRRT*, and the discrete methods M* and CBS when applied to the continuous domain. 
We develop the first finite-sample analysis of this kind, which specifies the number of samples, their deterministic distribution, and magnitude of the connection radii that should be used by each individual PRM graph, to guarantee near-optimality using the TR. 
This significantly improves upon a previous asymptotic analysis, wherein the number of samples tends to infinity. Our new finite sample-size analysis supports guaranteed high-quality solutions in practice within finite time.
To achieve our new result, we first develop a sampling scheme, which we call the \emph{staggered grid}, for finite-sample motion planning for individual robots, which requires significantly less samples than previous work.  We then extend it to the much more involved MRMP setting which requires to account for interactions among multiple robots. Finally, we report on a few experiments that serve as a  verification of our theoretical findings and raise interesting questions for further investigation.  
\end{abstract}

\fi 

\fi 

\section{Introduction}
Multi-robot (MR) systems are already playing a crucial role in manufacturing, warehouse automation, and natural resource monitoring, and in the future they will be employed in even broader domains from space exploration to search-and-rescue. One of the most basic ingredients necessary in all those applications are mechanisms for multi-robot motion planning (MRMP), which should quickly generate motion trajectories to move robots from their origins to destinations, while avoiding collisions with the environment and between robots. In many cases, it is desirable to develop MRMP approaches that provide strong guarantees of completeness and near-optimality, to ensure that a high-quality solution would be found (if one exists). To achieve this, methods for MRMP must accurately capture the continuous state space of individual robots and the intricate interactions between multiple robots. Those considerations make the task of designing efficient high-quality methods for MRMP tremendously challenging~\indirectcite{SolHal16j,sy-snp84,hss-cmpmio,johnson2018relationship}

In this work we develop a general approach for centralized near-optimal sampling-based motion planning using a finite number of samples, which is the first of its kind. 
Previous solutions guarantee path quality for the multi-robot case only asymptotically, namely, as the number of samples tends to infinity. In contrast, our new method explicitly prescribes finite samples sets, and by that supports guaranteed high-quality solutions in practice, within bounded running time.

\fakeparagraph{Related work.}
This paper focuses on centralized approaches for  multi-robot motion coordination where the planning is carried out by a single entity that knows the state of the entire system. In the decentralized setting, which is outside the scope of this work, each robot has autonomy in planning its motion typically based on local knowledge about the system (say the state of its few neighboring robots)~\indirectcite{long2018towards, yan2013survey, 5663672, DBLP:journals/corr/abs-2103-11067, DBLP:journals/trob/RufliAS13}. A common approach to centralized MRMP, which is often taken in the AI research community, is to consider a discretized version of the problem, termed multi-agent pathfinding (MAPF), wherein robots are assumed to move along vertices of a graph. A variety of methods were developed for MAPF, including integer-programming formulations~\indirectcite{han2019effective}, path-based search methods~\indirectcite{WagCho15}, and conflict-based search~\indirectcite{sharon2015conflict}. Those have been successfully applied to a variety of problems involving multiple robots---from warehouse management~\indirectcite{MaEA2019} to multi-drone package delivery~\indirectcite{ChoudhuryEA2020}. Unfortunately, MAPF methods provide no solution quality guarantees with respect to the original continuous MRMP problem, since they usually employ a crude discretization of the robots' environment in the form of a regular lattice, where a cell size corresponds to a robot's bounding box. 

Recent work extends conflict-based search to the continuous MRMP domain, by using \prm graphs to capture 
the individual robots' state space~\indirectcite{HonigEA2017, LiEA2019,IrivingEA2019}. Nevertheless, no guidelines are provided as to how to construct those roadmaps (in terms of number  of samples, their distribution, and connection radius) to guarantee completeness or optimality  with respect to the original MRMP problem.

In a different line of work, computational geometry methods are employed to explicitly reason about the robots' continuous state space. Such methods are quite powerful, in that they guarantee polynomial runtime, completeness, and near-optimal solutions for different quality metrics~\indirectcite{AdlETAL15,SolomonHalperin2018,TurMicKum12,SolETAL15}. However, those methods are typically restricted to disc-shaped robots operating within a planar domain, and they require special separation constraints, e.g., between the robots' initial and terminal positions, in order to work correctly, which limits their applicability in practice.

A promising direction, which aims to overcome the limitations of  discrete and geometry-based approaches, are sampling-based (SB) planners. SB-planners were initially developed to the tackle the single-robot motion-planning problem for complex systems, by using random sampling of states to capture the structure of the robot's complex state space, which results in a discrete graph representation. Many of those algorithms, including the celebrated \prm~\indirectcite{PRM.508439,660866,lavalle2006planning}, and \rrt~\indirectcite{KL2000_geom,KSLBH18} algorithms are known to converge to a solution asymptotically with the number of samples drawn. Some SB-planners are also guaranteed to converge to the optimal solution as the number of samples tends to infinity~\indirectcite{doi:10.1177/0278364915577958,karaman2011sampling,SoloveyEA2020,HauserZ16,KleinbortEA202}. Although SB planners typically utilize randomized sampling approaches, some papers have considered more sophisticated methods such as deterministic and quasi-random sampling~\cite{DBLP:journals/ral/PalmieriBMA20, 932820,DBLP:journals/ijrr/JansonIP18} and learning-based sampling~\cite{DBLP:conf/icra/IchterHP18}. Our previous work~\cite{TsaoSoloveyETAL2020} studies the sample complexity of \prm and derives theoretical upper and lower bounds. Notably, those are the first bounds of their kind derived for the finite-sample setting, in contrast with previous works that consider the asymptotic regime where the number of samples tends to infinity. For additional discussion on sampling distributions for the single-robot case see~\cite{TsaoSoloveyETAL2020}.

Following the success of SB approaches for the single-robot case a variety of SB-MRMP approaches have emerged, from methods that aim to apply \prm-based solutions~\indirectcite{sl-upp,SveOve98}, to techniques that sample local instances of the MAPF problem~\indirectcite{sh-kcolor14,KronitirisEA2014}. A different method, termed discrete RRT (\drrt)~\indirectcite{drrt.doi:10.1177/0278364915615688}, carefully explores an implicitly-represented tensor roadmap (TR), which emerges from combining several \prm roadmaps constructed for the individual robots, to effectively solve instances of MRMP requiring tight coordination between multiple robots. A recent work further improves this approach with the \drrtstar method~\indirectcite{DBLP:journals/arobots/ShomeSDHB20}, which is also shown to yield a near-optimal MRMP solution. To the best of our knowledge, this is the only scalable SB-MR planner with such a guarantee. Unfortunately, this result is asymptotic and does not specify guarantees for a finite number of samples. 

\fakeparagraph{Contribution. }
We develop a general framework for near-optimal SB-MRMP while using a finite number of samples. To this end, we study the structure of the TR which is an underlying ingredient in \drrtstar, \mccbs~\indirectcite{LiEA2019} (a continuous-space extension of \cbs), and \mstar~\indirectcite{WagCho15} (when applied to a continuous space). 
We develop conditions under which the TR encodes a near-optimal solution to MRMP---if those conditions hold, it implies that the aforementioned planners are guaranteed to be near-optimal as well. In particular, we prescribe a recipe for constructing individual-robot \prm \footnote{Throughout the paper we slightly abuse the term \prm to refer generally to maps of individual robots, even when they are deterministic.} graphs, in terms of number of samples, their deterministic distribution, and connection radius, so that the resulting TR encodes a near-optimal solution for MRMP. 

To achieve this we refine the asymptotic analysis that we developed in~\directcite{DBLP:journals/arobots/ShomeSDHB20} (in the context of the \drrtstar algorithm) to the finite-sample regime. The latter requires a much more careful study of the different ingredients of the problem, including the clearance parameter between robots and obstacles, and the approximation factor, which were previously assumed to be infinitesimally small. A key component in our result is a new  sampling scheme that we develop, which we call the \emph{staggered grid}, for finite-sample motion planning for individual robots. This sampling scheme requires significantly less samples than our previous work~\indirectcite{TsaoSoloveyETAL2020}, which employs a greedy approach for generating samples, to achieve near-optimality for the single-robot case. We also emphasize that the previous work~\indirectcite{TsaoSoloveyETAL2020} has not considered the multi-robot setting. For more details on~\indirectcite{TsaoSoloveyETAL2020}, see Section~\ref{sec:single_comparison}.
Throughout this work we assume that all the robotic systems are holonomic, where the configuration space of each robot is $[0,1]^d$, for some $d\geq 2$.

The organization of this paper is as follows. In Section~\ref{sec:single} we describe the single-robot problem,  introduce the staggered grid, and study the theoretical properties of \prm using this sampling scheme. In Section~\ref{sec:multi_main} we describe our central contribution, namely the extension of this theoretical result to the multi-robot setting. We provide experimental results in Section~\ref{sec:experiments}, and conclude with an outline of future work in Section~\ref{sec:conclusion}.
\ifextendedv
\else
An extended version of this paper provides additional information and missing proofs~\indirectcite{DayanEA2020}.
\fi

\ifextendedv
This paper is an extended and revised version of a conference version that previously appeared in the International Conference on Robotics and Automation~\cite{DayanSoloveyETAL2021}. 
The changes from the conference version include additional full proofs for all theorems and lemmas (Section~\ref{sec:proof_single} and most of Section~\ref{sec:multi completeness}). Importantly, most of the major proofs have only appeared previously as sketches of proofs. We also add the entirety of Section~\ref{sec:single_comparison}, which compares our results using the staggered grid with previous results in terms of the sample-set size, and Section~\ref{sec:random_vs_sg}, which compares the staggered grid with random sampling. 
\fi

\section{Improved sampling distributions for a single robot}\label{sec:single}

We present improved sampling distributions for probabilistic roadmaps (\prm) for the \emph{single-robot} case. Our results prescribe the number of samples and their distribution sufficient to achieve a desired solution quality using the \prm approach. First, we introduce basic ingredients of the problem, then review the concept of \prm, and introduce our staggered-grid sampling. We conclude  with our main theoretical result for the single-robot case, pertaining to the quality of solutions obtained using \prm with staggered-grid sampling.

\subsection{Basics of single-robot motion planning}
Let $\C$ denote the configuration space of the robot, which we assume here to be $[0,1]^d$, where $d$ is the dimension, which is the number of degrees of freedom of the robot. The free space, denoted by $\C^f \subset \C$, represents the set of all collision-free configurations. A motion planning problem is a tuple $\M := (\C^f, x^{s},  x^{g})$, where $\C^f$ is the free space, and $x^{s},  x^{g}\in \C^f$ are the  start and goal configurations, respectively. A solution of $\M$ is a continuous collision-free trajectory $\sigma:[0,1]\to\C^f$ that begins at $\sigma(0) = x^{s}$ and ends at $\sigma(1) = x^{g}$. We measure the quality of a trajectory $\sigma$ by its length, which is denoted by $\norm{\sigma}$.

A crucial property of trajectories in sampling-based planning is the notion of clearance. A trajectory $\sigma$ has $\delta$-clearance if $\bigcup_{0 \leq t \leq 1}B_{\delta}(\sigma(t)) \subseteq \C^f$, for $B_{\delta}(\sigma(t))$ being the $d$-dimensional closed Euclidean ball with radius $\delta$ centered at $\sigma(t)$. We say that  $\M$ is $\delta$-clear if there exists a trajectory $\sigma$  with clearance $\delta$ that solves $\M$.

\subsection{Probabilistic roadmaps and sample sets}
\label{ssec:roadmaps-and-samples}
We provide a formal definition of the Probabilistic Roadmap (\prm) method~\indirectcite{PRM.508439}, which constructs a discrete graph that captures the connectivity of $\C^f$ via sampling. \prm plays a critical role in various sampling-based planners (see, e.g.,~\indirectcite{solovey2020critical,TsaoSoloveyETAL2020}). \prm is also instrumental to our result both on single-robot motion-planning in this section and on multi-robot motion-planning in Section~\ref{sec:multi_main}.

For a given motion-planning problem $\M = (\C^f, x^{s},  x^{g})$, a sample (point) set $\X\subset \C^f$, and a connection radius $r>0$, \prm generates a graph denoted by $G_{\M(\X, r)} = (V,E)$. The vertex set $V$ consists of all the collision-free configurations in $\X \cup \{ x^{s},  x^{g}\}$. The set of (undirected) edges, $E$, consists of all vertex pairs $v, u \in V$ such that the Euclidean distance between them is at most $r$, and the straight-line segment between them is collision-free. 
Formally, we define 
\begin{align*}
V:=  &(\X\cup \{ x^{s},  x^{g}\}) \cap \C^f {\rm , and} \\
E:=& \left\{ \{v, u\} \in V\times V: \norm{v-u}\leq r, \CH(\{v, u\}) \subset  \C^f \right\},
\end{align*}
where $\CH$ denotes the convex hull of a point set. 

To measure the quality of a sample set $\X$ and a connection radius $r$ we use the following definition~\indirectcite{TsaoSoloveyETAL2020}.
\theoremstyle{definition}
\begin{definition}[Single-robot ($\eps,\delta$)-completeness] Given a sample set $\X$ and connection radius $r$, we say that $(\X, r)$ is ($\eps,\delta$)-complete for some stretch $\eps>0$ and clearance parameter $\delta>0$ if for every $\delta$-clear $\M= (\C^f, x^{s},  x^{g})$ it holds that
$$d({G_{\M(\X, r)}},x^{s},  x^{g}))\leq (1+\eps)\OPT_\delta,$$ where $d({G_{\M(\X, r)}},x^{s},  x^{g})$ denotes the length of the shortest trajectory from $x^{s}$ to $x^{g}$ in the graph $G_{\M(\X, r)}$, and $\OPT_\delta$ is the length of the shortest $\delta$-clear solution to $\M$.
\end{definition}

Notice that the solution induced by ${(G_{\M(\X, r)}},x^{s},  x^{g})$ must be collision free, albeit its clearance can be smaller than $\delta$. This also applies to the generalized definition of $(\eps,\delta)$-completeness in the multi-robot setting (Section~\ref{sec:multi_main}).

\subsection{Efficient sampling via a staggered grid}
In preparation for our main result in this section we introduce a new sampling scheme termed a \emph{staggered grid}, which we denote by $\X_{\beta,\gamma}$,  and study its implications with respect to ($\eps$, $\delta$)-completeness of \prm. Refer to Figure~\ref{fig:cover_lemma_fig} for an illustration. An important property of this staggered grid, which we prove in \ifextendedv
Lemma~\ref{lem:beta_cover} below, 
\else
the extended version~\indirectcite{DayanEA2020}, 
\fi
is that by placing $\beta$-radius hyperspheres  centered at the points $\X_{\beta,\gamma}$ we obtain a coverage of the robot's configuration space. Moreover, the size of the set $\X_{\beta,\gamma}$ is smaller than previously obtained sets with similar coverage properties~\indirectcite{TsaoSoloveyETAL2020}.
Notice that we use a slightly shrunk instance of the unit hyper-cube; a solution path should not pass too close to the boundary of the hyper-cube to respect the clearance condition---this is the role of the parameter $\gamma$.

\begin{definition}[Staggered grid]\label{def:staggered_grid}
For given $\beta>0,\gamma>0$ the staggered grid $\X_{\beta,\gamma}$ is the union of two point sets $\X^1_{\beta,\gamma}, \X^2_{\beta,\gamma}$ in $[0,1]^d$, where
\ifextendedv
\begin{align*}
\X^1_{\beta,\gamma}  = \bigg\{&(p_1, p_2,...,p_d) :
p_i=\gamma+(2k-1)w,\\&\; 1\leq k\leq \left\lceil\frac{1-2\gamma}{2w}\right\rceil, \; 1 \leq i \leq d\bigg\},\\
\X^2_{\beta,\gamma} = \bigg\{&(p_1, p_2,...,p_d) : p_i=\gamma+2kw,\\&\; 0\leq k\leq \left\lceil \frac{1-2\gamma}{2w}\right\rceil, \; 1 \leq i \leq d\bigg\},
\end{align*}
and $w=\frac{\beta\sqrt{2}}{\sqrt{d}}$.
\else
\begin{align*}
\X^l_{\beta,\gamma}  = \bigg\{&(p_1, p_2,...,p_d) :
p_i=\gamma+(2k+l-2)w,\\&\; 2-l\leq k\leq \left\lceil\frac{1-2\gamma}{2w}\right\rceil, \; 1 \leq i \leq d\bigg\},
\end{align*}
for $l\in \{1,2\}$ and $w={\beta\sqrt{2}}/{\sqrt{d}}$.
\fi
\end{definition}

Notice that both
$\X^1_{\beta,\gamma}$  and $\X^2_{\beta,\gamma}$ are square grids of side length $2w$, and the total number of points in the construction is 
$\left(\left\lceil\frac{(1-2\gamma)\sqrt{d}}{\sqrt{8}\beta }\right\rceil\right)^d+\left(\left\lceil\frac{(1-2\gamma)\sqrt{d}}{\sqrt{8}\beta }\right\rceil+1\right)^d$.
\ifextendedv
We illustrate the staggered grid in Fig~\ref{fig:cover_lemma_fig}.  

We mention that for two-dimensional problems ($d=2$) the staggered grid can be viewed as a standard square grid that is rotated by $\pi/2$ (after which vertices outside $[0,1]^2$ are removed). However, in higher dimensions $d\geq 3$ this is not case, i.e., the staggered grid cannot be constructed by simply rotating a square grid. To see this, consider for example the setting of $d=3$. The shortest non-zero vector in the standard square grid is realized between two grid nodes along the same axis (e.g., unit length in the integer grid). 
In 3D, there are six shortest vectors from the origin in the standard grid (in general, $2d$ shortest vectors in $\dR^d$).
In the staggered grid in 3D (assume the first grid has the origin as a node), there are nearest nodes in each of the points $(\pm 0.5,\pm 0.5, \pm 0.5)$, namely eight shortest vectors.

    \ifextendstaggered
    \dror{New text here.}
    A detailed size analysis for the staggered grid can be found in Section~\ref{sec:staggered_grid_analysis}.
    \fi
\fi

\ifextendedv
\newcommand{\coverLemmaFigWidth}{0.49\linewidth}
\else
\newcommand{\coverLemmaFigWidth}{0.35\linewidth}
\fi

\begin{figure}[b]
\vspace{5pt}
  \centering
  \begin{subfigure}[b]{\coverLemmaFigWidth}
  \centering
    \includegraphics[trim=100 53 70 25, clip, width=1\textwidth]{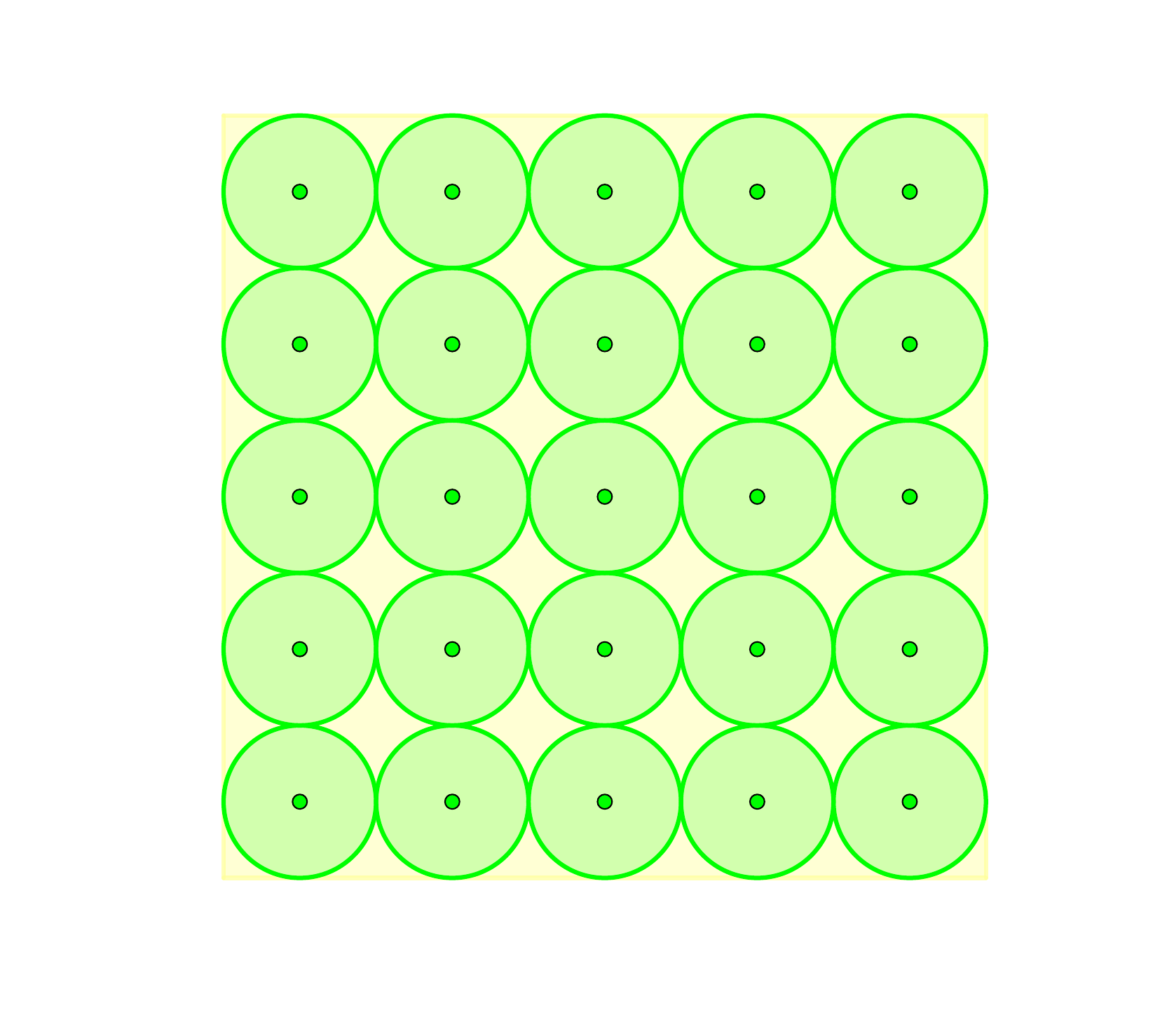}
  \end{subfigure}
  \begin{subfigure}[b]{\coverLemmaFigWidth}
  \centering
    \includegraphics[trim=100 53 70 25, clip, width=1\textwidth]{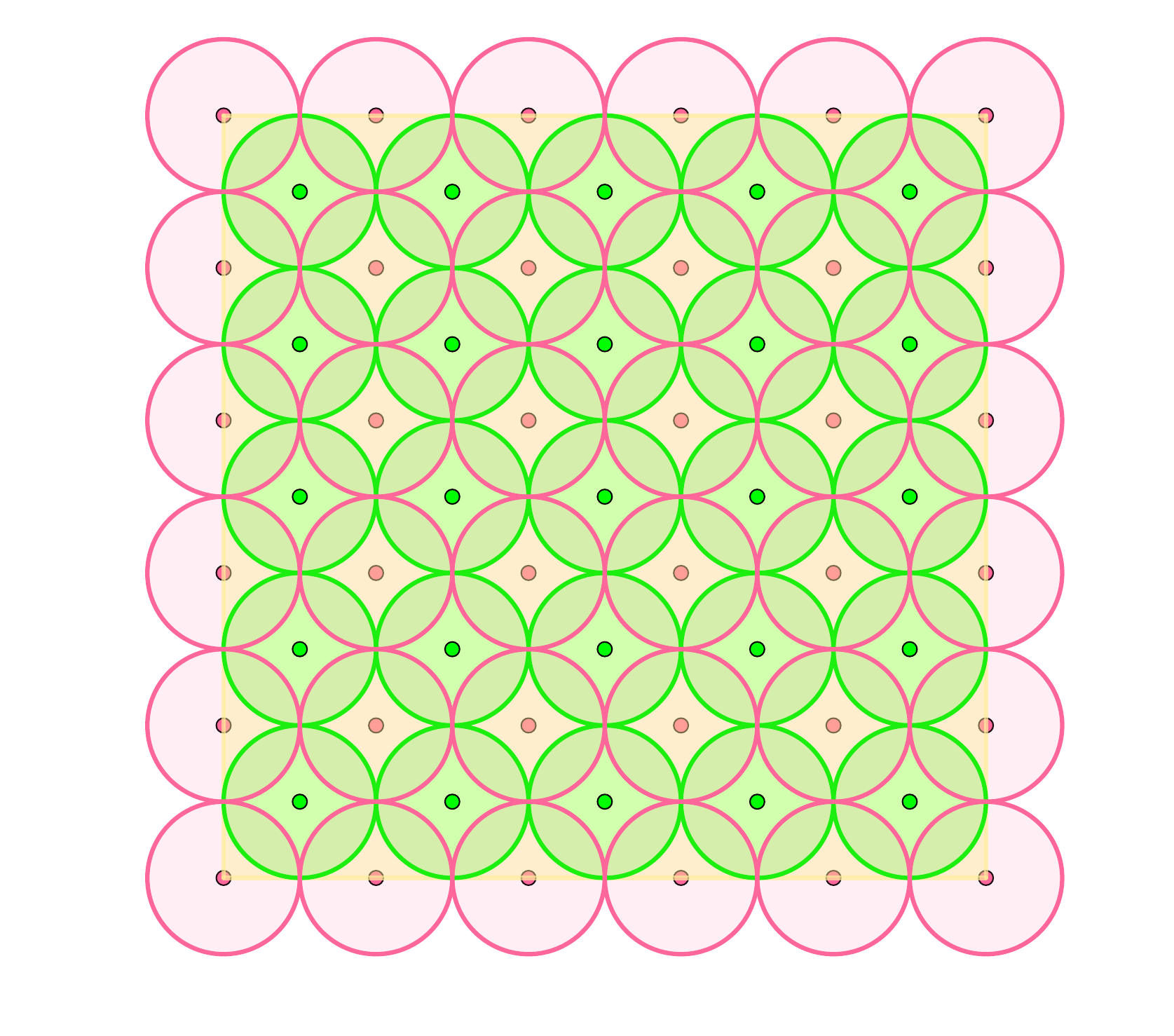}
  \end{subfigure}

  \caption{\sf{Illustration of Definition~\ref{def:staggered_grid} in two dimensions, for $\beta=0.08$, $\gamma=0.1$. On the left  we visualize the first layer $\X^1_{\beta,\gamma}$ of the staggered grid (green discs). On the right we add the second layer $\X^2_{\beta,\gamma}$ (red discs). The centers of the discs are the points of $\X_{\beta,\delta}$.}}
  \label{fig:cover_lemma_fig}
\end{figure}

Next we make the connection between the staggered grid and a \prm graph that is ($\eps$, $\delta$)-complete. Namely, we will show that given $\eps$ and $\delta$, there is a sample set and radius ($\X_{\beta,\gamma}$,$r$) that are  ($\eps$, $\delta$)-complete, where each of $\beta$ and $r$ depends on both $\eps$ and $\delta$, and $\gamma$ is equal to $\delta$. 

\begin{theorem}[Sufficient conditions for ($\eps,\delta$)-completeness]\label{thm:main_single}
Fix a stretch parameter $\eps>0$ and clearance $\delta > 0$.  For a sampling distribution $\X=\X_{\alpha \delta,\delta}$, where  $\alpha=\frac{\eps}{\sqrt{1+\eps^2}}$, and the radius $r=\frac{2(\eps+1)}{\sqrt{1+\eps^2}}\delta$,
it follows  that $(\X,r)$ is ($\eps,\delta$)-complete.
\end{theorem}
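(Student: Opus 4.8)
The plan is to show $(\X,r)$ is $(\eps,\delta)$-complete by taking an arbitrary $\delta$-clear instance $\M=(\C^f,x^{s},x^{g})$ with an optimal $\delta$-clear trajectory $\sigma^\star$ of length $\OPT_\delta$, and constructing inside $G_{\M(\X,r)}$ a path that tracks $\sigma^\star$ and whose length is at most $(1+\eps)\OPT_\delta$. Writing $\beta=\alpha\delta$, the engine of the argument is the single spacing value $L:=\frac{2\delta}{\sqrt{1+\eps^2}}$, which the chosen parameters make simultaneously critical for all three requirements: one checks the three identities $L=2\sqrt{\delta^2-\beta^2}$ (the collision threshold, below), $L+2\beta=r$ (the connection radius), and $2\beta/L=\eps$ (the stretch). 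I would record these first, since they are exactly what pins down the otherwise mysterious constants $\alpha$ and $r$.

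I would then build the tracking path. Parametrize $\sigma^\star$ by arc length and place breakpoints $x^{s}=y_0,y_1,\dots,y_k=x^{g}$ at equal arc-length spacing $\OPT_\delta/k$ with $k=\lceil \OPT_\delta/L\rceil$, so every chord satisfies $\norm{y_i-y_{i+1}}\le \OPT_\delta/k\le L$. Because $\sigma^\star$ has $\delta$-clearance it stays in the $\gamma$-shrunk cube (with $\gamma=\delta$) covered by Lemma~\ref{lem:beta_cover}, which supplies for each interior $y_i$ a grid point $v_i\in\X_{\beta,\delta}$ with $\norm{v_i-y_i}\le\beta$; I set $v_0=x^{s}$ and $v_k=x^{g}$ exactly. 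Each $v_i$ lies in $B_\beta(y_i)\subseteq B_\delta(y_i)\subseteq\C^f$ (as $\beta<\delta$), so $v_i\in V$, and $\norm{v_i-v_{i+1}}\le\norm{y_i-y_{i+1}}+2\beta\le L+2\beta=r$, meeting the distance requirement for $\{v_i,v_{i+1}\}\in E$.

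The heart of the proof, and the step I expect to be the main obstacle, is verifying that each edge segment $[v_i,v_{i+1}]$ is collision-free, since this is exactly where the value of $\beta$ (hence $\alpha$) is forced. As $y_i,y_{i+1}\in\sigma^\star$, clearance gives $B_\delta(y_i)\cup B_\delta(y_{i+1})\subseteq\C^f$, so it suffices to show $[v_i,v_{i+1}]\subseteq B_\delta(y_i)\cup B_\delta(y_{i+1})$. I would align the segment $y_iy_{i+1}$ with a coordinate axis and track perpendicular components: the point where $[v_i,v_{i+1}]$ meets the perpendicular bisector of $y_iy_{i+1}$ has perpendicular offset at most $\beta$ (a convex combination of the offsets of $v_i$ and $v_{i+1}$), while by the identity $\delta^2-(L/2)^2=\beta^2$ the bisector ``waist'' of the two $\delta$-balls has radius exactly $\beta$, so that crossing point lies in the union. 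Finally, since $t\mapsto\norm{z(t)-y_i}^2$ (with $z(t)=(1-t)v_i+tv_{i+1}$) is a convex quadratic that is $\le\delta^2$ both at $v_i$ and at the crossing, it stays $\le\delta^2$ on the whole first half; symmetrically for the second half. Hence the segment never leaves $B_\delta(y_i)\cup B_\delta(y_{i+1})\subseteq\C^f$.

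It remains to bound the length. Summing, $\sum_i\norm{v_i-v_{i+1}}\le\sum_i\norm{y_i-y_{i+1}}+\sum_i\bigpar{\norm{v_i-y_i}+\norm{v_{i+1}-y_{i+1}}}$; the first sum is at most $\OPT_\delta$, and in the second every internal vertex is counted twice while the exact endpoints contribute $0$, giving an excess of at most $2(k-1)\beta$. Since $k-1=\lceil \OPT_\delta/L\rceil-1<\OPT_\delta/L$ and $2\beta=\eps L$, this excess is strictly below $\eps\OPT_\delta$, so the total length is at most $(1+\eps)\OPT_\delta$, as required. I would emphasize that using the exact endpoints---so that only $k-1$, not $k$, vertices are perturbed---is precisely what closes the off-by-one gap created by the ceiling; the two delicate points of the write-up are therefore this bookkeeping and the tightness of the collision lemma.
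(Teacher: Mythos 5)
Your overall plan coincides with the paper's own proof (Lemma~\ref{lem:beta_cover} plus Lemma~\ref{lem:super_duper}): your $L$ is the paper's step size $\rho$, your three identities are exactly what the paper checks, and your endpoint/length bookkeeping matches property~(vi) of Lemma~\ref{lem:super_duper} (your version is correct, and in fact slightly cleaner). The gap is in the step you yourself call the heart of the proof. Your argument that $[v_i,v_{i+1}]\subseteq B_\delta(y_i)\cup B_\delta(y_{i+1})$ is anchored at ``the point where $[v_i,v_{i+1}]$ meets the perpendicular bisector of $y_iy_{i+1}$,'' but such a point need not exist: a crossing is guaranteed only when $\norm{y_i-y_{i+1}}>2\beta$, since $v_i,v_{i+1}$ are only known to lie within $\beta$ of $y_i,y_{i+1}$. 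Your chords satisfy $\norm{y_i-y_{i+1}}\le L$ while $2\beta=\eps L$, so for every $\eps\ge 1$ \emph{no} chord is long enough to force a crossing; and even for $\eps<1$, your equal arc-length spacing allows chords far shorter than $L$ whenever the optimal path curves, and you prove no lower bound on chord length. Concretely, in coordinates with $y_i=(-c,0,\dots,0)$, $y_{i+1}=(c,0,\dots,0)$ and $c<\beta$, the admissible choices $v_i=(\beta-c,0,\dots,0)$, $v_{i+1}=(\beta+c,0,\dots,0)$ lie strictly on one side of the bisector, so your convex-quadratic argument has no interior anchor and the proof as written does not go through. (A minor related imprecision: for chords shorter than $L$ the waist radius is at least $\beta$, not exactly $\beta$; that one is harmless.)

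The containment claim itself is true for every chord length at most $L$, and the paper proves it without any crossing assumption: for each point $u$ on $[v_i,v_{i+1}]$ it takes the nearest point of the chord segment $\CH(\{y_i,y_{i+1}\})$ (which is within $\beta$ of $u$), and splits into cases---if that nearest point is an endpoint, then $u\in B_\beta(y_i)\cup B_\beta(y_{i+1})$; if it is interior, it is the orthogonal projection, and Pythagoras gives $\norm{u-y}^2\le\beta^2+\left(\norm{y_i-y_{i+1}}/2\right)^2\le\beta^2+(L/2)^2=\delta^2$, where $y$ is the nearer of $y_i,y_{i+1}$. Your argument is repairable in the same spirit with one extra case: if $[v_i,v_{i+1}]$ misses the bisector, then the signed projections of $v_i$ and $v_{i+1}$ onto the chord line, measured from the chord midpoint, share a sign, say $P(v_i),P(v_{i+1})\ge 0$; then
\begin{align*}
\norm{v_i-y_{i+1}}^2 &= \norm{v_i-y_i}^2-4c\,P(v_i) \le \beta^2,
\end{align*}
so both endpoints lie in the single convex set $B_\beta(y_{i+1})\subseteq B_\delta(y_{i+1})$, and the whole segment does too. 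With that case added (or with the paper's projection argument substituted), the remainder of your proof---vertex membership via the cover lemma, the radius bound $L+2\beta=r$, and the excess bound $2(k-1)\beta<\eps\,\OPT_\delta$---is correct.
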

\ifextendedv
\subsection{Proof of Theorem~\ref{thm:main_single}}\label{sec:proof_single}
In this section we provide a proof of Theorem~\ref{thm:main_single}. First, we introduce the concept of $\beta$-cover \indirectcite{herbrich2001learning}. Informally, a $\beta$-cover of a domain is a set $\X$ of points such that no point of the domain is too far from some point in $\X$ (defined formally below). Next we show that the staggered grid $\X_{\beta,\delta}$ is a $\beta$-cover of $[\delta,1-\delta]^d$. Then we exploit this property in the proof for Theorem~\ref{thm:main_single}. 

\begin{definition}
For a given $\beta>0$, a set $\X\subset \dR^d$ is a $\beta$-cover\footnote{The common naming for this term is \xcover{$\eps$}. We use the letter $\beta$ to avoid confusion with the stretch factor which is called $\eps$ in this paper. We also note that the term \xcover{$\beta$} is a more accurate substitution for the term ``$\beta$-net'', which we used in our previous work~\indirectcite{TsaoSoloveyETAL2020}.} for a set $A \subset \dR^d$ if for every $a \in A$, there exists $s \in \X$ such that $\|a-s\|\leq \beta$
\end{definition}

First, we prove that $\X_{\beta,\gamma}$ forms a $\beta$-cover over $[\gamma,1-\gamma]^d$.

\begin{lemma}\label{lem:beta_cover}
Let $\beta>0,\gamma>0$ and define $A_\gamma=[\gamma,1-\gamma]^d$.  Then the staggered grid $\X_{\beta,\gamma}$ is a $\beta$-cover for $A_\gamma$.
\end{lemma}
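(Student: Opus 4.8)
The plan is to reduce the claim to a clean statement about the integer lattice via an affine change of variables, and then settle that statement with a per-coordinate computation followed by an averaging (pigeonhole) argument. Concretely, given $a=(a_1,\dots,a_d)\in A_\gamma$, I would set $y_i=(a_i-\gamma)/w$, so that $y_i\in[0,L]$ with $L=(1-2\gamma)/w$, and observe that a point $s\in\X_{\beta,\gamma}$ corresponds exactly to an integer vector $(n_1,\dots,n_d)$ whose entries are either all even (this is $\X^2_{\beta,\gamma}$) or all odd (this is $\X^1_{\beta,\gamma}$), via $s_i=\gamma+n_i w$. Since $\norm{a-s}^2=w^2\sum_i(y_i-n_i)^2$ and $w^2=2\beta^2/d$, it suffices to produce integers $n_1,\dots,n_d$ of a common parity with $\sum_i(y_i-n_i)^2\le d/2$.

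For each coordinate I would let $e_i$ be the nearest even integer to $y_i$ and $o_i$ the nearest odd integer to $y_i$, and write $\alpha_i=|y_i-e_i|$ and $\rho_i=|y_i-o_i|$. The key elementary fact is that the nearest even and nearest odd integers to any real number are consecutive integers, since the two integers flanking $y_i$ have opposite parity; hence $\alpha_i+\rho_i=1$ and consequently $\alpha_i^2+\rho_i^2=1-2\alpha_i\rho_i\le 1$. Summing over $i$ gives $\sum_i\alpha_i^2+\sum_i\rho_i^2\le d$, so at least one of the two sums is at most $d/2$. If $\sum_i\alpha_i^2\le d/2$ I take the all-even point $(e_1,\dots,e_d)$, which lies in $\X^2_{\beta,\gamma}$; otherwise I take the all-odd point $(o_1,\dots,o_d)$, which lies in $\X^1_{\beta,\gamma}$. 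Either way $\sum_i(y_i-n_i)^2\le d/2$, which translates back to $\norm{a-s}\le w\sqrt{d/2}=\beta$, as required.

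The main subtlety, and the step I expect to require the most care, is the interaction with the finite boundary of the grid: $\X^1_{\beta,\gamma}$ and $\X^2_{\beta,\gamma}$ only contain multipliers in the ranges $\{1,3,\dots,2K-1\}$ and $\{0,2,\dots,2K\}$ respectively, where $K=\lceil(1-2\gamma)/(2w)\rceil$, so I must verify that for every $y_i\in[0,L]$ the nearest even integer lies in $\{0,\dots,2K\}$ and the nearest odd integer in $\{1,\dots,2K-1\}$. This holds because the nearest integer of either parity is within distance $1$ of $y_i$, while $L\le 2K$ by the definition of the ceiling; near the two endpoints one breaks distance ties toward the in-range candidate, after which $\alpha_i+\rho_i=1$ still holds (or degrades harmlessly to $\alpha_i+\rho_i\le1$, which already yields $\alpha_i^2+\rho_i^2\le1$). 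A secondary point worth flagging is that the naive strategy of independently rounding each $y_i$ to its globally nearest integer fails, because the resulting parities may be mixed and such a point need not belong to $\X_{\beta,\gamma}$; it is precisely the averaging argument that lets me commit to a single parity across all $d$ coordinates at only a factor-$2$ loss in squared distance, which is exactly what the calibration $w=\beta\sqrt{2}/\sqrt{d}$ is designed to absorb.
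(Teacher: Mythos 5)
Your proof is correct---including the finite-boundary analysis, which is indeed the one place requiring genuine care---and at its mathematical core it coincides with the paper's argument, though the packaging is noticeably different. The paper partitions $A_\gamma$ by the hyperplanes $x_i=\gamma+2kw$ into cells of side $2w$ whose centers lie in $\X^1_{\beta,\gamma}$ and whose vertices lie in $\X^2_{\beta,\gamma}$, and proves a cell-local dichotomy: if a point $p$ (placed, by symmetry, in the sub-cube $[0,w]^d$ relative to the cell center at the origin) escapes the center's $\beta$-ball, so that $\sum_i p_i^2>\beta^2$, then $\sum_i(w-p_i)^2\le \sum_i w^2-\sum_i p_i^2 < dw^2-\beta^2=\beta^2$, i.e., $p$ lies in the ball of the nearest cell vertex. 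Your per-coordinate inequality $\alpha_i^2+\rho_i^2\le(\alpha_i+\rho_i)^2=1$ is, after rescaling by $w$, exactly the paper's step $p_i^2+(w-p_i)^2\le w^2$ (which the paper derives from $p_i\le w$); your two candidate points, the nearest all-even and nearest all-odd integer vectors, are precisely the paper's cell vertex and cell center; and your pigeonhole over the two parity classes is the contrapositive form of the paper's ``not center, hence corner'' dichotomy. What your formulation buys: the calibration $w=\beta\sqrt{2}/\sqrt{d}$ becomes transparent (it is exactly the factor-$2$ loss absorbed by the averaging), the symmetry/WLOG reduction disappears in favor of a uniform computation, and the truncated cells at the boundary are handled explicitly via the range and tie-breaking checks (which are sound: a tie occurs only at integer $y_i$, where one distance is $0$ and the other is $1$, so $\alpha_i+\rho_i=1$ survives). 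What the paper's formulation buys: a geometrically self-contained local statement---each $2w$-cell is covered by the $2^d+1$ balls centered at its vertices and center---which handles the boundary implicitly (every cell of the partition, truncated or not, has its center and vertices in the grid) and is what the paper's figures illustrate.
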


\begin{proof}
Recall that the width $w$ of a cell in each of the two grids that form the staggered grid $\X_{\beta,\gamma}$ is  $\beta\sqrt{2}/\sqrt{d}$. We divide the $d$-dimensional $(1-2\gamma)$-hypercube, $A_\gamma$, by hyperplanes as follows.
For each coordinate $x_i$ we define the hyperplanes $x_i = \gamma + 2kw$, for $k\in \{0,1,\ldots,\left\lceil\frac{1-2\gamma}{2w}\right\rceil\}$. Jointly, those hyperplanes induce a partition of $A_\gamma$ into a set of hypercubes $H$: each hypercube $h\in H$ is of edge length $2w$, its center is a point from $\X^1_{\beta,\gamma}$, and its vertices are points from $\X^2_{\beta,\gamma}$.
See Figure~\ref{fig:beta_cover1} for an illustration in $\dR^2$.
\begin{figure}[H]
  \centering
  \begin{subfigure}[b]{0.49\linewidth}
  \centering
    \includegraphics[trim=180 80 360 30, clip, width=1\textwidth]{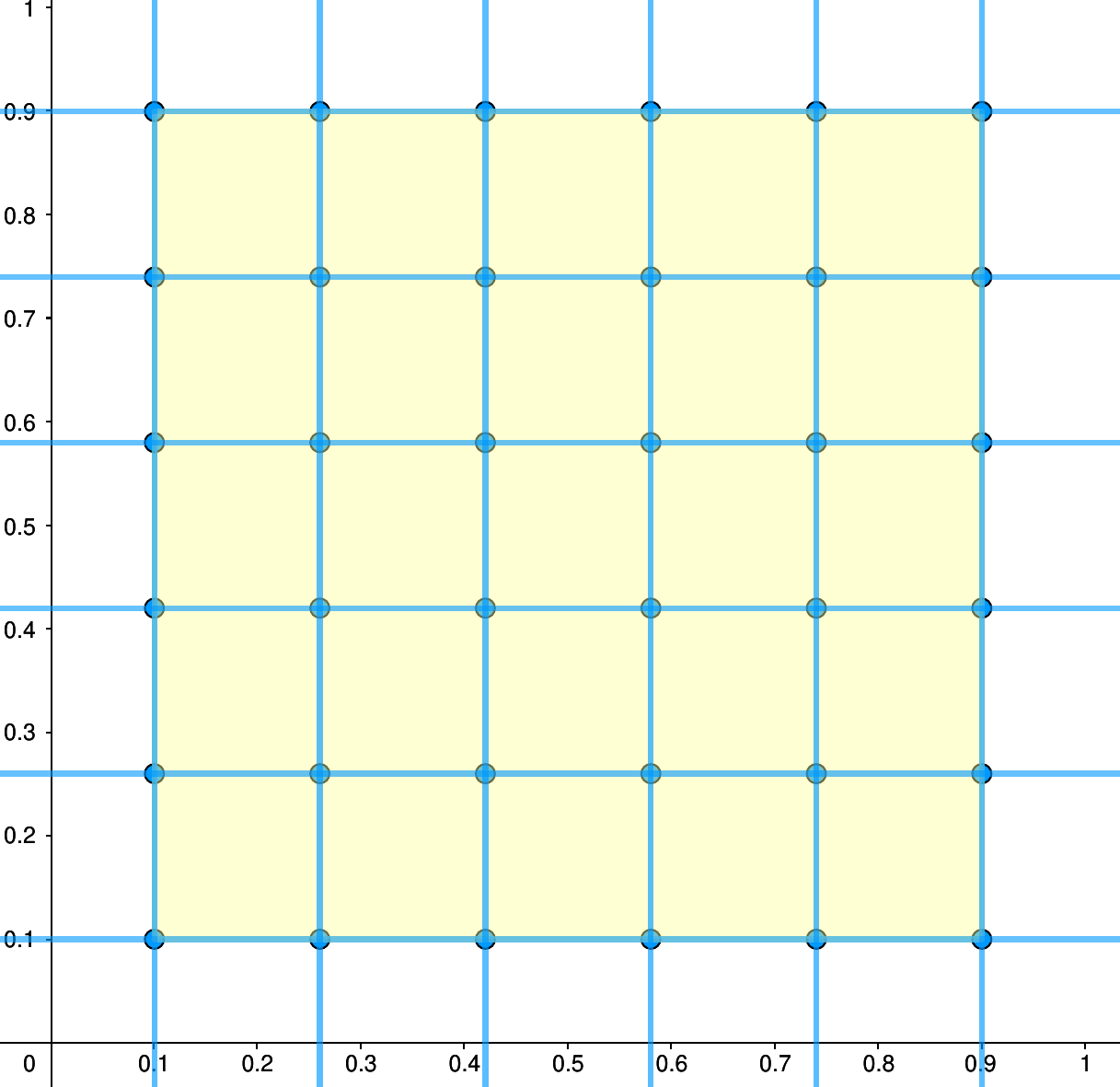}
  \end{subfigure}
  \begin{subfigure}[b]{0.49\linewidth}
  \centering
    \includegraphics[trim=180 80 360 30, clip, width=1\textwidth]{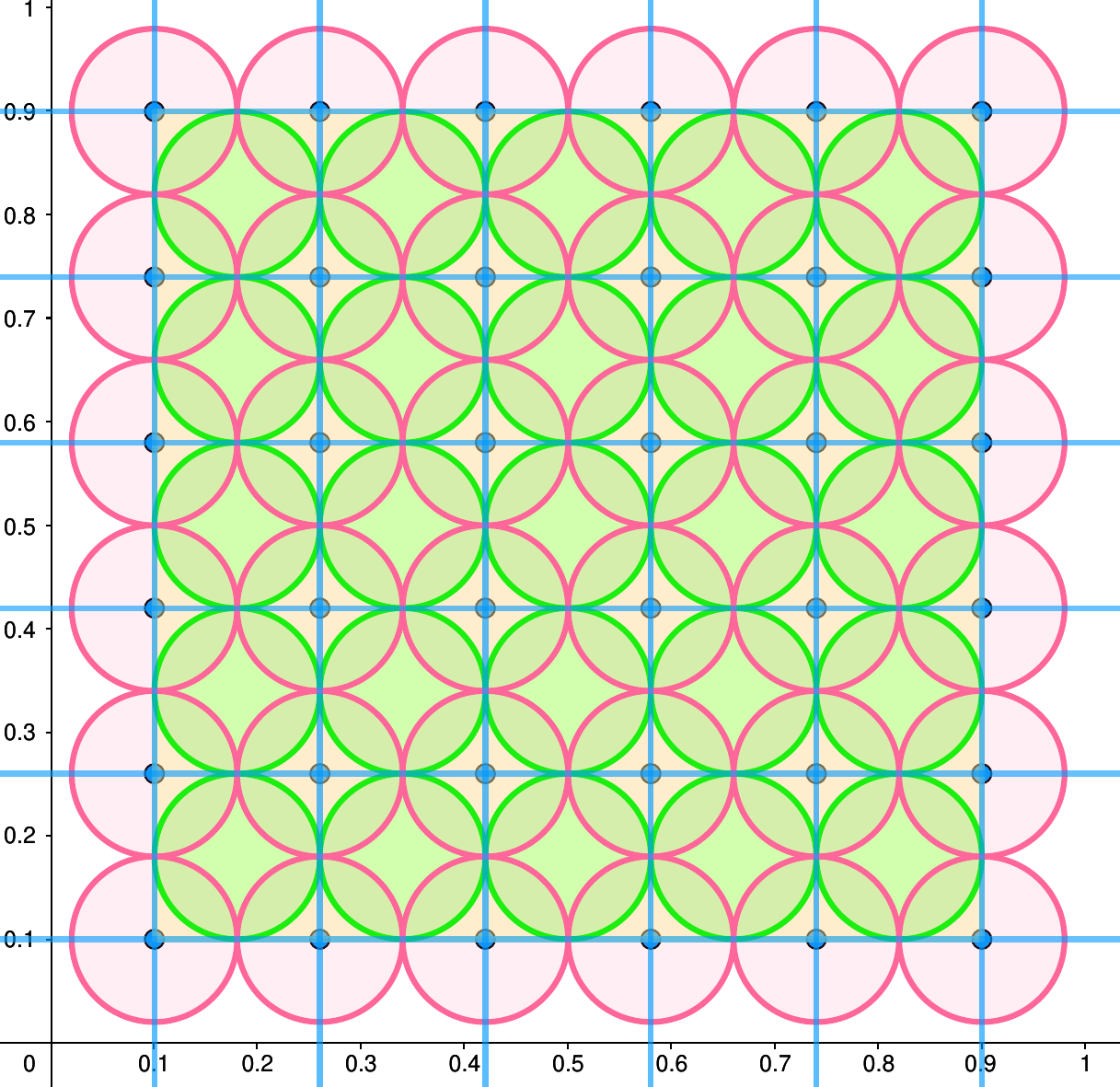}
  \end{subfigure}

  \caption{\sf{Illustration of the hyperplanes partition for Lemma~\ref{lem:beta_cover} in two dimensions, for $\gamma=0.1$, $\beta=0.08$, and $w=0.08$. In both sub-figures the yellow cube represents $[\gamma,1-\gamma]^d$, and the blue lines are the hyperplanes orthogonal to $x_i$. On the left we demonstrate how the cube is split into smaller $2w$-cubes. On the right we demonstrate that these $2w$-cubes are covered with $\beta$-balls. The green balls are $\beta$-balls centered at points of  $\X^1_{\beta,\gamma}$, while the red balls are $\beta$-balls centered at points of $\X^2_{\beta,\gamma}$.}}
  \label{fig:beta_cover1}
\end{figure}

We now prove that each $h\in H$ induced by our hyperplanes is fully covered using the $\beta$-balls centered at its vertices and at its center.
Formally, for a given $h\in H$, let $S_h$ be the set of points comprising 
the vertices of $h$ and the center point of $h$ (i.e., $\abs{S_h}=2^d+1$). We show that $S_h$ is a \xcover{$\beta$} for $h$. 

Let $s\in S_h$ be the center point of $h$. Without loss of generality, assume that $s$ is at the origin. Let $B_{\beta}(s)$ be the $\beta$-ball centered at $s$. Let $p=(p_1,..,p_d)$ be a $d$-dimensional point such that $p\in h$ and $p \notin B_{\beta}(s)$. We further assume that $\forall i\in[1..d], 0\leq p_i \leq w$; all the other cases can be handled symmetrically. See Figure~\ref{fig:beta_cover2} for an illustration in $\dR^2$. Thus, since $p \notin B_{\beta}(s)$ 
we have that 
$\norm{s - p}>\beta$, which implies that  $\beta^2 < \sum_{i=1}^d p_i^2$.

Next, recall that $s_w:=(w,... ,w)\in S$ since the hypercubes edge length is $2w$. We upper bound the distance of $p$ from $s_w$ in the following manner: 
\begin{align*}
\norm{s_w - p}^2 &=\sum_{i=1}^d (w-p_i)^2=\sum_{i=1}^d  (w^2-2wp_i +p_i^2) \\ & \leq \sum_{i=1}^d  (w^2-2p_i^2 +p_i^2)\\
&= \sum_{i=1}^d w^2 -\sum_{i=1}^d p_i^2<\sum_{i=1}^d w^2 - \beta^2,
\end{align*}
where the first inequality follows from the fact that $p_i \leq w$, and the second inequality follows from $\beta^2 < \sum_{i=1}^d p_i^2$.

Finally, by subsisting $w=\beta\sqrt{2}/\sqrt{d}$, we obtain
\[\norm{s_w - p}^2 < \sum_{i=1}^d\frac{2\beta^2}{d}-\beta^2 = \beta^2,\] 
which implies that $p$ is covered by the $\beta$-ball centered at $s_w$.
\begin{figure}[ht]
  \centering
   \includegraphics[width=0.6\linewidth]{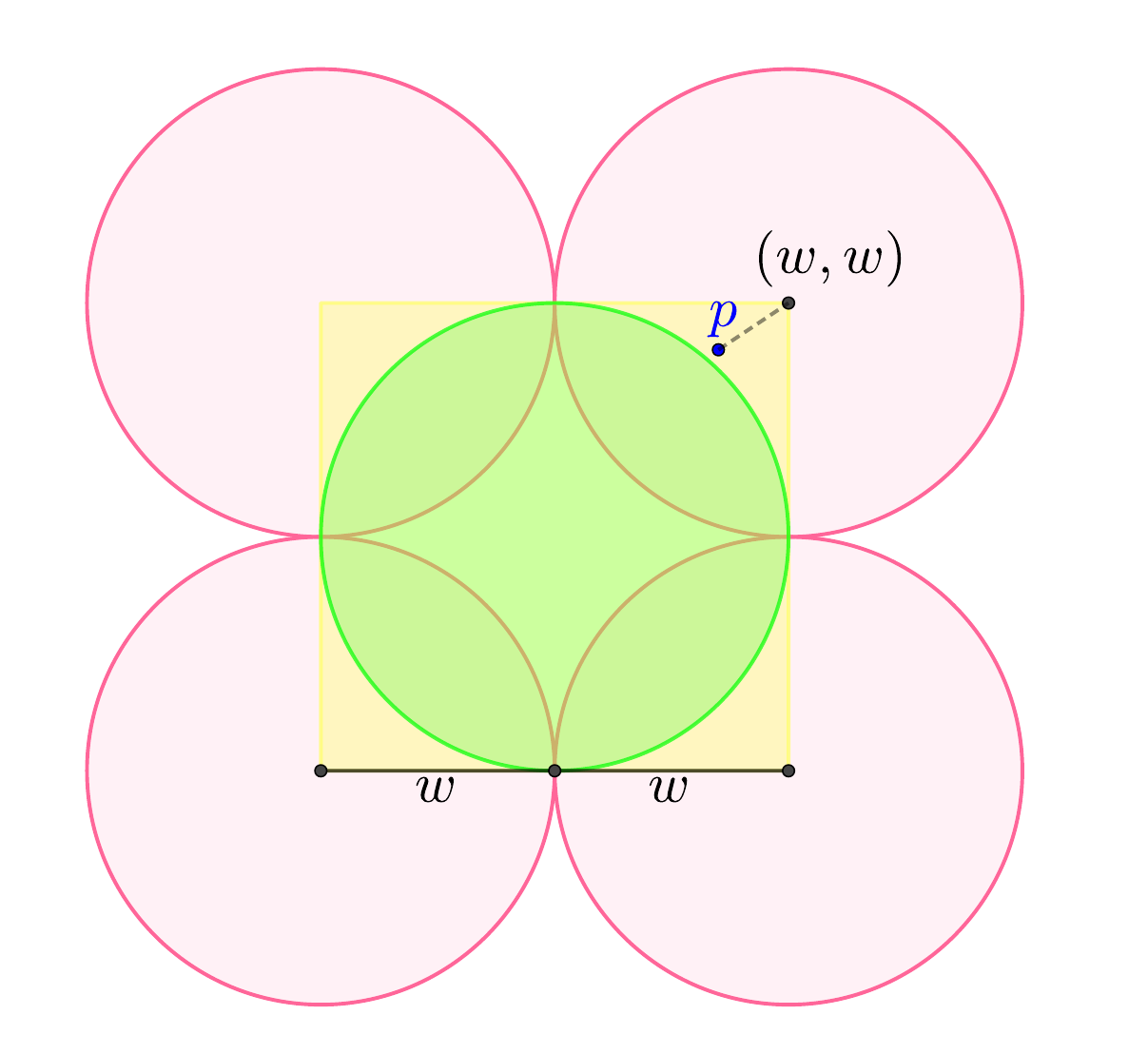}

  \caption{\sf{Illustration of the covering of a $2w$-hypercube of Lemma~\ref{lem:beta_cover} in two dimensions. The point $p$ is the point tested in the lemma. The red balls are $\beta$-balls centered on the cube vertices. The green ball is the $\beta$-ball centered in the cubes center called $B_{\beta}(s)$. The dashed line represents the distance between $p$ and $(w, w)$. We note that in higher dimensions the red balls would overlap as $w$ becomes smaller in relation to $\beta$.}}
  \label{fig:beta_cover2}
\end{figure}

We have proven that $\X_{\beta,\gamma}$ provides a \xcover{$\beta$} for each $h\in H$, which also guarantees that $\X_{\beta,\gamma}$ is a \xcover{$\beta$} for the larger hypercube $[\gamma,1-\gamma]^d$. Thus we conclude that $\X_{\beta,\gamma}$ is a \xcover{$\beta$} for $A_\gamma$.
\ifijrrv
\qed
\fi
\end{proof}

Next, we exploit the above property for showing that by appropriately setting the values of $\beta, \gamma$ and $r$ with respect to $\eps,\delta$, we obtain a good approximation of any collision-free trajectory, which implies that $(\X_{\beta,\gamma},r)$ is $(\eps,\delta)$-complete. To do so, we prove the following lemma, which is an extended version of~\indirectcite[Theorem~2]{TsaoSoloveyETAL2020}. This lemma, will also be instrumental in extending the theory we develop for the single-robot case, to the multi-robot setting. In order to state the theorem, we introduce  additional  notation regarding a sequence of points along a given trajectory of certain spacing. 

\begin{definition}
\label{def:trajectory-timing}
Let $\sigma$ be a collision-free trajectory, and let $\rho>0$ be a step size such that $\|\sigma(0)-\sigma(1)\|\geq \rho$. Then the time sequence $T_\sigma^\rho=(\tau_0,\tau_1,\ldots,\tau_\ell)$, where $\tau_0<\tau_1\ldots<\tau_\ell$, is defined in the following manner for some $\ell\in \dN_{>0}$: $\tau_0=0,\tau_\ell=1$, for all $1\leq i \leq \ell-1$, $\|\sigma(\tau_i)-\sigma(\tau_{i-1})\|=\rho$, and $\|\sigma(\tau_\ell)-\sigma(\tau_{\ell-1})\|\leq \rho$.
\end{definition}

\begin{lemma}\label{lem:super_duper}
For given $\beta,\gamma,\rho>0$, such that $\beta^2+(\rho/2)^2\leq \gamma^2$, let $\sigma$ be a $\gamma$-clear solution for $\M= (\C^f, x^{s},  x^{g})$, and let $T_\sigma^\rho=(\tau_0,\tau_1,\ldots,\tau_\ell)$  be a time sequence as in Definition~\ref{def:trajectory-timing}. Then there exists a point set $Z_\sigma^\rho=(z_0,\ldots,z_\ell)$, where $z_i\in \X_{\beta,\gamma}\cup \{x^{s},  x^{g}\}$ for all $0\leq i\leq \ell$, such that the following properties holds:
\begin{enumerate}[label=(\roman*)]
    \item $\|z_i-\sigma(\tau_i)\|\leq \beta$, for all $0\leq i\leq \ell$;
    \item $\|z-\sigma(\tau_{i-1})\|\leq \beta +\rho$, for all $1\leq i\leq \ell$, $z \in \CH\left(z_{i-1},z_i\right)$;
    \item $\|z-\sigma(\tau_i)\|\leq \beta +\rho$, for all $1\leq i\leq \ell$, $z \in \CH\left(z_{i-1},z_i\right)$;
    \item $\|z_i-z_{i-1}\|\leq 2\beta +\rho$, for all $1\leq i\leq \ell$;
    \item $\CH\left(z_{i-1},z_i\right)$ is collision free, for all $1\leq i\leq \ell$. That is $\CH\left(z_{i-1},z_i\right) \subset \C^f$, for all $1\leq i\leq \ell$;
    \item $\sum_{1\leq i\leq \ell}\|z_i-z_{i-1}\|\leq \left(1+\frac{2\beta}{\rho}\right)\norm{\sigma}$.
\end{enumerate}
\end{lemma}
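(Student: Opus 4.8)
The plan is to build the sequence $Z_\sigma^\rho$ directly from the waypoints $\sigma(\tau_i)$ using the covering guarantee of Lemma~\ref{lem:beta_cover}, and then verify the six properties, of which only the collision-freeness claim (v) requires genuine work. First I would pin the endpoints by setting $z_0 = x^{s} = \sigma(\tau_0)$ and $z_\ell = x^{g} = \sigma(\tau_\ell)$; these lie in $\{x^{s},x^{g}\}$ and incur \emph{zero} error, a fact I will reuse for (vi). For each interior index $1 \le i \le \ell-1$, the $\gamma$-clearance of $\sigma$ forces $B_\gamma(\sigma(\tau_i)) \subseteq \C^f \subseteq [0,1]^d$, hence $\sigma(\tau_i) \in A_\gamma = [\gamma,1-\gamma]^d$, and Lemma~\ref{lem:beta_cover} supplies $z_i \in \X_{\beta,\gamma}$ with $\|z_i - \sigma(\tau_i)\| \le \beta$. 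This choice is exactly property (i).

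Properties (ii)--(iv) then follow from the triangle inequality together with $\|\sigma(\tau_i)-\sigma(\tau_{i-1})\| \le \rho$. Since the distance to a fixed point is convex along a segment, its maximum over $\CH(z_{i-1},z_i)$ is attained at an endpoint: for (ii) this gives $\|z - \sigma(\tau_{i-1})\| \le \max(\beta,\,\beta+\rho) = \beta+\rho$, symmetrically for (iii), and $\|z_i - z_{i-1}\| \le \beta+\rho+\beta = 2\beta+\rho$ for (iv).

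The crux is (v), collision-freeness of $\CH(z_{i-1},z_i)$. The naive route---bounding a segment point $z$ by $\|z-\sigma(\tau_{i-1})\| \le \beta+\rho/2$ via the triangle inequality---is too weak, since $\beta+\rho/2$ may exceed $\gamma$ even though $\beta^2+(\rho/2)^2 \le \gamma^2$. Instead I would show that the whole segment lies in $B_\gamma(\sigma(\tau_{i-1})) \cup B_\gamma(\sigma(\tau_i))$, both contained in $\C^f$ by clearance. Writing $g(\lambda)=\|z(\lambda)-\sigma(\tau_{i-1})\|$ and $h(\lambda)=\|z(\lambda)-\sigma(\tau_i)\|$ along $z(\lambda)=(1-\lambda)z_{i-1}+\lambda z_i$, both are convex, and the affine function $g^2-h^2$ yields a crossover parameter $\lambda^*$ (or the segment stays entirely on one side, an easier case settled by the endpoint bounds $g(0),h(1)\le\beta\le\gamma$). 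At a crossover, $m=z(\lambda^*)$ lies on the perpendicular bisector of $\sigma(\tau_{i-1}),\sigma(\tau_i)$, so by Pythagoras $\|m-\sigma(\tau_{i-1})\|^2 = \|m-\mathrm{mid}\|^2 + (\|\sigma(\tau_i)-\sigma(\tau_{i-1})\|/2)^2$, where the perpendicular component of $m-\mathrm{mid}$ equals that of the convex combination $(1-\lambda^*)(z_{i-1}-\sigma(\tau_{i-1})) + \lambda^*(z_i-\sigma(\tau_i))$, whose norm is at most $\beta$. Hence the common crossover value is at most $\sqrt{\beta^2+(\rho/2)^2}\le\gamma$. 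Convexity of $g$ on $[0,\lambda^*]$ and of $h$ on $[\lambda^*,1]$, together with $g(0),h(1)\le\beta\le\gamma$, then places every $z(\lambda)$ inside $B_\gamma(\sigma(\tau_{i-1}))$ or $B_\gamma(\sigma(\tau_i))$, proving (v). I expect this Pythagorean crossover step---in particular extracting the right-angle decomposition coordinate-free---to be the main obstacle.

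Finally, for the length bound (vi) I would exploit the exactness of the endpoints. Summing $\|z_i-z_{i-1}\| \le \|\sigma(\tau_i)-\sigma(\tau_{i-1})\| + 2\beta$ over interior edges, while the first and last edges lose only a single $\beta$ each (since $z_0,z_\ell$ coincide with trajectory points), gives $\sum_i \|z_i-z_{i-1}\| \le \sum_i \|\sigma(\tau_i)-\sigma(\tau_{i-1})\| + 2\beta(\ell-1)$. The chord sum is at most $\|\sigma\|$, and the $\ell-1$ full steps of length exactly $\rho$ force $(\ell-1)\rho \le \|\sigma\|$, whence $2\beta(\ell-1) \le (2\beta/\rho)\|\sigma\|$; combining yields the claimed $(1+2\beta/\rho)\|\sigma\|$. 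It is precisely this exactness of the endpoints that removes the spurious additive $2\beta$ term a careless count would leave behind.
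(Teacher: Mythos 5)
Your proposal is correct, and for properties (i)--(iv) and (vi) it coincides with the paper's own proof: interior waypoints are matched to points of $\X_{\beta,\gamma}$ via the covering lemma (Lemma~\ref{lem:beta_cover}), the endpoints are pinned exactly as $z_0=x^{s}$ and $z_\ell=x^{g}$, triangle inequalities give (ii)--(iv), and the exactness of the endpoints together with $(\ell-1)\rho\le\norm{\sigma}$ yields (vi), just as in the paper. The genuine difference is in property (v), which both you and the paper identify as the crux. The paper argues pointwise: for each $u_\eta\in\CH\left(z_{i-1},z_i\right)$ it takes the orthogonal projection $v_\eta$ of $u_\eta$ onto the chord $\CH\left(\sigma(\tau_{i-1}),\sigma(\tau_i)\right)$, shows $\|u_\eta-v_\eta\|\le\beta$, and splits into cases according to whether the projection is an endpoint, or has parameter $\eta^*\in(0,1/2]$ or $\eta^*\in(1/2,1)$, each time concluding via Pythagoras that the squared distance from $u_\eta$ to one of the two waypoints is at most $\beta^2+(\rho/2)^2\le\gamma^2$. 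You instead argue globally along the segment, with $g(\lambda)=\|z(\lambda)-\sigma(\tau_{i-1})\|$ and $h(\lambda)=\|z(\lambda)-\sigma(\tau_i)\|$: since $g^2-h^2$ is affine there is at most one crossover $\lambda^*$; the crossover point lies on the perpendicular bisector of the two waypoints, where Pythagoras plus the $\beta$-bound on the perpendicular displacement gives distance at most $\sqrt{\beta^2+(\rho/2)^2}\le\gamma$ to both; convexity of $g$ on $[0,\lambda^*]$ and of $h$ on $[\lambda^*,1]$ then propagates the bound. Both routes rest on the same Pythagorean estimate, but yours absorbs the paper's per-point case analysis into a single convexity argument, which is somewhat cleaner; the paper's version is more elementary per point and needs no discussion of crossovers. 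One caveat: in your ``no crossover'' case the endpoint bounds alone do not suffice; if, say, $g<h$ throughout, you need $g(1)<h(1)\le\beta$ \emph{together with} convexity of $g$ to conclude $g\le\beta$ on the whole segment. Since convexity is already central to your argument, this is a one-line addition rather than a gap.
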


\begin{proof}
Define, as before,  $A_\gamma=[\gamma,1-\gamma]^d$. Notice that  since $\sigma$ is $\gamma$-clear then $\sigma(t) \in A_\gamma$ for all $0\leq t \leq 1$ (and in particular for all $\tau_i\in T_\sigma^\rho$). 
Next, we address each individual component of the theorem. \vspace{5pt}

\noindent \emph{Property~(i)}: This follows directly from Lemma~\ref{lem:beta_cover} as $\X_{\beta,\gamma}$ is a \xcover{$\beta$} for $A_\gamma$ and $\sigma(\tau_i) \in A_\gamma$.\vspace{5pt}

\noindent \emph{Properties~(ii) and (iii)}: First, we upper bound the expression $\|z_i-\sigma(\tau_{i-1})\|$ (the proof for Property~(iii) is symmetric). By definition of  $T_\sigma^\rho$  we have that  $\|\sigma(\tau_i)-\sigma(\tau_{i-1})\|=\rho$, for $1\leq i \leq \ell-1$. Thus, using Property~(i), we have $\|z_i-\sigma(\tau_i)\|\leq \beta$. Using the triangle inequality it follows that, 
\begin{align*}
\|z_i-\sigma(\tau_{i-1})\| &\leq \|z_i-\sigma(\tau_i)\| + \|\sigma(\tau_i)-\sigma(\tau_{i-1})\| \\ & \leq \beta + \rho.
\end{align*}

Let $z \in \CH\left(z_{i-1},z_i\right)$, that is $z=\lambda z_{i-1}+(1-\lambda)z_i$ for some $\lambda$ such that $0\leq \lambda \leq 1$. Thus,
\begin{align*}
    &\norm{z-\sigma(\tau_{i-1})} \\ &= \norm{\lambda z_{i-1}+(1-\lambda)z_i-\sigma(\tau_{i-1})} \\ &= \norm{\lambda z_{i-1}-\lambda\sigma(\tau_{i-1})+(1-\lambda)z_i-(1-\lambda)\sigma(\tau_{i-1})} \\ & \leq \norm{\lambda z_{i-1}-\lambda\sigma(\tau_{i-1})}+\norm{(1-\lambda)z_i-(1-\lambda)\sigma(\tau_{i-1})} \\ & = \lambda \norm{ z_{i-1}-\sigma(\tau_{i-1})}+(1-\lambda)\norm{z_i-\sigma(\tau_{i-1})},
\end{align*}
where the triangle inequality was used in the second-to-last transition. Due to Property~(i) and the fact that $\|z_i-\sigma(\tau_{i-1})\| \leq \beta + \rho$ it follows that 
\begin{align*}
    \norm{z-\sigma(\tau_{i-1})} &\leq  \lambda\beta+(1-\lambda)(\beta+\rho) \\ & = \beta+(1-\lambda)\rho \\ & \leq \beta+\rho.
\end{align*}

\vspace{5pt}

\noindent \emph{Property~(iv)}: This follows from properties~(i) and (ii), and triangle inequality: 

\begin{align*}
    \|z_i-z_{i-1}\| &\leq  \|z_i-\sigma(\tau_{i-1})\| + \|\sigma(\tau_{i-1})-z_{i-1}\| \\ & \leq 2\beta +\rho.
\end{align*}

\noindent \emph{Property~(v)}: Fix $1\leq i\leq \ell$. For any $\eta\in [0,1]$ define $u_\eta =\eta z_{i-1} + (1-\eta)z_i$. To prove that Property~(v) holds we will show that $u_\eta \in \C^f$ for every $\eta \in [0,1]$. 

Fix $\eta \in [0,1]$. Since $\sigma(\tau_{i-1}), \sigma(\tau_i)$ are points on a $\gamma$-clear path, the $\gamma$-balls $B_\gamma(\sigma(\tau_{i-1})), B_\gamma(\sigma(\tau_i))$ are collision-free. Thus it is sufficient to prove that $u_\eta \in B_\gamma(\sigma(\tau_{i-1})) \cup B_\gamma(\sigma(\tau_i))$. Next let $v_\eta$ be the closest point to $u_\eta$ on the straight line between $\sigma(\tau_{i-1})$ and $\sigma(\tau_i)$. That is,  $v_\eta=\argmin_{v \in \CH (\{\sigma(\tau_{i-1}), \sigma(\tau_i\})} \norm{u_\eta -v}$. 

We also define $v'_\eta:=\eta \sigma(\tau_{i-1}) + (1-\eta)\sigma(\tau_i)$, and prove that  $\|u_\eta-v'_\eta\|\leq \beta$. This implies that $\|u_\eta-v_\eta\|\leq \beta$ as $v_\eta$ is at least as close as $v'_\eta$ to $u_\eta$. Indeed, using Property~(i) and the triangle inequality we have
\begin{align*}
    &\|u_\eta-v'_\eta\|\\
    & \leq \norm{ \eta z_{i-1} + (1-\eta)z_i - (\eta \sigma(\tau_{i-1}) + (1-\eta)\sigma(\tau_i)) } \\ &\leq  \eta \norm{z_{i-1} - \sigma(\tau_{i-1})} + (1-\eta) \norm{z_i - \sigma(\tau_i)} = \beta.
\end{align*}

Building upon this result, we consider several cases with respect to the position of $v_\eta$. If $v_\eta$ is one of the endpoints, that is $v_\eta = \sigma(\tau_{i-1})$ or $v_\eta = \sigma(\tau_i)$, we have that $u_{\eta}$ is in a $\beta$-ball around this point. Since $\beta^2+(\rho/2)^2\leq \gamma^2$, and all are positive, we know that $\beta \leq \gamma$. Thus if $v_\eta$ is one of the endpoints we are done as $u_{\eta} \in B_\beta(\sigma(\tau_{i-1}))$ or $u_{\eta} \in B_\beta(\sigma(\tau_i))$.
    
Assuming $v_\eta$ is not one of the endpoints we can write $v_\eta = \eta^*\sigma(\tau_{i-1}) + (1-\eta^*)\sigma(\tau_i)$ where
\begin{align*}
\eta^* &= \arg\min_{\lambda \in (0,1)} \norm{ u_\eta - \lambda \sigma(\tau_{i-1}) + (1-\lambda) \sigma(\tau_i) }.
\end{align*}
In this case, the point $v_\eta$ is the projection of $u_\eta$ onto the line through $\sigma(\tau_{i-1})$ and $\sigma(\tau_i)$. Therefore,
$$\langle\sigma(\tau_i)-\sigma(\tau_{i-1}), u_\eta-v_\eta\rangle = 0.$$
Since $\{\sigma(\tau_{i-1}), \sigma(\tau_i),v_\eta\}$ are collinear we get that,
\begin{equation}
\begin{aligned}
\label{eq:inner_product_0}
& \langle v_\eta-\sigma(\tau_{i-1}), u_\eta-v_\eta\rangle = \\ & \langle v_\eta-\sigma(\tau_i), u_\eta-v_\eta\rangle = 0.
\end{aligned}
\end{equation}

Those equations are illustrated in Fig~\ref{fig:first_order_optimality}.

\begin{figure}[b]
  \centering
   \includegraphics[width=0.9\linewidth]{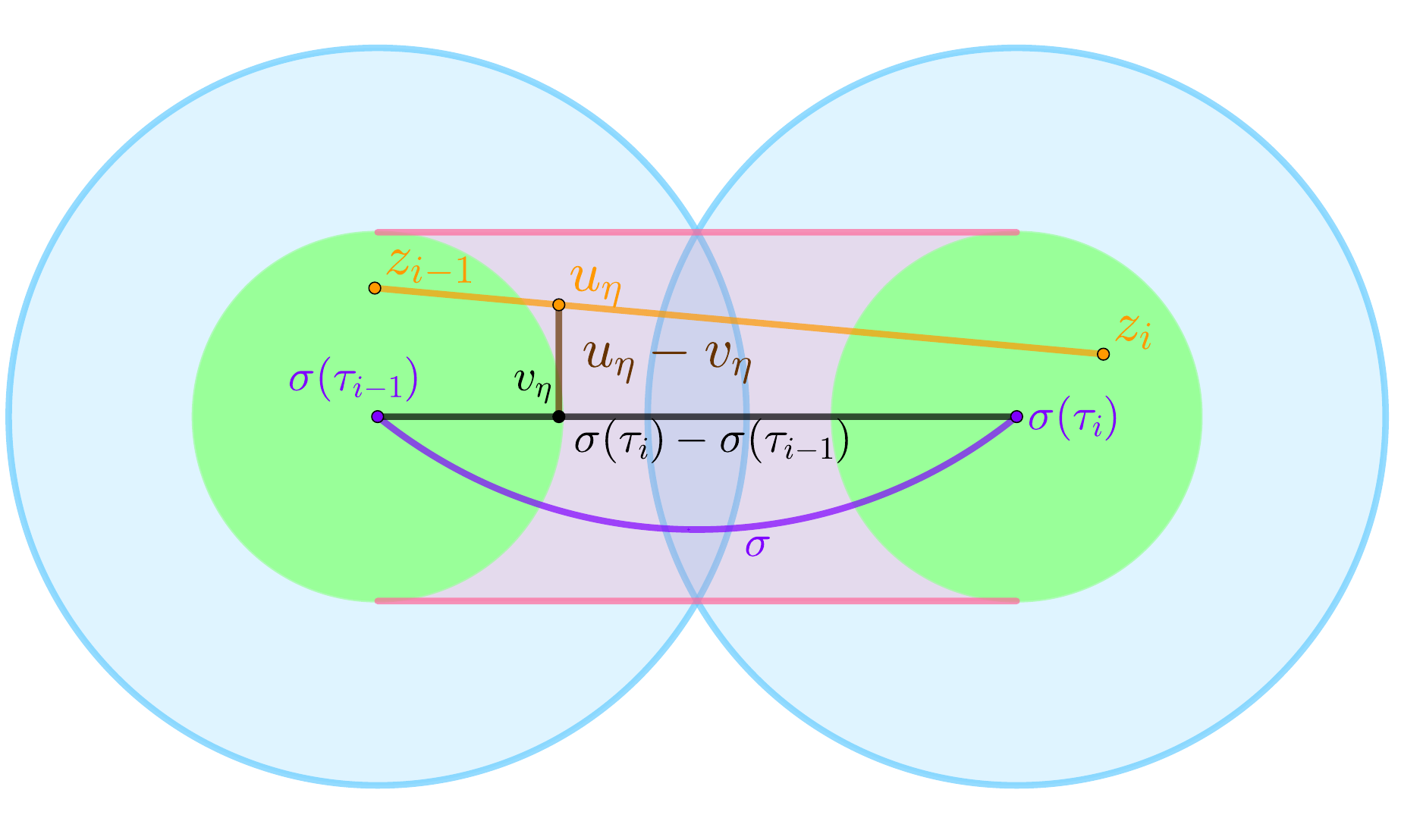}
  \caption{\sf{Illustration for the proof of Lemma~\ref{lem:super_duper}. The purple curve represents the trajectory $\sigma$. The green balls are the $\beta$-balls around $\sigma(\tau_{i-1})$ and $\sigma(\tau_i)$. The convex hull of the green balls (namely, the green balls together with the red region in-between) contains the line segment (in orange) $\CH \bigpar{ \bigbrace{z_{i-1}, z_i} }$, and we show that this region is collision-free.  The blue balls are collision-free $\gamma$-balls around $\sigma(\tau_{i-1})$ and $\sigma(\tau_i)$. The black line is $\CH \bigpar{ \bigbrace{\sigma(\tau_{i-1}), \sigma(\tau_i)} }$. 
   \label{fig:first_order_optimality}}}
\end{figure}

We now consider two separate cases with respect to $\eta^*$. First assume $\eta^* \in (0,1/2]$. Using Equation~\eqref{eq:inner_product_0} and Property~(i), it follows that 
\begin{align*}
    &\norm{u_\eta - \sigma(\tau_i)}^2 \\ & = \norm{(u_\eta - v_\eta) + (v_\eta - \sigma(\tau_i)}^2\\
    &= \norm{u_\eta - v_\eta}^2 + \norm{v_\eta - \sigma(\tau_i)}^2 \\ & \leq \beta^2 + \norm{\eta^* \sigma(\tau_{i-1}) + (1-\eta^*) \sigma(\tau_i) - \sigma(\tau_i)}^2\\
    &= \beta^2 + (\eta^*)^2 \norm{\sigma(\tau_{i-1})-\sigma(\tau_i)}^2 \\ & = \beta^2 + (\eta^*)^2 \rho^2\leq \beta^2 + (1/2)^2 \rho^2,
\end{align*}
where the last inequality follows from $\eta^* \leq 1/2$. Finally by plugging-in $\beta^2+(\rho/2)^2\leq \gamma^2$ we get
$\norm{u_\eta - \sigma(\tau_i)}^2\leq \gamma^2 $, which implies that $u_\eta \in B_\gamma(\sigma(\tau_i))$.

Due to symmetry, we can show that for $\eta^* \in (1/2,1)$, it follows that $\norm{u_\eta - \sigma(\tau_{i-1})}^2 \leq \gamma^2$, which again implies $u_\eta \in B_\gamma(\sigma(\tau_{i-1}))$.
To conclude, we showed that for all $\eta\in [0,1]$ it holds that  $u_\eta \in B_\gamma(\sigma(\tau_{i-1})) \cup  B_\gamma(\sigma(\tau_i))$, which implies that  $\CH\left(z_{i-1},z_i\right) \subset \C^f$.

\vspace{5pt}

\noindent \emph{Property~(vi):} Due to Property~(iv) we have that, $\|z_i-z_{i-1}\|\leq 2\beta +\rho$ for all $1\leq i \leq \ell$. 
By construction, our solution path through the $z_i$'s coincides with the starting and end points of $\sigma$, namely, $z_0 = \sigma(0), z_1 = \sigma(1)$. Hence, for the first and last segment of the motion we have $\|z_i-z_{i-1}\|\leq \beta +\|\sigma(\tau_i)-\sigma(\tau_{i-1})\|$. Using this we get,
    \begin{align*}
    &\sum_{1\leq i\leq \ell}\|z_i-z_{i-1}\|\\
    &= \|z_1-z_{0}\|
    + \|z_\ell-z_{\ell-1}\| +\sum_{2\leq i\leq \ell-1}\|z_i-z_{i-1}\| \\
    &\leq \beta +\rho+ \beta + \|\sigma(\tau_\ell)-\sigma(\tau_{\ell-1})\|+\sum_{2\leq i\leq \ell-1}\left(2\beta +\rho\right) \\
    &=  2\beta +\rho+ (\ell-2) \cdot (2\beta +\rho) + \|\sigma(\tau_\ell)-\sigma(\tau_{\ell-1})\|\\
    &=(\ell-1) \cdot (2\beta +\rho)+\|\sigma(\tau_\ell)-\sigma(\tau_{\ell-1})\|.
    \end{align*}
Additionally, due to equality $\|\sigma(\tau_i)-\sigma(\tau_{i-1})\|=\rho$ for all $1\leq i\leq \ell-1$, we have that 
\[\norm{\sigma} \geq (\ell-1) \cdot \rho+\|\sigma(\tau_\ell)-\sigma(\tau_{\ell-1})\|.\] Thus,
    \begin{align*}
    &\frac{\sum_{1\leq i\leq \ell}\|z_i-z_{i-1}\|}{\norm{\sigma}}\\
    &\leq \frac{(\ell-1) \cdot (2\beta +\rho)+\|\sigma(\tau_\ell)-\sigma(\tau_{\ell-1})\|}{(\ell-1) \cdot \rho+\|\sigma(\tau_\ell)-\sigma(\tau_{\ell-1})\|} \\
    &= \frac{(2\beta +\rho)}{\rho}\cdot\frac{(\ell-1)+\|\sigma(\tau_\ell)-\sigma(\tau_{\ell-1})\|/(2\beta +\rho)}{(\ell-1) +\|\sigma(\tau_\ell)-\sigma(\tau_{\ell-1})\|/\rho} \\
    &\leq \frac{(2\beta +\rho)}{\rho} =  1+\frac{2\beta}{\rho},
    \end{align*}
thus concluding this proof.
\ifijrrv
\qed
\fi
\end{proof}

It remains to prove Theorem~\ref{thm:main_single}. To use Lemma~\ref{lem:super_duper}, we set $\gamma=\delta,  \beta=\alpha \delta,  \rho={\frac{2}{\sqrt{1+\eps^2}}\delta},$ and $r={\frac{2(\eps+1)}{\sqrt{1+\eps^2}}\delta}$, for  $\alpha=\eps/\sqrt{1+\eps^2}$, so that the condition $\beta^2+(\rho/2)^2\leq \gamma^2$ holds. 

By setting $r=2\beta+\rho= {2(\alpha+\frac{1}{\sqrt{1+\eps^2}})\delta =\frac{2(\eps+1)}{\sqrt{1+\eps^2}}\delta}$, it follows from Lemma~\ref{lem:super_duper}~(iv) that for all $1\leq i\leq \ell$ the \prm edge connecting $z_i$ and $z_{i-1}$ is considered in the \prm construction. Additionally, since $\frac{2\beta}{\rho} = \eps$, it holds that $\norm{\widehat{\sigma}} \leq (1+\eps)\norm{\sigma}$ where  $\widehat{\sigma}$ denotes the piece-wise linear trajectory induced by $Z_\sigma^\rho$, which concludes the proof.

\subsection{Comparison with previous work}\label{sec:single_comparison}
In this section we compare our results with previous work, namely~\indirectcite{TsaoSoloveyETAL2020}. We first compare our result with their upper bound in the asymptotic case. In this comparison we observe an exponential improvement of order $1.3687^d$ in terms of sample size in favor to our method. We follow by comparing our result with their lower bound for $\eps=\infty$ (again in the asymptotic case). We observe that our result yields a sample set that is larger than the lower bound by an order of $1.4611^d$, which suggests that the lower-bound is not tight, or that smaller sample sets can be developed. Finally we provide a table which shows these comparisons for concrete values of $\delta$ and $\eps$.

First, we compare the minimal sample size required to guarantee ($\eps,\delta$)-completeness for the same single-robot setting, between the previous approach~\indirectcite{TsaoSoloveyETAL2020}, and the staggered grid as proposed here. 
We denote by $|\X_{\textup{prev}}|$ the size of the minimal sample set needed using the bound from \directcite{TsaoSoloveyETAL2020}, namely \[|\X_{\textup{prev}}| \approx \sqrt{\pi d} \bigpar{ \sqrt{\frac{2d}{\pi e}} \cdot \frac{ 1-(2-\alpha)\delta}{ \alpha \delta}}^d\;,\]
where, as above, $\alpha=\frac{\eps}{\sqrt{1+\eps^2}}$.
We denote by $|\X_{\textup{curr}}|=\abs{\X_{\alpha\delta,\delta}}$ the size of the minimal sample set needed using the bound for the staggered grid, as stated in Theorem~\ref{thm:main_single}, namely, \[{|\X_\textup{curr}}|= \left(\left\lceil\frac{(1-2\delta)\sqrt{d}}{\sqrt{8}\alpha \delta}\right\rceil\right)^d+\left(\left\lceil\frac{(1-2\delta)\sqrt{d}}{\sqrt{8}\alpha \delta}\right\rceil+1\right)^d\;.\]

We compare the quantities $|\X_{\textup{prev}}|$ and $|\X_{\textup{curr}}|$ for small values of $\eps$ and $\delta$ (we mention that the radius derived in~\directcite{TsaoSoloveyETAL2020} is equal to the one we are using here). We consider the ratio $\frac{|\X_{\textup{prev}}|}{|\X_{\textup{curr}}|}$ 
in the asymptotic regime where the values 
$\eps$ and $\delta$ tend to zero in the following equation (we ignore the rounding errors for both results as they do not change the asymptotic results):

\begin{equation}
\begin{aligned}
\label{eq:ratio}
\frac{|\X_{\textup{prev}}|}{|\X_{\textup{curr}}|} &=\frac{\sqrt{\pi d} \bigpar{ \sqrt{\frac{2d}{\pi e}} \cdot \frac{ 1-(2-\alpha)\delta}{ \alpha \delta}}^d}{\left(\frac{(1-2\delta)\sqrt{d}}{\sqrt{8}\alpha \delta}\right)^d+\left(\frac{(1-2\delta)\sqrt{d}}{\sqrt{8}\alpha \delta}+1\right)^d} \\
&= \frac{\sqrt{\pi d} \bigpar{ \sqrt{\frac{2d}{\pi e}} \cdot \frac{ 1-2\delta+\bm{\alpha\delta}}{ \alpha \delta}}^d}{\left(\frac{(1-2\delta)\sqrt{d}}{\sqrt{8}\alpha \delta}\right)^d+\left(\frac{(1-2\delta)\sqrt{d}}{\sqrt{8}\alpha \delta}+\bm{1}\right)^d}\\
&\overset{(a)}{\approx}
\frac{\sqrt{\pi d} \bigpar{ \sqrt{\frac{2d}{\pi e}} \cdot \frac{ 1-2\delta}{ \alpha \delta}}^d}{2\left(\frac{(1-2\delta)\sqrt{d}}{\sqrt{8}\alpha \delta}\right)^d} \\
&=\frac{\sqrt{\pi d}}{2} \bigpar{\sqrt{\frac{16}{\pi e}}}^d \approx \frac{\sqrt{\pi d}}{2}1.3687
^d\;,
\end{aligned}
\end{equation}
where the approximation in $(a)$ is due to ignoring the highlighted $\alpha\delta$ factor in the numerator of the previous expression, and also ignoring the highlighted unit value $1$ in the denominator (the influence of both values tends to $0$ as $\delta$ or $\eps$ tend to $0$). 
This implies that our result yields samples sets that are smaller by an exponential factor in $d$ than the previous work.

Next, we compare between the number of points we need in order to find a ($\infty,\delta$)-complete trajectory and the lower bound  derived in \indirectcite[Theorem 1]{TsaoSoloveyETAL2020} for this quantity, which was  only derived there for the case $\eps=\infty$. We denote the latter by $|\X_{\textup{LB}}|$, which is  proved in \directcite{TsaoSoloveyETAL2020} to be 
\begin{equation}
\label{eq:LB}
|\X_{\textup{LB}}| \approx \sqrt{\frac{e}{2}} \bigpar{1-\frac{2\delta}{1-2\delta}}^2\bigpar{ \sqrt{\frac{d-1}{2\pi e}} \cdot \frac{ 1-2\delta}{\delta}}^d\;.
\end{equation}

We compare the quantities $|\X_{\textup{LB}}|$ and $|\X_{\textup{curr}}|$ for small values of $\delta$ and using $\eps=\infty$. 
We consider the ratio $\frac{|\X_{\textup{curr}}|}{|\X_{\textup{LB}}|}$ 
in the asymptotic regime where the value of $\delta$ tends to zero in the following equation (we ignore the rounding errors for both results as they do not change the asymptotic results):

\begin{equation}
\begin{aligned}
\label{eq:curr_to_lb_ratio}
\frac{|\X_{\textup{curr}}|}{|\X_{\textup{LB}}|} &=\frac{\left(\frac{(1-2\delta)\sqrt{d}}{\sqrt{8}\alpha \delta}\right)^d+\left(\frac{(1-2\delta)\sqrt{d}}{\sqrt{8}\alpha \delta}+1\right)^d}{\sqrt{\frac{e}{2}} \bigpar{1-\frac{2\delta}{1-2\delta}}^2\bigpar{ \sqrt{\frac{d-1}{2\pi e}} \cdot \frac{ 1-2\delta}{\delta}}^d} \\
&=\frac{\left(\frac{(1-2\delta)\sqrt{d}}{\sqrt{8} \delta}\right)^d+\left(\frac{(1-2\delta)\sqrt{d}}{\sqrt{8} \delta}+\bm{1}\right)^d}{\sqrt{\frac{e}{2}} \bm{\bigpar{1-\frac{2\delta}{1-2\delta}}^2}\bigpar{ \sqrt{\frac{d-1}{2\pi e}} \cdot \frac{ 1-2\delta}{\delta}}^d} \\
&\overset{(a)}{\approx}
\frac{2\left(\frac{(1-2\delta)\sqrt{d}}{\sqrt{8}\alpha \delta}\right)^d}{\sqrt{\frac{e}{2}}\bigpar{ \sqrt{\frac{d-1}{2\pi e}} \cdot \frac{ 1-2\delta}{\delta}}^d} \\
&=\sqrt{\frac{8}{e}}\bigpar{\sqrt{\frac{\pi e}{4}}}^d\sqrt{\bigpar{\frac{d}{d-1}}^d}\\
&\approx \sqrt{\frac{8}{e}}\sqrt{\bigpar{\frac{d}{d-1}}^d} \cdot 1.4611^d\;,
\end{aligned}
\end{equation}
where the approximation in $(a)$ is due to ignoring the highlighted unit value $1$ in the numerator of the previous expression, whose influence tends to $0$ as $\delta$ tend to $0$, and also ignoring the highlighted squared value in the denominator, whose value tends to $1$ as $\delta$ tend to $0$. 

Finally, by noticing that for $d \geq 2$ it holds that $$e\leq \bigpar{\frac{d}{d-1}}^d \leq 4,$$
it follows that 
$$2\sqrt{2} \cdot 1.4611^d \leq \frac{|\X_{\textup{curr}}|}{|\X_{\textup{LB}}|} \leq 4\sqrt{\frac{2}{e}} \cdot 1.4611^d.$$

This shows that our upper bound is at most a factor of roughly $1.4611^d$ away from the lower bound.
\ifextendstaggered
\dror{New text here.}
These results will be interesting when compared with the same results on the coverage of the unit cube in Section~\ref{sec:staggered_grid_analysis}.
\fi

To conclude this section, we provide in Table~\ref{tbl:samp_comp_comparison} a comparison of sufficient and necessary samples sizes for $(\eps,\delta)$-completeness for various values of the clearance $\delta$ and the stretch $\eps$ in dimensions $d\in\{2,\ldots,6\}$. In particular, for each combination of parameter values we report the following three quantities:  (I) A lower bound (Eq.~\ref{eq:LB}) on the \emph{necessary} number of samples for $(\eps,\delta)$-completeness, as was derived in~\directcite[Theorem~1]{TsaoSoloveyETAL2020}, which applies only to the case $\eps=\infty$. (II) The size of our sampling distribution, i.e., $|\X_{\textup{curr}}|$, which guarantees  $(\eps,\delta)$-completeness, according to Theorem~\ref{thm:main_single}. (III) An upper bound on the \emph{sufficient} number of samples for $(\eps,\delta)$-completeness, denoted by $|\X_{\textup{prev}}|$ above, as was derived in~\directcite[Theorem~2]{TsaoSoloveyETAL2020}. 

\begin{table*}[t]
\footnotesize
\centerline{

\begin{tabular}{|c|c|c||c|c||c|c||c|c||c|c|}
\cline{3-11}
                      \multicolumn{2}{c|}{}   & \multicolumn{3}{c||}{$\eps=\infty$}       & \multicolumn{2}{c||}{$\eps=1$} & \multicolumn{2}{c||}{$\eps=0.25$} & \multicolumn{2}{c|}{$\eps=0.1$} \\ \hline
                    $\delta$  &  $d$ & lower bound ($|\X_{\textup{LB}}|$)      & $|\X_{\textup{curr}}|$      & $|\X_{\textup{prev}}|$     & $|\X_{\textup{curr}}|$        & $|\X_{\textup{prev}}|$      & $|\X_{\textup{curr}}|$         & $|\X_{\textup{prev}}|$        & $|\X_{\textup{curr}}|$         & $|\X_{\textup{prev}}|$       \\ \hline \hline
\multirow{5}{*}{$0.25$} & $2$ & $0       $ & $5       $ & $12      $ & $13      $   & $19      $  & $61      $ & $109     $ & $265     $ & $567     $ \\ \cline{2-11} 
                      & $3$ & $0       $ & $35      $ & $52      $ & $35      $   & $108     $  & $559     $ & $1510    $ & $4941    $ & $1.79\cdot 10^{4}$ \\ \cline{2-11} 
                      & $4$ & $0       $ & $97      $ & $263     $ & $97      $   & $697     $  & $3697    $ & $2.37\cdot 10^{4}$ & $1.16\cdot 10^{5}$ & $6.43\cdot 10^{5}$ \\ \cline{2-11} 
                      & $5$ & $0       $ & $275     $ & $1478    $ & $1267    $   & $5000    $  & $4.96\cdot 10^{4}$ & $4.11\cdot 10^{5}$ & $2.47\cdot 10^{6}$ & $2.54\cdot 10^{7}$ \\ \cline{2-11} 
                      & $6$ & $0       $ & $793     $ & $9029    $ & $4825    $   & $3.90\cdot 10^{4}$  & $7.94\cdot 10^{5}$ & $7.74\cdot 10^{6}$ & $8.11\cdot 10^{7}$ & $1.09\cdot 10^{9}$ \\ \hline \hline
\multirow{5}{*}{$0.1$}  & $2$ & $3       $ & $61      $ & $104     $ & $85      $   & $194     $  & $613     $ & $1471    $ & $3445    $ & $8437    $ \\ \cline{2-11} 
                      & $3$ & $15      $ & $341     $ & $1393    $ & $855     $   & $3566    $  & $1.99\cdot 10^{4}$ & $7.50\cdot 10^{4}$ & $2.58\cdot 10^{5}$ & $1.03\cdot 10^{6}$ \\ \cline{2-11} 
                      & $4$ & $88      $ & $3697    $ & $2.13\cdot 10^{4}$ & $1.07\cdot 10^{4}$   & $7.45\cdot 10^{4}$  & $7.22\cdot 10^{5}$ & $4.33\cdot 10^{6}$ & $2.19\cdot 10^{7}$ & $1.42\cdot 10^{8}$ \\ \cline{2-11} 
                      & $5$ & $595     $ & $4.96\cdot 10^{4}$ & $3.59\cdot 10^{5}$ & $1.59\cdot 10^{5}$   & $1.72\cdot 10^{6}$  & $3.16\cdot 10^{7}$ & $2.76\cdot 10^{8}$ & $2.23\cdot 10^{9}$ & $2.17\cdot 10^{10}$ \\ \cline{2-11} 
                      & $6$ & $4459    $ & $3.80\cdot 10^{5}$ & $6.58\cdot 10^{6}$ & $2.77\cdot 10^{6}$   & $4.32\cdot 10^{7}$  & $1.32\cdot 10^{9}$ & $1.91\cdot 10^{10}$ & $2.46\cdot 10^{11}$ & $3.60\cdot 10^{12}$ \\ \hline \hline
\multirow{5}{*}{$0.05$} & $2$ & $21      $ & $221     $ & $460     $ & $365     $   & $892     $  & $2965    $ & $7204    $ & $1.67\cdot 10^{4}$ & $4.21\cdot 10^{4}$ \\ \cline{2-11} 
                      & $3$ & $234     $ & $3925    $ & $1.31\cdot 10^{4}$ & $9009$   & $3.54\cdot 10^{4}$  & $2.01\cdot 10^{5}$ & $8.13\cdot 10^{5}$ & $2.77\cdot 10^{6}$ & $1.15\cdot 10^{7}$ \\ \cline{2-11} 
                      & $4$ & $3152    $ & $6.70\cdot 10^{4}$ & $4.23\cdot 10^{5}$ & $2.35\cdot 10^{5}$   & $1.59\cdot 10^{6}$  & $1.64\cdot 10^{7}$ & $1.04\cdot 10^{8}$ & $5.45\cdot 10^{8}$ & $3.55\cdot 10^{9}$ \\ \cline{2-11} 
                      & $5$ & $4.82\cdot 10^{4}$ & $1.81\cdot 10^{6}$ & $1.51\cdot 10^{7}$ & $9.24\cdot 10^{6}$   & $7.88\cdot 10^{7}$  & $1.49\cdot 10^{9}$ & $1.46\cdot 10^{10}$ & $1.26\cdot 10^{11}$ & $1.21\cdot 10^{12}$ \\ \cline{2-11} 
                      & $6$ & $8.13\cdot 10^{5}$ & $4.09\cdot 10^{7}$ & $5.83\cdot 10^{8}$ & $3.39\cdot 10^{8}$   & $4.25\cdot 10^{9}$  & $1.58\cdot 10^{11}$ & $2.24\cdot 10^{12}$ & $3.05\cdot 10^{13}$ & $4.49\cdot 10^{14}$ \\ \hline \hline
\multirow{5}{*}{$0.01$} & $2$ & $734     $ & $5101    $ & $1.25\cdot 10^{4}$ & $9941$   & $2.48\cdot 10^{4}$  & $8.28\cdot 10^{4}$ & $2.09\cdot 10^{5}$ & $4.87\cdot 10^{5}$ & $1.24\cdot 10^{6}$ \\ \cline{2-11} 
                      & $3$ & $4.58\cdot 10^{4}$ & $4.65\cdot 10^{5}$ & $1.85\cdot 10^{6}$ & $1.25\cdot 10^{6}$   & $5.20\cdot 10^{6}$  & $3.07\cdot 10^{7}$ & $1.27\cdot 10^{8}$ & $4.42\cdot 10^{8}$ & $1.83\cdot 10^{9}$ \\ \cline{2-11} 
                      & $4$ & $3.36\cdot 10^{6}$ & $4.94\cdot 10^{7}$ & $3.11\cdot 10^{8}$ & $1.88\cdot 10^{8}$   & $1.23\cdot 10^{9}$  & $1.35\cdot 10^{10}$ & $8.73\cdot 10^{10}$ & $4.73\cdot 10^{11}$ & $3.06\cdot 10^{12}$ \\ \cline{2-11} 
                      & $5$ & $2.80\cdot 10^{8}$ & $5.96\cdot 10^{9}$ & $5.78\cdot 10^{10}$ & $3.30\cdot 10^{10}$   & $3.22\cdot 10^{11}$  & $6.76\cdot 10^{12}$ & $6.63\cdot 10^{13}$ & $5.76\cdot 10^{14}$ & $5.66\cdot 10^{15}$ \\ \cline{2-11} 
                      & $6$ & $2.57\cdot 10^{10}$ & $7.82\cdot 10^{11}$ & $1.17\cdot 10^{13}$ & $6.44\cdot 10^{12}$   & $9.16\cdot 10^{13}$  & $3.71\cdot 10^{15}$ & $5.47\cdot 10^{16}$ & $7.73\cdot 10^{17}$ & $1.14\cdot 10^{19}$ \\ \hline
\end{tabular}
}
\caption{\sf{Sufficient and necessary samples sizes for $(\eps,\delta)$-completeness. A comparison of the specific sample sizes for various values for the attributes of clearance $\delta$, stretch $\eps$, and dimension $d$. The column ``lower bound'' reports the value from~\indirectcite[Theorem~1]{TsaoSoloveyETAL2020}. For every combination of attributes we report the value $|\X_{\textup{curr}}|$  from Theorem~\ref{thm:main_single}, and $|\X_{\textup{prev}}|$ from~\indirectcite[Theorem~2]{TsaoSoloveyETAL2020}.} Notice that it always holds that $|\X_{\textup{curr}}|<|\X_{\textup{prev}}|$}.\label{tbl:samp_comp_comparison}
\end{table*}

As reported in Theorem~\ref{thm:main_single} and~\indirectcite[Theorem~2]{TsaoSoloveyETAL2020}, both quantities $|\X_{\textup{curr}}|$ and $|\X_{\textup{prev}}|$ increase exponentially as the dimension increases, and as the clearance $\delta$ or the stretch $\eps$ decrease. However, the value $|\X_{\textup{curr}}|$ is consistently smaller than $|\X_{\textup{prev}}|$. As the latter value increases it grows more rapidly than $|\X_{\textup{curr}}|$ by at least one order of magnitude. Although  in certain cases the value of $|\X_{\textup{curr}}|$  is still quite large for some practical application (particularly when the dimension is larger than $3$), we hope that our work would motivate further study into even smaller sampling distributions than our staggered grid, or a more refined version of Theorem~\ref{thm:main_single}. In this context, we point out that the value $|\X_{\textup{curr}}|$ is larger than the lower bound (where applicable) by one order of magnitude, which suggests that  there is room for improvement.


\else
Our proof of this theorem, which appears in the extended version~\indirectcite{DayanEA2020}, is a refined version of~\indirectcite[Proof of Theorem~2]{TsaoSoloveyETAL2020}. In particular, we first prove that the staggered grid $\X_{\alpha \delta,\delta}$ provides an \xcover{$\alpha\delta$} of $\C^f$, in the sense that for every point $c\in \C^f$ there exists $x\in \X_{\alpha \delta,\delta}$ such that $\|x-c\|\leq \alpha\delta$. We then use this property to show that we can use the vertices and edges of $G_{\M(\X_{\alpha \delta,\delta},r)}$ to closely follow a shortest $\delta$-clear path for $\M$. The value $\alpha$ is set so that the resulting solution is of cost at most $(1+\eps)\textup{OPT}_\delta$.

The previous proof~\indirectcite{TsaoSoloveyETAL2020} relies on a different sampling distribution, which we denote by $|\X_{\textup{prev}}|$, and the same connection radius as above. The distribution $|\X_{\textup{prev}}|$ is constructed by iteratively placing samples at points that are at a distance at least $\alpha\delta$ from the nearest previous sample, which makes it difficult to construct in practice due to the need to keep track of a hypersphere arrangement~\indirectcite{hs-a-18}. In contrast, our distribution can be easily constructed from Definition~\ref{def:staggered_grid}.
Moreover, our sampling distribution $\X_{\alpha \delta,\delta}$, which is sufficient for $(\eps,\delta)$-completeness, is significantly smaller in terms of number of sample points. In the extended version of our paper~\directcite{DayanEA2020}, we show that $\frac{|\X_{\textup{prev}}|}{\X_{\alpha \delta,\delta}}\approx \frac{\sqrt{\pi d}}{2}1.3687^d $ for small values of $\eps$ and $\delta$. We also provide a  comparison for specific values of $\delta,\eps,d$ where we demonstrate a reduction of one order of magnitude with respect to the sample size in our favor and compare our result to the lower-bound obtained in~\directcite[Theorem~1]{TsaoSoloveyETAL2020}.

\fi

\section{Near-optimal tensor roadmaps for MRMP}\label{sec:multi_main}
We present our central contribution: we extend our results from the previous section to the multi-robot setting. In particular, we prescribe the number of samples and their distribution with respect to each robot's \prm graph, such that the resulting tensor roadmap encodes a solution for the multi-robot motion planning (MRMP) problem that has a desired near-optimality guarantee.

\subsection{Basics of multi-robot motion planning}\label{sec:multi_basic}
We provide a definition of the MRMP  problem. We consider the setting of $R \geq 2$ identical 
robots operating in a shared workspace, and denote by $\C_i \subset [0,1]^d$ the configuration space of robot $i$, $1 \leq i \leq R$. We define $\C_i^f, \C_i^o\subset \C_i$ to be the free and forbidden spaces, respectively, of robot $i$. Since the robots are identical it holds that $\C_i^f = \C_j^f$ for all $1\leq i \leq j \leq R$.

The configuration space of the multi-robot system $\dC$, termed \emph{the composite configuration space}, is the Cartesian product of the individual robots' configuration spaces, i.e., $\dC=\C_1\times \ldots\times \C_R$. That is, a composite configuration $Q = (q_1,...,q_R)\in \dC$ is an $R$-tuple of single-robot configurations, where $q_i\in \C_i$.
For two distinct robots $i, j$, we denote by $I_i^j(q_j) \subset \C_i$ the set of configurations of robot $i$ that lead to collision with robot $j$ when $j$ is at configuration $q_j$. The composite free space $\dC^f \subset \dC$ consists of all composite configurations $(q_1,...,q_R)$ such that (i) $q_i \in \C_i^f$ for every $1 \leq i \leq R$, and (ii) $q_i \notin I_i^j(q_j)$ for every $1 \leq i\neq j \leq R$, which ensure that robot-obstacle and robot-robot collisions are avoided, respectively. 

Given start and goal positions $x^s_i,x^g_i\in \C^f_i$, respectively, for each robot $1\leq i\leq R$, let $\vec{x}^{s}=(x^s_1,\ldots,x^s_R)$ and $\vec{x}^{g}=(x^g_1,\ldots,x^g_R)$. The MRMP problem, denoted by $\dM=(\dC^f, \vec{x}^{s},  \vec{x}^{g})$, consists of finding trajectories for the $R$ robots such that the robots begin their motion at $\vec{x}^{s}$, end at $\vec{x}^{g}$, and avoid collisions (both with obstacles and with each other) along the way. Formally, the objective is to find a collision-free composite trajectory of the form $\Sigma : [0,1] \to \dC^f$, where $\Sigma$ is an $R$-tuple $\Sigma = (\sigma_1,...,\sigma_R)$ of single-robot trajectories $\sigma_i:[0,1] \to \C_i^f$, such that $\Sigma(0) = \vec{x}^{s},\Sigma(1) = \vec{x}^{g}$, and $\Sigma(\tau)\in\dC^f$ for all $0\leq\tau\leq 1$. 

In this work we are interested in finding high-quality solutions for the multi-robot problem. We consider as cost criterion the sum of single-robot trajectory lengths, denoted as $\cost(\Sigma) = \sum_{i=1}^R \norm{\sigma_i}$. Our analysis applies more generally to cost functions having the following property. Let $\Sigma^1=(\sigma^1_1,\ldots,\sigma^1_R), \Sigma^2=(\sigma^2_1,\ldots,\sigma^2_R)$ be multi-robot trajectories, such that $\norm{\sigma^1_i} \leq (1+\eps)\norm{\sigma^2_i}$ for all $i$,  $1 \leq i \leq R$. Our analysis will work for any cost function $c$ for which the condition above for the individual robots implies $c(\Sigma^1) \leq (1+\eps)c(\Sigma^2)$. For example, our analysis can be easily adapted to the case where the cost criterion is  the maximum of trajectory lengths, i.e., $\max_{i=1}^R \norm{\sigma_i}$
\ifextendedv
.
\else
 (see~\indirectcite{DayanEA2020}).
\fi

\subsection{Tensor roadmaps}
We provide a formal definition of the tensor roadmap (TR), which is implicitly explored by sampling-based planners such as \drrt~\indirectcite{drrt.doi:10.1177/0278364915615688}, \drrtstar~\indirectcite{DBLP:journals/arobots/ShomeSDHB20}, as well as by search-based methods such as \mccbs~\indirectcite{LiEA2019} and \mstar\indirectcite{WagCho15}.

For every robot $i$, $1 \leq i \leq R $, let $G_i(\X_i, r_i) = (V_i,E_i)$ be a \prm graph embedded in $\C^f_i$, for some point set $\X_i$ and radius $r_i>0$ (as defined in Section~\ref{ssec:roadmaps-and-samples}). The TR, denoted by ${\widehat{G}(\vec{\X}, \vec{r})}$ = $(\widehat{V}, \widehat{E})$, is the tensor product of $G_1,\ldots,G_R$. In particular, each vertex of ${\widehat{G}(\vec{\X}, \vec{r})}$ describes a simultaneous placement of the $R$ robots, and similarly an edge of ${\widehat{G}(\vec{\X}, \vec{r})}$ describes a simultaneous motion of the robots. Formally,
\begin{enumerate}[label=(\roman*)]
  \item $\widehat{V} = \{(v_1,\ldots,v_R):\forall i, v_i\in V_i\}$, and
  \item for two vertices $W=(w_1,\ldots,w_R), U=(u_1,\ldots,u_R) \in \widehat{V}$, the edge set $\widehat{E}$ contains the edge $(W,U)$ if for all $i$, $w_i=u_i$ or $(w_i,u_i)\in E_i$.
\end{enumerate}

Note that robots are allowed to stay put, which differs from prevalent definitions of the tensor product of graphs~\indirectcite{10.5555/2031398,BALAKRISHNAN19981,Shitov_2019}. 
Notice further that by the definition of $G_i$, the motion described by each edge in $\widehat{E}$ represents a trajectory for the $R$ robots in which the robot-obstacle collisions are avoided. Next we consider a subgraph of the TR in which also robot-robot collisions are avoided. 
Given an MRMP problem, $\dM= (\dC^f, \vec{x^{s}},  \vec{x^{g}})$ we will denote as $\widehat{G}_{\dM(\vec{\X}, \vec{r})}(x^{s}, x^{g})$ its TR constructed from the \prm graphs of the individual robots' ${G_{\M_i(\X_i, r_i)}}$, but where \emph{we remove all edges of the TR representing transitions of the robots that are not collision free}. We denote by ${d({\widehat{G}_{\dM(\vec{\X}, \vec{r})}},\vec{x^{s}},  \vec{x^{g}})}$ the minimal cost of collision-free trajectories from $\vec{x^{s}}$ to $\vec{x^{g}}$ in the graph $\widehat{G}_{\dM(\vec{\X}, \vec{r})}$.

\subsection{Multi-robot clearance and completeness}\label{sec:multi completeness}
In preparation for defining the multi-robot equivalent of $(\eps,\delta)$-completeness, we first define clearance for the multi-robot case. 
Given a trajectory $\Sigma$, recall that we define for each robot its forbidden space at time $\tau\in [0,1]$ to be its obstacle space $\C_i^o$ and the configurations that will lead to collisions with other robots. Formally, $\C_i^o(\tau) = \C_i^o \bigcup_{j\neq i}I_i^j(\sigma_j(\tau))$ is the forbidden space for robot $i$ at time $\tau$. Notice that we define $\C_i^o(\tau)$ only for a given trajectory $\Sigma$ as it depends on the locations of the other robots.

\begin{definition}[$\vec{\delta}$-clearance]
Given a trajectory $\Sigma$, we say that $\Sigma$ has $\vec{\delta}$-clearance for $\vec{\delta} = (\delta_1,\ldots, \delta_R)$ if for each robot $i$, $1\leq i \leq R$ and at any time $\tau\in [0,1]$, the distance from $i$ to the obstacles and to each robot $j\neq i$ is at least $\delta_i$. Formally, $\Sigma$ has $\vec{\delta}$-clearance if, for all $1\leq i \leq R$, $0\leq \tau \leq 1$, it holds that $\norm{\sigma_i(\tau)-x}> \delta_i$, for every $x \in \C_i^o(\tau)$.
\end{definition}

Next we define the equivalent of single-robot ($\eps,\delta$)-completeness for MRMP:

\begin{definition}[Multi-robot ($\eps,\vec{\delta}$)-completeness]
Given $R$ robots, a stretch parameter $\eps>0$, a vector of $R$ sample sets $\vec{\X}=(\X_1,\ldots,\X_R)$, and a vector of $R$ connection radii $\vec{r}=(r_1,\ldots,r_R)$, we say that the pair $(\vec{\X}, \vec{r})$ is ($\eps,\vec{\delta}$)-complete if for every $\vec{\delta}$-clear $\dM= (\dC^f, \vec{x^{s}},  \vec{x^{g}})$ it holds that
$$
{d({\widehat{G}_{\dM(\vec{\X}, \vec{r})}},\vec{x^{s}},  \vec{x^{g}})}\leq (1+\eps)\OPT_{\vec{\delta}},
$$
where $\OPT_{\vec{\delta}}$ is the minimal cost of a $\vec{\delta}$-clear $\dM$ solution.
\end{definition}



We are ready to state our main contribution.

\begin{theorem}[Sufficient conditions for MRMP ($\eps,\vec{\delta}$)-completeness]\label{thm:multi_robot}
Let $\eps>0$ be a stretch factor, let $\vec{\delta}$ be a clearance vector $(\delta_1,\ldots,\delta_R)$, and denote $\omega=\frac{\eps}{2(\eps+2)}$. Define the sampling distributions $\vec{\X} = (\X_1,\ldots,\X_R)$ and radii vector $\vec{r} = (r_1,\ldots,r_R)$, as 
\[\X_i=\X_{\omega\delta_i,\delta_i}, \quad r_i={{\delta_i(\eps+1)}/{(\eps+2)}},\] for every robot $1\leq i\leq R$. Then $\left(\vec{\X}, \vec{r}\right)$ is $\left(\eps,\vec{\delta}\right)$-complete.
\end{theorem}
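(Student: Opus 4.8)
The plan is to reduce the multi-robot statement to $R$ applications of the single-robot machinery (Lemma~\ref{lem:super_duper}) and then glue the resulting per-robot discretizations into one \emph{synchronized} path in the tensor roadmap, using $\vec\delta$-clearance to certify that the simultaneous motion avoids robot-robot collisions. Fix a $\vec\delta$-clear solution $\Sigma=(\sigma_1,\ldots,\sigma_R)$ of cost $\OPT_{\vec\delta}$. For robot $i$ I would set $\gamma_i=\delta_i$, $\beta_i=\omega\delta_i$, and $\rho_i:=r_i-2\beta_i=\delta_i/(\eps+2)$. Three identities drive everything: $\beta_i+\rho_i=\delta_i/2$, $2\beta_i/\rho_i=\eps$, and $\beta_i^2+(\rho_i/2)^2=\frac{\eps^2+1}{4(\eps+2)^2}\delta_i^2\le\delta_i^2$, so the hypothesis of Lemma~\ref{lem:super_duper} holds. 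Since $\sigma_i$ is $\delta_i$-clear with respect to the static obstacles (clearance to $\C_i^o(\tau)\supseteq\C_i^o$), the lemma yields a vertex sequence $z_{i,0}=x_i^s,\ldots,z_{i,\ell_i}=x_i^g$ in $\X_i\cup\{x_i^s,x_i^g\}$ along the ``first-time'' sequence $T^{\rho_i}_{\sigma_i}$, whose consecutive pairs are $G_i$-edges (Property~(iv), as $r_i=2\beta_i+\rho_i$, together with Property~(v)) and whose length is at most $(1+2\beta_i/\rho_i)\norm{\sigma_i}=(1+\eps)\norm{\sigma_i}$ (Property~(vi)).

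The first obstacle is that these $R$ discretizations live on different time grids $T^{\rho_i}_{\sigma_i}$, whereas an edge of the tensor roadmap is one synchronized motion of all robots. I would merge the grids into a common sequence $0=t_0<\cdots<t_L=1$ (the sorted union of the $T^{\rho_i}_{\sigma_i}$) and exploit the fact, noted after the definition of the TR, that a robot may stay put along an edge. Each robot traverses exactly its own coarse edges $z_{i,p}\to z_{i,p+1}$, executing the $p$-th one during the single super-step \emph{ending} at the coarse time $\tau_{i,p+1}$ and remaining stationary at $z_{i,p}$ during the intervening super-steps. The realized motion of robot $i$ is then precisely its coarse polyline, so the bound $(1+\eps)\norm{\sigma_i}$ is preserved, and since every per-step transition is a stay-put or a $G_i$-edge between obstacle-free vertices, each super-step is a genuine edge of $\widehat{G}(\vec\X,\vec r)$.

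The heart of the proof is certifying that each super-step survives the robot-robot collision pruning, and here the choice $\omega=\eps/(2(\eps+2))$ is exactly what is needed. Fixing a super-step $[t_{m-1},t_m]$, I would use $t_m$ as a common reference time and show that throughout it every robot stays within $\beta_i+\rho_i=\delta_i/2$ of $\sigma_i(t_m)$: a moving robot is within $\beta_i+\rho_i$ of $\sigma_i(\tau_{i,p+1})=\sigma_i(t_m)$ by Property~(iii), while a stationary robot is within $\beta_i$ of $\sigma_i(\tau_{i,p})$ (the $\beta_i$-cover of Lemma~\ref{lem:beta_cover}), and $\sigma_i(\tau_{i,p})$ is within $\rho_i$ of $\sigma_i(t_m)$ by the first-time property of $T^{\rho_i}_{\sigma_i}$. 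For a pair $i\ne j$ with, say, $\delta_i\ge\delta_j$, the collision set $I_i^j$ of identical robots translates with its argument, so for the realized positions $p_i,p_j$,
\begin{align*}
\operatorname{dist}\!\bigl(p_i, I_i^j(p_j)\bigr)
&\ge \operatorname{dist}\!\bigl(\sigma_i(t_m), I_i^j(\sigma_j(t_m))\bigr)-\norm{p_i-\sigma_i(t_m)}-\norm{p_j-\sigma_j(t_m)}\\
&> \delta_i-\tfrac{\delta_i}{2}-\tfrac{\delta_j}{2}\ \ge\ 0,
\end{align*}
where the strict middle inequality uses the $\vec\delta$-clearance $\operatorname{dist}(\sigma_i(t_m),I_i^j(\sigma_j(t_m)))>\delta_i$. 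Hence $p_i\notin I_i^j(p_j)$ and no collision occurs; the averaging $\tfrac{\delta_i+\delta_j}{2}\le\max(\delta_i,\delta_j)$ is used precisely to absorb the sum of the two perturbations, which is why the budget forces $\beta_i+\rho_i=\delta_i/2$ and thereby pins down $\omega$.

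Combining the pieces, the synchronized polyline is a collision-free walk from $\vec x^s$ to $\vec x^g$ in $\widehat{G}_{\dM(\vec\X,\vec r)}$ whose every robot-component has length at most $(1+\eps)\norm{\sigma_i}$; by the stated property of the cost function this yields cost at most $(1+\eps)\OPT_{\vec\delta}$, establishing $(\eps,\vec\delta)$-completeness. I expect the synchronization-with-stay-put bookkeeping together with the tightness of the clearance budget to be the delicate points; the degenerate case $\norm{\sigma_i(0)-\sigma_i(1)}<\rho_i$, where $T^{\rho_i}_{\sigma_i}$ is undefined, is handled separately by letting robot $i$ take the single edge $x_i^s\to x_i^g$ (or stay put).
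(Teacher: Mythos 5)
Your proposal follows essentially the same route as the paper's proof: apply Lemma~\ref{lem:super_duper} to each robot with $\beta_i=\omega\delta_i$ and $\rho_i=\delta_i/(\eps+2)$ (so that $\beta_i+\rho_i=\delta_i/2$, $2\beta_i/\rho_i=\eps$, and $r_i=2\beta_i+\rho_i$), then interleave the per-robot discretizations by timestamp and certify robot--robot safety by showing that every robot stays within half its clearance of its position on the optimal trajectory at a common reference time. The only structural difference is that the paper serializes the interleaving so that exactly one robot moves per tensor-roadmap edge (ties between equal timestamps broken by robot index), whereas you let all robots sharing a timestamp move simultaneously; since your drift bound of $\delta_i/2$ holds uniformly for moving and stationary robots, this variation is harmless, and your pairwise inequality (using translation-invariance of $I_i^j$ and the larger of the two clearances) is the rigorous form of the paper's ``$\delta_{\max}$'' argument.

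The one genuine flaw is your handling of the degenerate case $\norm{\sigma_i(0)-\sigma_i(1)}<\rho_i$. Replacing robot $i$'s motion by the single edge $x_i^s\to x_i^g$ (or by staying put) abandons the tracking of $\sigma_i$, and the collision argument for every pair involving robot $i$ then collapses: the other robots' $\vec{\delta}$-clearance is measured against $I_j^i(\sigma_i(\tau))$, not against $I_j^i(x_i^s)$. This case is not pathological in MRMP---it is precisely the situation in which a robot must step aside to let another robot pass and then return (possibly with $x_i^s=x_i^g$), so $\sigma_i$ wanders far from $x_i^s$ at intermediate times while robot $j$'s discretized path legitimately crosses the neighborhood of $x_i^s$; your parked robot $i$ would then be hit. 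The correct resolution requires no special case at all: the greedy (``first-time'') construction of $T_{\sigma_i}^{\rho_i}$ remains well defined when the endpoints are close---it terminates at the step where the \emph{entire remaining tail} of the trajectory stays inside the $\rho_i$-ball of the current sample, which need not be the first step---and all properties of Lemma~\ref{lem:super_duper}, including the within-interval bound you call the first-time property, continue to hold, so the main argument applies verbatim. (The paper glosses over the same issue, since Definition~\ref{def:trajectory-timing} formally requires $\norm{\sigma(0)-\sigma(1)}\geq\rho$; but its construction, unlike your fix, is compatible with this resolution.)
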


\ifextendedv
\begin{proof}
Fix $\eps>0$ and $\vec{\delta}=(\delta_1,\ldots,\delta_R)$ where $\delta_i>0$ for all $1\leq i\leq R$. Let $\vec{x}^{s}, \vec{x}^{g}$ be the start and goal composite configurations, respectively. Let $\dM=(\dC^f, \vec{x}^{s}, \vec{x}^{g})$ be a $\vec{\delta}$-clear MRMP problem. (This implies in particular that the start and goal configurations fulfill the clearance requirements.) Let $\Sigma^*=(\sigma^*_1,\ldots,\sigma^*_R)$ be a  $\vec{\delta}$-clear solution which minimizes the expression $\cost(\Sigma)$ over all  $\vec{\delta}$-clear solution trajectories $\Sigma$.

The overall structure of this proof is as follows. First, we generate for each robot $i$ a \prm graph $G_i$, which is induced by the sample set $\X_i$ and  the radius $r_i$. We show, using  Lemma~\ref{lem:super_duper}, that each such  $G_i$ induces a collision-free trajectory $\bar{\sigma}_i$ for robot $i$, which closely follows $\sigma^*_i$. Next we show, using additional properties following from  Lemma~\ref{lem:super_duper}, that we can specify the arrival time of the individual robots along their corresponding vertices along $\bar{\sigma}_i$, producing the path $\widehat{\sigma_i}$  such that robot-robot collisions are avoided. This implies that the induced tensor graph contains a collision-free composite trajectory $\widehat{\Sigma}=(\widehat{\sigma}_1,\ldots,\widehat{\sigma}_R)$, whose cost is at most $(1+\eps)\cost(\Sigma^*)$. We fill in the details of the proof below.


For every $1\leq i\leq R$, define $\X_i=\X_{\beta_i,\delta_i}$, where 
$\beta_i= \omega\delta_i$. Let $G_i = (V_i,E_i)$ be the \prm graph for robot~$i$ using $(\X_i,r_i)$, that is $G_i = G_{\M_i(\X_i, r_i)}$ for $\M_i = (\C_i,x_i^s,x_i^g)$. Let $\widehat{G}$ be the  tensor product of $G_1,\ldots,G_R$.
Also, define $\rho_i=\frac{\delta_i}{\eps+2}$ and let $T_i:=T_{\sigma^*_i}^{\rho_i}=(\tau^i_0,\ldots,\tau_{\ell_i}^i)$, as in Definition~\ref{def:trajectory-timing}. First, we show that the conditions for Lemma~\ref{lem:super_duper} hold for each robot $i$, i.e., $\beta_i^2+(\rho_i/2)^2\leq \delta_i^2$. Indeed,
\begin{align*}
& \beta_i^2+\left(\frac{\rho_i}{2}\right)^2 =  \omega^2\delta_i^2+\frac{\delta_i^2}{4(\eps+2)^2} \\
& = \left(\frac{\eps^2+1}{(\eps^2+1)+(3\eps^2+8\eps+15)}\right)\delta_i^2 
\leq \delta_i^2. 
\end{align*}
Thus, we can apply Lemma~\ref{lem:super_duper} with the time sequence $T_i$ for each robot $i$ individually to show that there exists a point set $Z_i:=Z_{\sigma^*_i}^{\rho_i}=(z_0^i,\ldots,z_{\ell_i}^i)\subset \X_i\cup\{x_i^s,x_i^g\}$, such that Properties~(i)-(vi) of Lemma~\ref{lem:super_duper} hold. In particular, we have that for all $1\leq i \leq R$,
\begin{enumerate}[label=(\roman*')]
    \item $\|z_k^i-\sigma^*_i(\tau_k^i)\|\leq \beta_i$, for all $0\leq k\leq \ell_i$;
    \item $\|z-\sigma^*_i(\tau_{k-1}^i)\|\leq \beta_i +\rho_i$, for all $1\leq k\leq \ell_i$, $z \in \CH\left(z_{k-1}^i,z_k^i\right)$;
    \item $\|z-\sigma^*_i(\tau_k^i)\|\leq \beta_i +\rho_i$, for all $1\leq k\leq \ell_i$, $z \in \CH\left(z_{k-1}^i,z_k^i\right)$;
    \item $\|z_k^i-z_{k-1}^i\|\leq 2\beta_i +\rho_i$, for all $1\leq k\leq \ell_i$;
    \item $\CH\left(z_{k-1}^i,z_k^i\right)$ is collision free, for all $1\leq k\leq \ell_i$. That is $\CH\left(z_{k-1}^i,z_k^i\right) \subset \C^f_i$, for all $1\leq k\leq \ell_i$;
    \item $\sum_{1\leq k\leq \ell_i}\|z_k^i-z_{k-1}^i\|\leq \left(1+\frac{2\beta_i}{\rho_i}\right)\norm{\sigma^*_i}$.
\end{enumerate}

By setting $r_i=2\beta_i+\rho_i = \frac{\delta_i(\eps+1)}{(\eps+2)}$, and using Property~(iv'), it follows that the edge connecting $z_j^i$ and $z_{j-1}^i$ is considered in the construction of $G_i$, and from (v'), it is in $G_i$. Next, due to the fact that $2\beta_i/\rho_i = \eps$
, and Property~(vi'), it holds that $\norm{\bar{\sigma}_i} \leq (1+\eps)\norm{\sigma^*_i}$, for $\bar{\sigma}_i$ denoting the trajectory induced by $Z_{\sigma^*_i}^{\rho_i}$. Thus, each robot has a collision free path, which is a $(1+\eps)$-approximation for its $\delta_i$-clear path $\sigma^*_i$. Finally, observe that $\cost(\bar{\Sigma})\leq (1+\eps)\cost(\Sigma^*)$, where $\bar{\Sigma}=(\bar{\sigma}_1,\ldots,\bar{\sigma}_R)$. Note that this last step also applies to a cost function that returns the maximum length over the $R$ single-robot trajectories.



As robots may collide with one another along the paths $\bar{\Sigma}$, we leverage the above properties (i')-(iv') to show that $\widehat{G}$, the TR where edges describe coordinated collision-free motion where the robots avoid obstacles as well as each other, contains a high-quality composite trajectory which avoids robot-obstacle \emph{and robot-robot} collisions. To do so, we show that we can adjust the positions of the robots along the trajectories $\bar{\sigma}_1,\ldots,\bar{\sigma}_R$, to induce a collision-free trajectory over $\widehat{G}$, which we denote by $\widehat{\Sigma}=(\widehat{\sigma}_1,\ldots,\widehat{\sigma}_R)$.

First, define a list $\L$ of triplets of the following form:
$$\L:=\bigcup_{i=1}^R\bigcup_{j=1}^{\ell_i}\{(i,\tau^i_j,z_{j}^i)\}.$$ 
That is, $\L$ contains for every robot $1\leq i\leq R$, $\ell_i$ triplets of the form $(i,\tau^i_j,z_{j}^i)$, where $\tau^i_j\in T_i$ is a timestamp, and $z_{j}^i\in Z_i$ is the corresponding configuration. Additionally, define $\L^o$ to be a permutation of $\L$, where the triplets are ordered according to the timestamp.
That is,
$$\L^o:=\{(i_1,\tau_{i_1},z_{i_1}), (i_2,\tau_{i_2},z_{i_2}),\ldots, (i_\ell,\tau_{i_\ell},z_{i_\ell})\},$$
where $\ell=\sum_{i=1}^R \ell_i$, $(i_j,\tau_{i_j},z_{i_j})\in \L$ for every $1\leq j\leq \ell$, and $\tau_{i_j}\leq \tau_{i_{j+1}}$ for every $1\leq j\leq \ell-1$. For simplicity, when several robots have the same timestamp, we order the corresponding triplets in increasing order of the robot indices.

Next, we describe an iterative scheme that uses $\L^o$ for generating a sequence of composite vertices $V_0, V_1,\ldots, V_\ell\in \widehat{V},$ such that $(V_j,V_{j+1})\in \widehat{E}$. First, define $V_0=\vec{x}^s$. Next, given that $V_j=(v_{j_1},\ldots,v_{j_R})$ has already been defined for some $1\leq j\leq \ell-1$, set $V_{j+1}=(v_{(j+1)_1},\ldots,v_{(j+1)_R})$, where $v_{(j+1)_{i_{j+1}}}:=z_{i_{j+1}}$, and $v_{(j+1)_{i'}}=v_{j_{i'}}$ for every $i'\neq i_{j+1}$. Namely, when transitioning from $V_j$ to $V_{j+1}$ all the robots stay put, besides robot $i_{j+1}$ whose timestamp appeared in item $j+1$ of $\L^o$. See appendix for a concrete example.

To complete the proof, we first note that $(V_j,V_{j+1})\in \widehat{E}$ for every  $1\leq j\leq \ell-1$. This follows from the values of the connection radii $\vec{r}$ we assigned, and Properties~(iv') and~(v'), which ensure that $\{v^i_j,v^i_{j+1}\}\in E_i$ for every robot $1\leq i\leq R$, where $(v_1,\ldots,v_R):=V_j,(v'_1,\ldots,v'_R):=V_{j+1}$.


It remains to prove that the robots do not collide with one another while they move along the path represented by any such edge $(V_j,V_{j+1})$.  
First, recall that there is exactly one robot moving for $(V_j,V_{j+1})$. In particular, this is the robot whose index is $i_j$, which is the first value of the $j$th triplet  $(i_j,\tau_{i_j},z_{i_j})\in \L^o$. That is $v_{i_j}\neq v'_{i_j}$, whereas for any other robot $k\neq i_j$ it holds that $v_{k}= v'_{k}$.

Notice that given two stationary robots $k_1,k_2\neq i_j$, and assuming that they did not collide for the previous edge $(V_{j-1},V_{j})$, they do not collide with each other for $(V_j,V_{j+1})$ as well. 

It remains to show that robot $i_j$ does not collide with any stationary robot $k\neq i_j$, while moving from $v_{i_j}$ to $v'_{i_j}$. By definition of $V_0,\ldots,V_\ell$, it must be that robot $k$ resides in $v_{k}=z^{k}_{j'}\in Z_{k}$ such that $\tau^{k}_{j'}\leq \tau_{i_j}\leq \tau^{k}_{j'+1}$, for some $0\leq j'\leq \ell_{k}-1$.
Next, recall that $\sigma^*_{k},\sigma^*_{i_j}$ are $\delta_{k}$ and $\delta_{i_j}$-clear, respectively. Thus, it suffices to prove that $\|\sigma^*_{i_j}(\tau_{i_j})-p\|\leq \delta_{i_j}/2$ for every $p\in \CH\left(v_{i_j}, v'_{i_j}\right)$ and $\|\sigma^*_{k}(\tau_{i_j})-z^{k}_{j'}\|\leq \delta_{k}/2$.


Indeed, Property~(iii') implies that for $p\in \CH\left(v_{i_j}, v'_{i_j}\right)$ we have that 
$\norm{p-\sigma^*_{i_j}(\tau_{i_j})}\leq \beta_{i_j} +\rho_{i_j}\leq \delta_{i_j}/2$.
We also have that 
\begin{align*}\label{eq:i_dist_2}
\norm{z^{k}_{j'}-\sigma^*_{k}(\tau_{i_j})}& \leq 
\|z^{k}_{j'}-\sigma^*_{k}(\tau^{k}_{j'})\|+\|\sigma^*_{k}(\tau^{k}_{j'})-\sigma^*_{k}(\tau_{i_j})\| \\
& \leq \beta_{k} + \rho_{k}\leq \delta_{k}/2.
\end{align*}
Since at time $\tau_{i_j}$ both robots $i_j$ and $k$ obey the clearance rules, they are at distance $\delta_{\rm max}=\max(\delta_{i_j},\delta_k)$ from one another. As we have shown that during the specified motion, robot $i_j$ has not drifted more than $\delta_{\rm max}/2$ from its position at time $\tau_{i_j}$, and the stationary placement of robot $k$ during this motion is at most $\delta_{\rm max}/2$ from its placement at time $\tau_{i_j}$, we are guaranteed that they do not collide with one another during this motion,   
which concludes the proof. 
\ifijrrv
\qed
\else
\qedhere 
\fi

\end{proof}

\else
We provide a sketch of the proof. The full proof can be found in the extended version of the paper~\indirectcite{DayanEA2020}. 

\vspace{5pt}
\noindent\emph{Sketch of proof.} 
Let $\Sigma^*=(\sigma^*_1,\ldots,\sigma^*_R)$ be a  $\vec{\delta}$-clear solution such that $\cost(\Sigma^*)=\OPT_{\vec{\delta}}$.
%
%
%
For every $1\leq i\leq R$, define $\X_i=\X_{\beta_i,\delta_i}$, and set 
$\beta_i= \omega\delta_i$. Let $G_i = (V_i,E_i)$ be the \prm graph for robot~$i$ using $(\X_i,r_i)$, that is $G_i = {G_{\M_i(\X_i, r_i)}}$ for $\M_i = (\C_i,x_i^s,x_i^g)$. Let $\widehat{G}$ be the  tensor roadmap of $G_1,\ldots,G_R$.

{In the extended version~\indirectcite[Lemma 3]{DayanEA2020}, we show} that for every robot $i$, $G_i$ contains a collision-free path $\bar{\sigma}_i$ (with respect to the obstacles) from $x_i^s$ to $x_i^g$, such that $\cost(\bar{\sigma}_i)\leq (1+\eps)\|\sigma^*_i\|$, and hence $\cost(\bar{\Sigma})\leq (1+\eps)\OPT_{\vec{\delta}}$, for $\bar{\Sigma}=(\bar{\sigma}_1,\ldots,\bar{\sigma}_R)$. Although each trajectory $\bar{\sigma}_i$ is collision free with respect to $\C_i^f$, it is not necessarily true that $\bar{\Sigma}$ avoids robot-robot collisions. Nevertheless, we show that we can adjust the robots' timing along the trajectories $\bar{\sigma}_1,\ldots,\bar{\sigma}_R$, to induce a collision-free trajectory over $\widehat{G}$, which we denote by $\widehat{\Sigma}=(\widehat{\sigma}_1,\ldots,\widehat{\sigma}_R)$. As our cost metric is total path length, this timing adjustment does not increase the solution cost.

To do so, we use details from the proof of Theorem~\ref{thm:main_single} (provided in the extended version~\indirectcite{DayanEA2020}).
In particular, for every robot $i$, we can pick the trajectory $\bar{\sigma}_i$, which we used above, such that if  $Z_i=(z_0^i,\ldots,z_{\ell_i}^i)$ denotes the sequence of vertices of $G_i$ visited along $\bar{\sigma}_i$, and $(\tau^i_0,\ldots,\tau_{\ell_i}^i)$ denote the time of visitation of those vertices, i.e., $z_j^i=\bar{\sigma}_i(\tau_j^i)$, then the following properties hold for $\rho_i={{\delta_i}/({\eps+2})}$:
{\small
\begin{enumerate}[label=(\roman*')]
    \item $\|\sigma_i^*(\tau^i_{k})-\sigma_i^*(\tau^i_{k-1})\|\leq \rho_i$, $\forall$ $1\leq k\leq \ell_i$;
    \item $\|z_k^i-\sigma^*_i(\tau_k^i)\|\leq \beta_i$, $\forall$ $0\leq k\leq \ell_i$;
    \item $\|z-\sigma^*_i(\tau_k^i)\|\!\leq\! \beta_i +\rho_i\!\leq\! \delta_i/2, \forall 1\!\leq\! k\!\leq\! \ell_i,z \!\in\! \CH\left(z_{k-1}^i,z_k^i\right)$.
\end{enumerate}}

To exploit those properties we define a list $\L$ of triplets of the following form:
$\L:=\bigcup_{i=1}^R\bigcup_{j=1}^{\ell_i}\{(i,\tau^i_j,z_{j}^i)\}$.
That is, $\L$ contains for every robot $1\leq i\leq R$, $\ell_i$ triplets of the form $(i,\tau^i_j,z_{j}^i)$, where $\tau^i_j\in T_i$ is a timestamp, and $z_{j}^i\in Z_i$ is the corresponding configuration. Additionally, define $\L^o$ to be a permutation of $\L$, where the triplets are ordered according to the timestamp. 
That is,
$\L^o:=\{(i_1,\tau_{i_1},z_{i_1}),\ldots, (i_\ell,\tau_{i_\ell},z_{i_\ell})\}$,
where $l=\sum_{i=1}^R \ell_i$, $(i_j,\tau_{i_j},z_{i_j})\in \L$ for every $1\leq j\leq l$, and $\tau_{i_j}\leq \tau_{i_{j+1}}$ for every $1\leq j\leq \ell-1$. 

We describe an iterative scheme that uses $\L^o$ for generating a sequence of composite vertices $V_0, V_1,\ldots, V_\ell\in \widehat{V},$ such that $(V_j,V_{j+1})\in \widehat{E}$. First, define $V_0=\vec{x}^s$. Next, given that $V_j=(v_{j_1},\ldots,v_{j_R})$ has already been defined for some $1\leq j\leq \ell-1$, set $V_{j+1}=(v_{(j+1)_1},\ldots,v_{(j+1)_R})$, where $v_{(j+1)_{i_{j+1}}}:=z_{i_{j+1}}$, and $v_{(j+1)_{i'}}=v_{j_{i'}}$ for every $i'\neq i_{j+1}$. Namely, when transitioning from $V_j$ to $V_{j+1}$ all the robots stay put, besides robot $i_{j+1}$ whose timestamp appeared in item $j+1$ of $\L^o$. {(An example for $\L$ and $\L^o$ can be found in the extended version~\indirectcite{DayanEA2020}.)}

To complete the proof, we first note that $(V_j,V_{j+1})\in \widehat{E}$ for every  $1\leq j\leq \ell-1$. This follows from the values of connection radii $\vec{r}$ we picked, and the fact that $\bar{\sigma_i}$ is obstacle collision-free, guaranteeing that $\{v^i_j,v^i_{j+1}\}\in E_i$ for every robot $i$, where $(v_1,\ldots,v_R):=V_j,(v'_1,\ldots,v'_R):=V_{j+1}$.


It remains to prove that the robots do not collide with one another while they move along the path represented by any such edge $(V_j,V_{j+1})$.  
First, recall that there is exactly one robot moving for $(V_j,V_{j+1})$. In particular, this is the robot whose index is $i_j$, which is the first value of the $j$th triplet  $(i_j,\tau_{i_j},z_{i_j})\in \L^o$. That is $v_{i_j}\neq v'_{i_j}$, whereas for any other robot $k\neq i_j$ it holds that $v_{k}= v'_{k}$. Note that given two stationary robots $k_1,k_2\neq i_j$, and assuming that they did not collide along the previous edge $(V_{j-1},V_{j})$, they will not collide with each other for $(V_j,V_{j+1})$ as well. 

Next, we prove that robot $i_j$ does not collide with any stationary robot $k\neq i_j$, while moving from $v_{i_j}$ to $v'_{i_j}$. By definition of $V_0,\ldots,V_\ell$, robot $k$ is located at $v_{k}=z^{k}_{j'}\in Z_{k}$ such that $\tau^{k}_{j'}\leq \tau_{i_j}\leq \tau^{k}_{j'+1}$, for some $0\leq j'\leq \ell_{k}-1$.
Next, recall that $\sigma_{k}^*,\sigma_{i_j}^*$ are $\delta_{k}$- and $\delta_{i_j}$-clear, respectively. Thus, it suffices to prove that $\|\sigma_{i_j}^*(\tau_{i_j})-p\|\leq \delta_{i_j}/2$ for every $p\in \CH\left(v_{i_j}, v'_{i_j}\right)$ and $\|\sigma_{k}^*(\tau_{i_j})-z^{k}_{j'}\|\leq \delta_k/2$.

Indeed, Property~(iii') implies that for $p\in \CH\left(v_{i_j}, v'_{i_j}\right)$ we have that 
$\norm{p-\sigma_{i_j}^*(\tau_{i_j})}\leq \beta_{i_j} +\rho_{i_j}\leq \delta_{i_j}/2$.
Using the triangle inequality and Properties~(i') and (ii') we have that
\begin{align*}\label{eq:i_dist_2}
\norm{z^{k}_{j'}-\sigma_{k}^*(\tau_{i_j})}& \leq 
\|z^{k}_{j'}-\sigma_{k}^*(\tau^{k}_{j'})\|+\|\sigma_{k}^*(\tau^{k}_{j'})-\sigma_{k}^*(\tau_{i_j})\| \\
& \leq \beta_{k} + \rho_{k}\leq \delta_{k}/2,
\end{align*}
which concludes the proof sketch. \qed


\fi
We emphasize that even though our proof finds a trajectory which uses edges where a single robot moves at a time, the solution that would be found in practice is not necessarily restricted to individual-robot moves. This is due to the fact the tensor roadmap also includes edges representing simultaneous motion of several robots.

\subsection{Discussion}
Theorem~\ref{thm:multi_robot} implies that if a given MRMP planner is guaranteed to find an optimal collision-free path over a TR, then it is also guaranteed to find a $(1+\eps)$-approximation of the optimal $\vec{\delta}$-clear trajectory (in the continuous domain), when each \prm graph is constructed using the sample set $\X_{\omega\delta_i,\delta_i}$ and radius $r_i={{\delta_i(\eps+1)}/{(\eps+2)}}$ for each robot $1\leq i\leq R$, where $\omega={{\eps}/(2(\eps+2))}$.

This statement applies, for instance, to \mstar and \mccbs. The former can be viewed as a refined version of~\astar for searching the TR. The latter implicitly explores the TR by incrementally considering combinations of single-robot trajectories induced by the \prm graphs, until a combination that yields a collision-free composite trajectory is found. The \drrtstar planner implicitly explores the TR via an \rrt-style random exploration using a secondary sampling procedure which is employed after the \prm graphs are constructed. Due to this additional randomization step, \drrtstar achieves a $(1+\eps)$-approximation only  \emph{asymptotically}. Nevertheless, our analysis simplifies the usage of this algorithm by derandomizing the construction of \prm graphs used in \drrtstar.

Finally, we provide an example for the number of samples that should be used according to Theorem~\ref{thm:multi_robot} within each \prm roadmap for specific parameters. In particular, we report in
\ifextendedv
Table~\ref{tbl:sampcomp_examples}
\else
the table below
\fi
the value $|\X_{\omega \delta_i,\delta_i}|$, for varying values of the stretch parameter $\eps$, dimension $d$, and clearance vector $(\delta_1,\ldots,\delta_R)$, where $\delta_i=0.1$ for all robots. 
While these values are quite large, particularly for higher dimensions, we emphasize that we do not expect our bounds to be tight, as observed in Section~\ref{sec:experiments}. This suggests that smaller sample sets are sufficient for $(\eps, \vec{\delta})$-completeness, which we aim to explore in future research (see Section~\ref{sec:conclusion}).

\ifextendedv
\def\tablecap{\caption{Multi-robot sample complexity example for $\delta=0.1$ with varying values for the dimension of each robot, $d$, and the required maximal stretch factor, $\eps$.}\label{tbl:sampcomp_examples}}
\else
\def\tablecap{}
\fi
\newcommand{\cdotsmall}{\!\cdot\!}
\begin{table}[H]
\scriptsize 
\centerline{
\begin{tabular}{|c||c|c|c|c|c|}
\hline
 $d$ & $\eps=\infty$      & $\eps=5$       & $\eps=1$        & $\eps=0.5$      & $\eps=0.25$     \\
\hline \hline
$2$ & $181$      & $313$     & $1201$     & $3281$     & $1.05\cdotsmall10^4$    \\
\hline
$3$ & $2331$     & $6119$    & $5.68\cdotsmall10^4$    & $2.43\cdotsmall10^5$   & $1.43\cdotsmall10^6$  \\
\hline
$4$ & $4.93\cdotsmall10^4$    & $1.49\cdotsmall10^5$  & $2.83\cdotsmall10^6$  & $2.19\cdotsmall10^7$ & $2.21\cdotsmall 10^8$ \\
\hline
$5$ & $9.09\cdotsmall10^5$   & $4.37\cdotsmall10^6$ & $1.69\cdotsmall 10^8$ & $2.23\cdotsmall 10^9$ & $3.94\cdotsmall 10^{10}$ \\
\hline
$6$ & $1.89\cdotsmall10^7$ & $1.5\cdotsmall 10^8$ & $1.18\cdotsmall 10^{10}$ & $2.46\cdotsmall 10^{11}$ & $7.82\cdotsmall 10^{12}$\\
\hline
\end{tabular}
}
\ifextendedv
\else
\fi
\tablecap
\end{table}

\section{Experimental results}\label{sec:experiments}
\newcommand{\sfwidth}{0.5\columnwidth}
\ifextendedv
\newcommand{\siwidth}{0.99\linewidth}
\else
\newcommand{\siwidth}{0.91\linewidth}
\fi
\begin{figure*}
    \centering
    \begin{subfigure}[t]{\sfwidth}
        \centering
        \includegraphics[trim=0 5 440 10, clip, width=\siwidth]{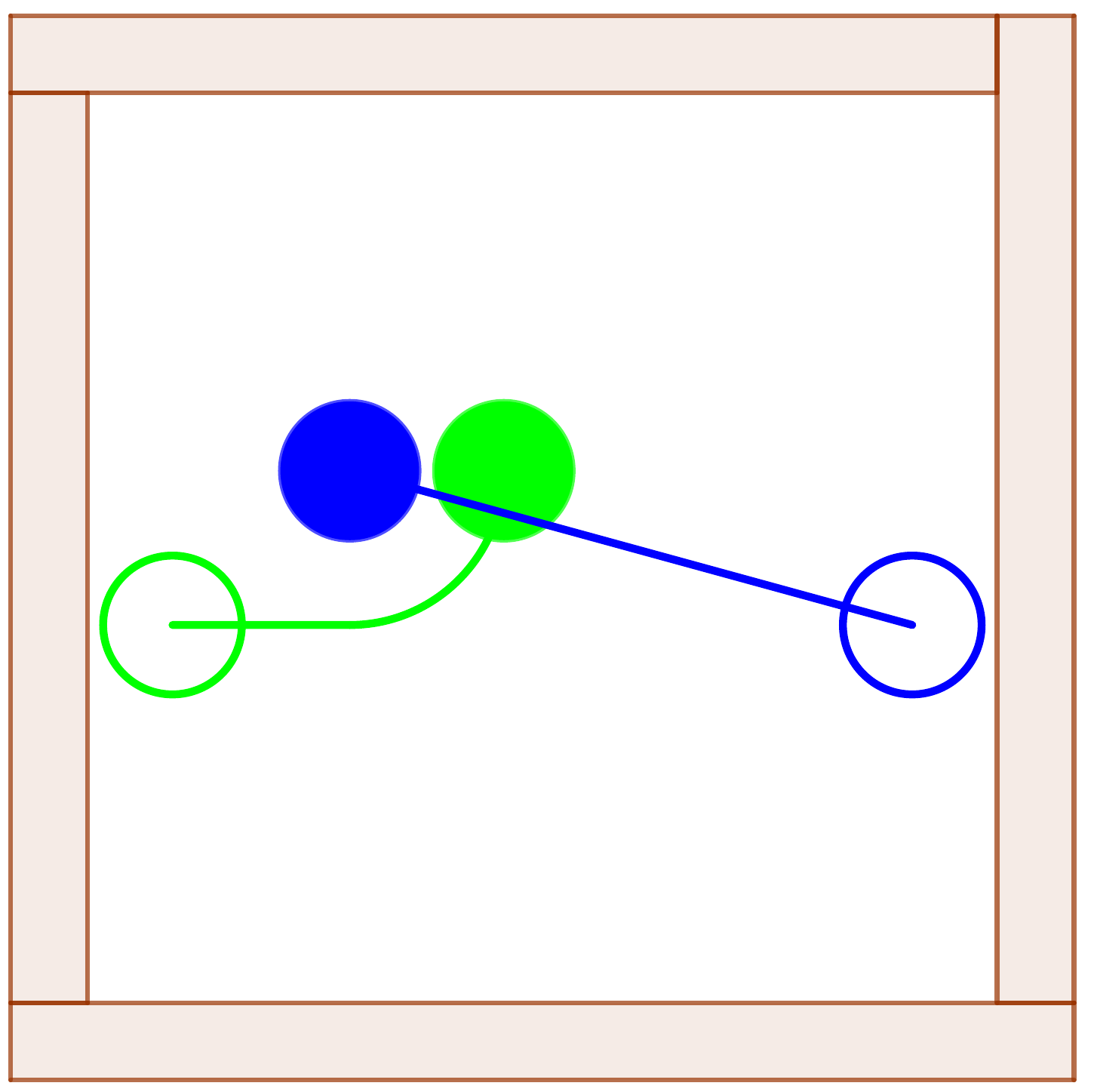}
    \end{subfigure}
    \begin{subfigure}[t]{\sfwidth}
        \centering
        \includegraphics[trim=0 20 470 10, clip, width=\siwidth]{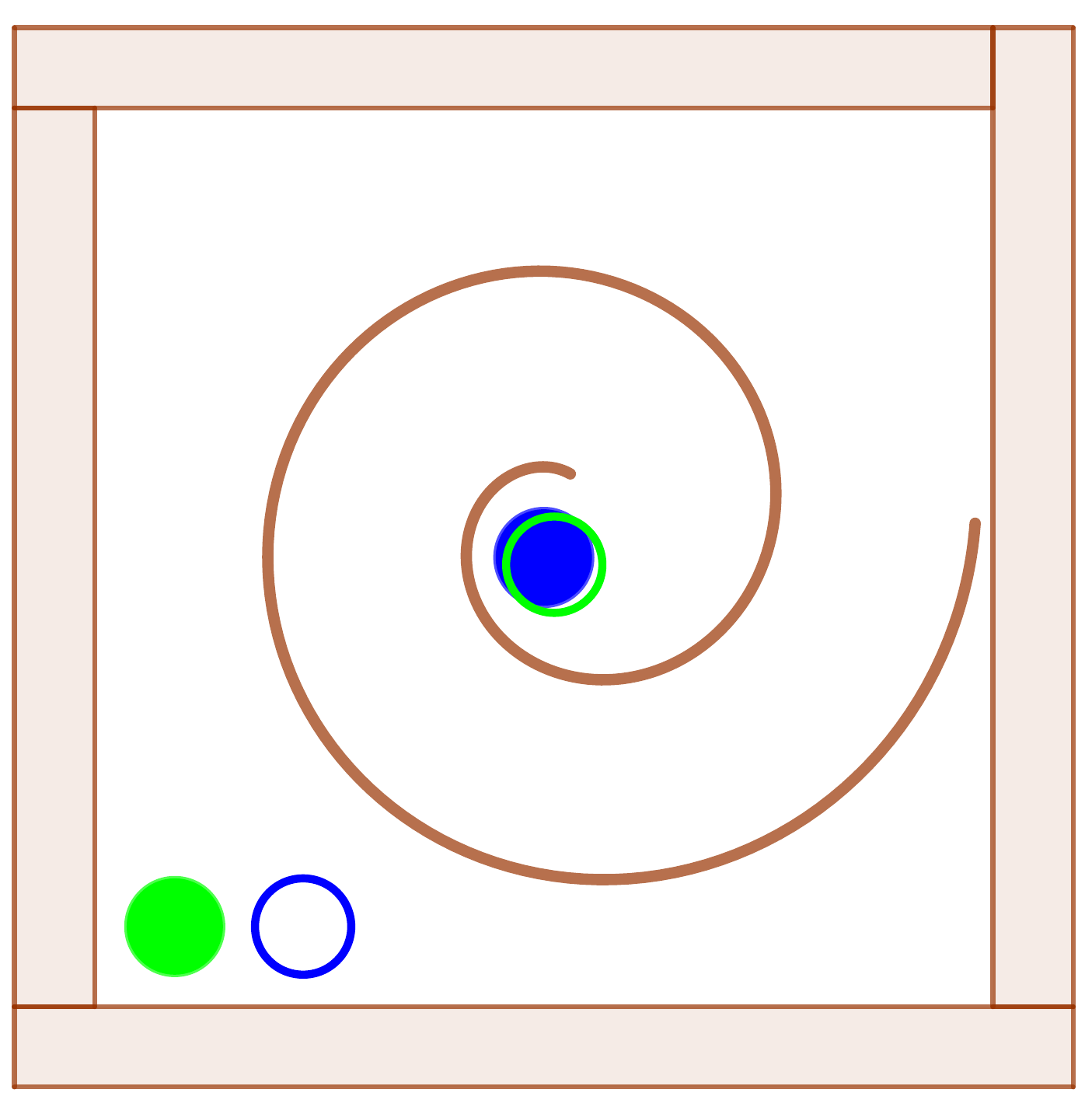}
    \end{subfigure}
    \begin{subfigure}[t]{\sfwidth}
        \centering
        \includegraphics[trim=45 5 400 10, clip,width=\siwidth]{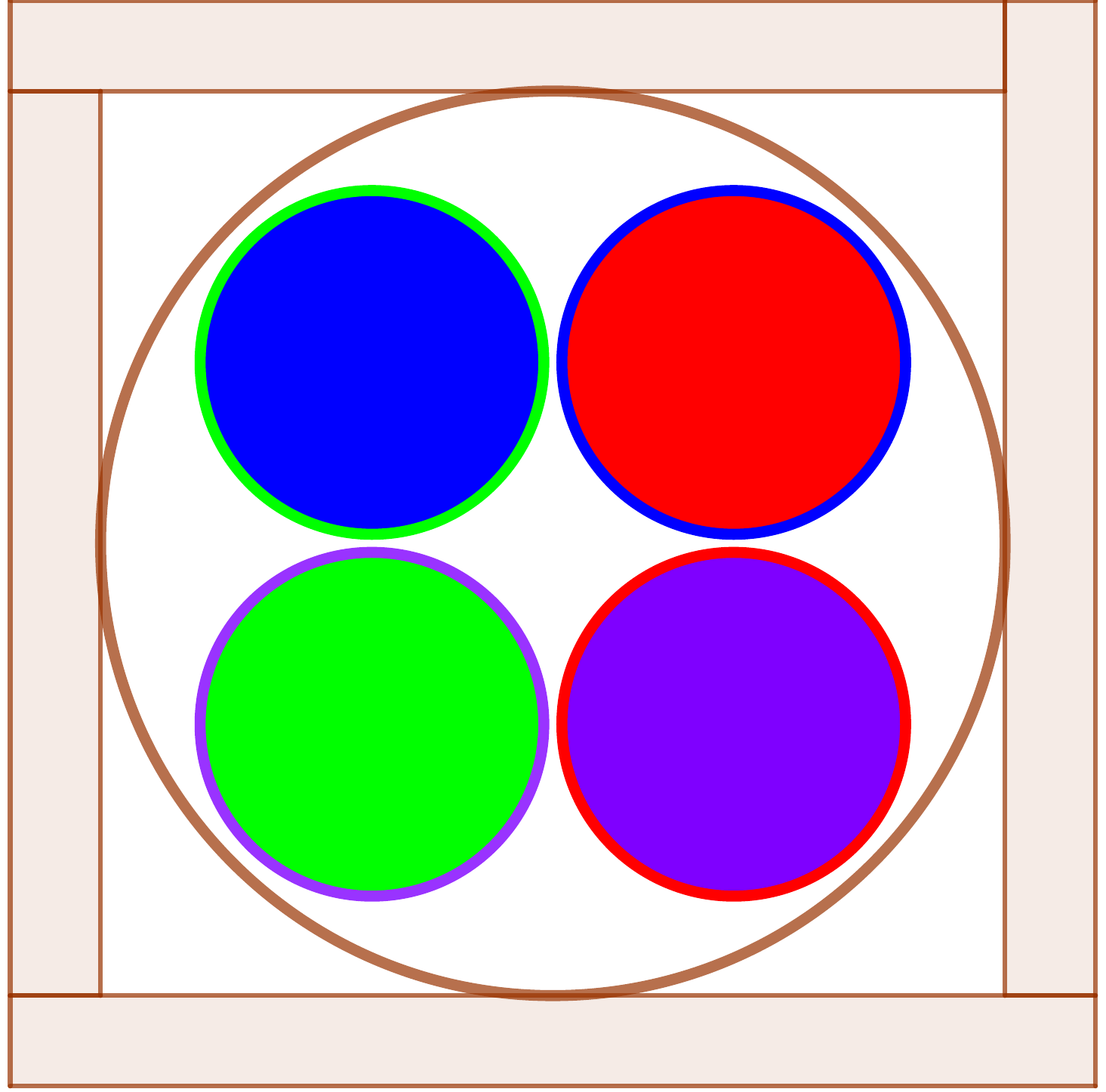}
    \end{subfigure}
    \begin{subfigure}[t]{\sfwidth}
        \centering
        \includegraphics[trim=0 5 420 0, clip, width=\siwidth]{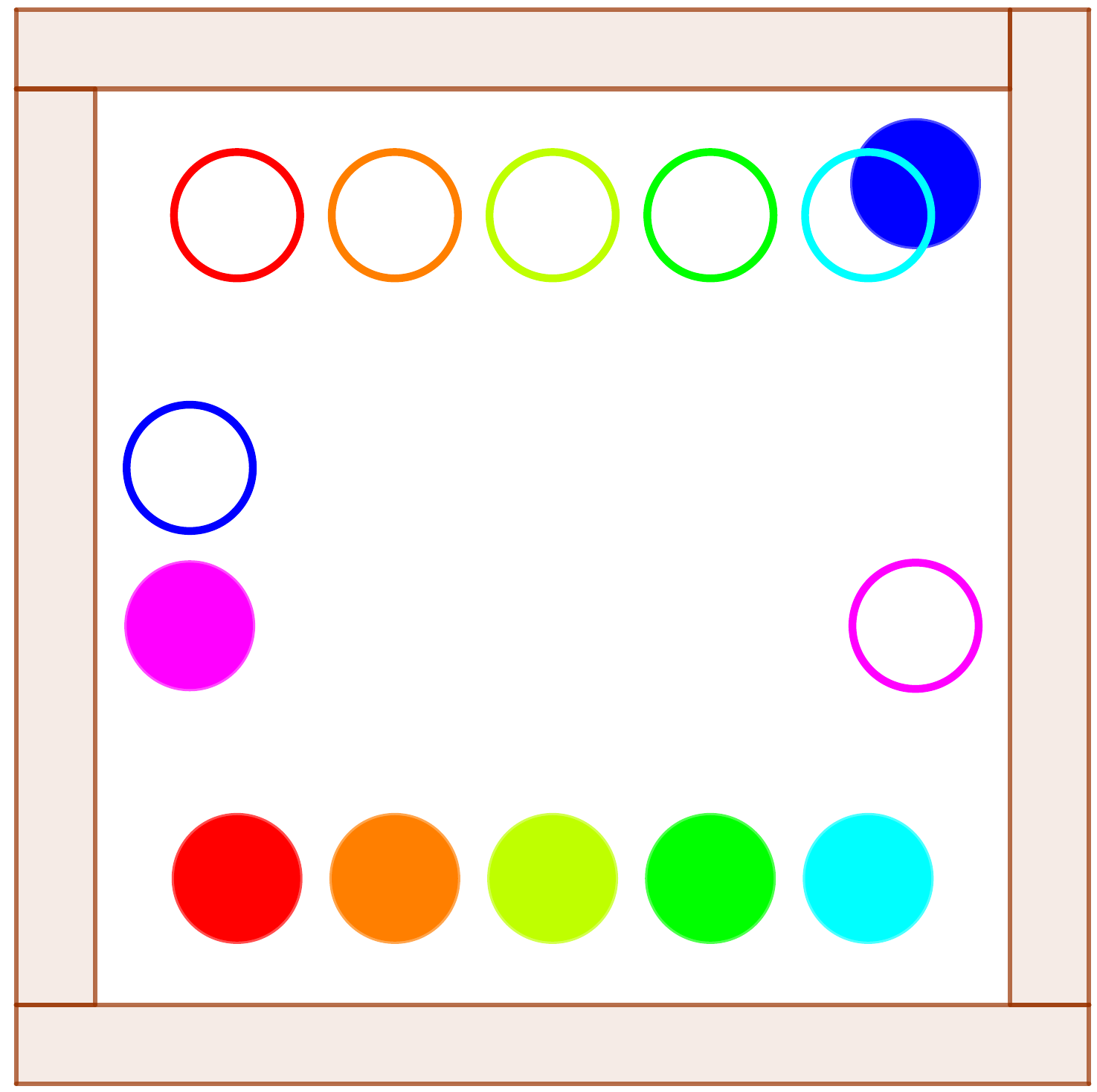}
    \end{subfigure}
    \caption{\sf{Test scenarios for multiple disc robots, where a circle and a disc of the same color represent the start and goal positions, respectively, of a robot. [Left] A $2$-robot obstacle-free scenario. The optimal trajectories, which were derived in~\directcite{kirkpatrick2016characterizing}, are drawn for each robot. The robots' radius is $0.09$ and the static clearance $\mu$ is equal to $0.02$. The latter value measures 
    the minimum over (i) the inter-robot distances at the initial placement, (ii) the distance of each robot at the initial placement from the obstacles, and (iii,iv) the respective quantities for the target placement. [Center-left] A $2$-robot scenario with a spiral obstacle with robots' radius $0.06$ and  static clearance $\mu=0.04$. {[Center-right] Four robots tightly placed within a circular barrier,  using robot radius $0.19$ and static clearance $\mu=0.02$.} [Right] A $7$-robot scenario with robots' radius  $0.08$ and the static clearance  $\mu=0.04$. \label{fig:scenarios}}}
    \ifextendedv
    \vspace{-5pt}
    \else
    \vspace{-20pt}
    \fi
\end{figure*}
We provide experimental results to support our theoretical findings, focusing on the case of multiple disc robots operating in a planar domain. We study the effect that the stretch parameter $\eps$, which determines the structure of the underlying \prm graphs within the tensor product graph $\widehat{G}$ in Theorem~\ref{thm:multi_robot}, has on the actual solution quality. We observe that the resulting approximation factor is in fact significantly lower than $1+\eps$. This suggests that our analysis can be further refined to support even sampling distributions with fewer samples (see Section~\ref{sec:conclusion}).

Concerning the design of the experiments, we were faced with two challenges. 
First, current MRMP algorithms~\indirectcite{LiEA2019,WagCho15,DBLP:journals/arobots/ShomeSDHB20} (including our own) are limited in their ability to cope with the large tensor roadmaps required to guarantee path quality according to our analysis; this curbed our ability to go to large numbers of robots in the experiments, and strongly motivates further improving of such algorithms. Secondly, in order to provide meaningful experimental reports, we need yardsticks to compare to; the problem is that optimal MRMP algorithms are not known to be tractable even for the simple case of two unit disc robots moving amid obstacles in the plane, and it is highly non-trivial to calculate optimal solutions when coordination is required.  We explain below how we overcame this latter impediment. 

\subsection{Scenarios}
The scenarios are illustrated in Figure~\ref{fig:scenarios}. The first scenario, [Left], consists of two robots in an obstacle-free environment. We use this simple example to benchmark our solution against an optimal $\delta$-clear solution using a recent work that provides characterization of optimal trajectories for two disc robots \emph{in the absence of obstacles}~\indirectcite{kirkpatrick2016characterizing}. The [Center-left] scenario uses a more complicated workspace topology, which potentially requires more samples to achieve a near-optimal solution, since multiple straight-line segments are required to approximate every single-robot trajectory. {The third scenario, [Center-right], which consists of four robots, aims to test our theory for a tight setting, which is obtained by tightly packing the four robots inside a circular barrier.} The fourth scenario, [Right], which consists of seven robots, aims to test our theory for a larger number of robots, where additional coordination is required to achieve a solution. 

\subsection{Results}
To test our theory on the aforementioned scenarios, we constructed \prm graphs using our staggered grid as the sample set, corresponding to different values of the stretch parameter $\eps$ and clearance $\delta$ (for simplicity, we use the same value of $\delta$ across all robots, i.e., $\vec{\delta}:=\{\delta,\ldots,\delta\}$). 
For all the scenarios, we set the value $\delta$ to be equal to the \emph{static clearance} $\mu$ (see caption of Figure~\ref{fig:scenarios}). 
We set the stretch parameter $\eps$ to different values in the range $[0.75,\infty)$. We then use \astar-search for the first three scenarios, and \mccbs for the seven-robot scenario, to obtain the best solution from the resulting tensor roadmap. 

\ifextendedv
\newcommand{\expreswidth}{\linewidth}
\else
\newcommand{\expreswidth}{0.9\linewidth}
\fi

\begin{figure}
  \centering
    \includegraphics[trim=0 10 10 25, clip, width=\expreswidth]{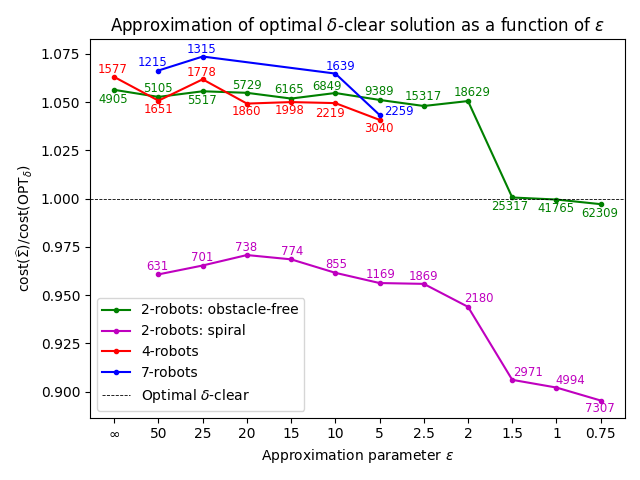}
  \caption{\sf{We report for each of the four scenarios its approximation factors, which are represented by the ratio between the cost of the solution obtained from the tensor roadmap and $\textup{OPT}_\delta$. Notice that the tensor-roadmap solution is not necessarily $\delta$-clear and thus can be of lower cost than $\textup{OPT}_\delta$, which explains why two of the plots get approximation factors smaller than $1$. Next to the points in each plot is the number of collision-free samples in the \prm graph of an individual robot. Due to the prohibitive running times incurred by \mccbs on the $7$-robot scenario {and by our Python implementations of \astar on the $4$-robot scenario}, we report the solution quality for these scenarios only for a subset of  values of $\eps$.
  \label{fig:exp_res}}}
\end{figure}

The results are reported in Figure~\ref{fig:exp_res}, where we plot the approximation ratio obtained using the staggered grid $\X_{\omega\delta,\delta}$ set to guarantee an approximation factor of at most $1+\eps$ (see Theorem~\ref{thm:multi_robot}). The reported approximation factor represents the ratio between the cost of the best solution obtained from the tensor roadmap, denoted by $\widehat{\Sigma}$, 
and the optimal $\delta$-clear solution whose cost is denoted by $\textup{OPT}_\delta$.  For the first scenario we obtain the value $\textup{OPT}_\delta$ using~\directcite{kirkpatrick2016characterizing}, as we reported earlier. For the [Center-left] scenario, $\textup{OPT}_\delta$ is equal to 
the sum of the shortest $\delta$-clear trajectories for the two individual robots, as one of the robots can move after the other finishes its motion without increasing the overall cost of the solution.
{For the [Center-right] scenario, $\textup{OPT}_\delta$ is equal to the perimeter of the circle going though the robot centers at the initial positions (as each robot traverses a quarter of the circle).}
For the [Right] scenario, $\textup{OPT}_\delta$ is equal to the sum of the Euclidean distances between each robot's origin and destination, as  the robots can move one after the other to obtain an optimal solution.

In all the experiments we obtain an approximation factor that is significantly lower than what our worst-case analysis predicts. For instance, already when setting the stretch parameter to $\eps=50$ we obtain an approximation factor of at most $1.075$. Moreover, for $\eps\leq 1.5$ we obtain approximation factors below $1$ in both two-robot scenarios. This is possible as the solution obtained from the tensor roadmap is not necessarily $\delta$-clear, which allows robots to take shortcuts in proximity to obstacles and each other. 
The overall trend of the graphs complies with our expectation: when the stretch parameter decreases, we obtain improved solutions. The biggest improvement (at least for the two-robot scenarios) 
occurs when $\eps$ goes below $2$, due to the rapid increase in the number of samples in $\X_{\omega\delta,\delta}$. In some cases using a smaller number of samples may yield better solutions, as in the 2-robot spiral scenario for stretch factors $\eps=50$ and $\eps=20$. The explanation is that the smaller sample set gets closer to the (approximate) optimal solution by chance. Still, the \emph{worst-case approximation factor} is guaranteed to improve as the size of the staggered grid increases.

\iffuturev

\subsection{Comparing the staggered grid with random sampling}\label{sec:random_vs_sg}
The staggered grid offers good theoretical guarantees on the necessary sample size for obtaining near-optimal solution in the single robot case and consequently in the multi-robot case as well.  
An important question is whether the staggered grid also provides an improvement over the somewhat standard random uniform sampling in practice. In this section we demonstrate that the staggered grid provides a practical improvement over random uniform sampling especially for small sample sets.

Next, we show how the staggered grid outperforms random sampling in both of our test scenes with $2$ robots, those depicted in the [Left] and [Center-Left] sub-figures of Figure~\ref{fig:scenarios}. The testing scheme we use is the following: for varying values of $\eps$, we calculate the number of points in the staggered grid, and the connection radius prescribed by Theorem~\ref{thm:multi_robot}. We then randomly choose that same number of points uniformly and use the same connection radius for solving the scene. This process is repeated $10$ times to account for randomness. 

We first look at the success rate. Using the staggered grid we are guaranteed to have a perfect success rate as long as there exists a path with  $\delta$-clearance, for a given $\delta$. For a random set of points this is no longer the case---see the success rate reported in Figure~\ref{fig:rnd_vs_sg_success_rate}. Note that as $\eps$ decreases (and the number of samples increases) the success rate improves. We also observe that the scene with obstacles has a very low success rate for small values of $\eps$. 

\begin{figure}[H]
  \centering
    \includegraphics[trim=0 12 10 25, clip, width=\expreswidth]{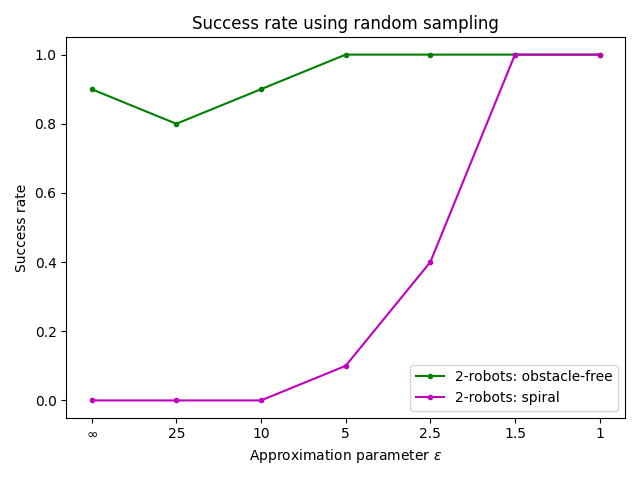}
  \caption{\sf{The success rate in finding a valid solution for the MRMP problem, using a sample set of random points (uniformly distributed in $[\delta,1-\delta]^2$). The success rate is measured with respect to $10$ runs. Note that the staggered grid achieves a success rate of 100\%. 
  \label{fig:rnd_vs_sg_success_rate}}}
\end{figure}

Next we consider the cost of the resulting trajectory. We compare the average cost of the path generated using the randomly sampled points with the cost of the path generated using the staggered grid. As can be observed in Figure~\ref{fig:rnd_vs_sg_cost}, for small sets of points (large $\eps$ values) the staggered grid outperforms an average uniformly chosen random set of sample points. This advantage decreases for smaller $\eps$ values. 

To summarize, our results emphasize the strength of the staggered grid for large values of~$\eps$, when compared with the standard random sampling both in terms of success rate and solution quality. 

To conclude this section, we briefly discuss how to choose $\eps$ and $\delta$ in practice. Considering that the number of samples rapidly increases as $\eps$ and $\delta$ decrease, we suggest starting with a desired value of $\delta$ but setting $\eps$ to be infinity. In our experience, and as is evident in the experimental results, even large values of $\eps$ often yield low stretch.  If time permits, one can decrease $\eps$ (and rerun the algorithm). In case that there is no desired $\delta$ that is prescribed from the application, it can also be set initially to $\infty$, and then gradually reduced. We leave the study of more sophisticated parameter tuning methods for future research.

\begin{figure}[H]

  \centering
  \begin{subfigure}[b]{0.49\linewidth}
  \centering
    \includegraphics[trim=10 17 10 10, clip, width=1\textwidth]{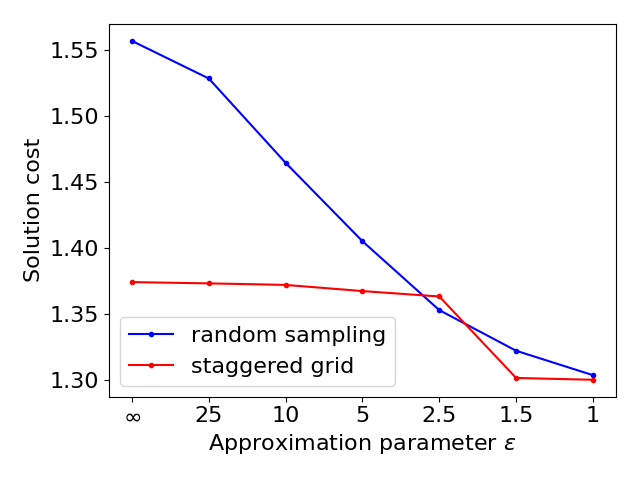}
    \caption{obstacle-free}
  \end{subfigure}
  \begin{subfigure}[b]{0.49\linewidth}
  \centering
    \includegraphics[trim=10 17 10 10, clip, width=1\textwidth]{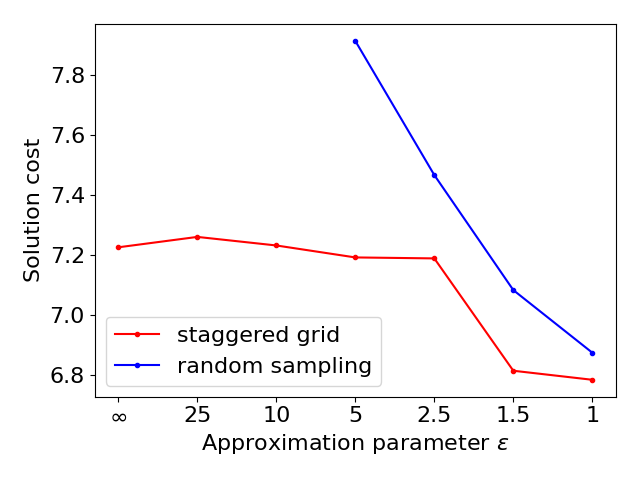}
    \caption{spiral}
  \end{subfigure}

  \caption{\sf{The average cost of successful runs for the MRMP problem, for the two scenarios of two robots each. We show the average cost using randomly sampled points (blue line) and the cost using the staggered grid (red line).
  \label{fig:rnd_vs_sg_cost}}}
\end{figure}

\fi

\section{Discussion and future work}\label{sec:conclusion}
We developed sufficient theoretical conditions for finite-sample near-optimality of the tensor roadmap, which is an underlying structure in several sampling-based algorithms for MRMP.
We also presented a new sampling scheme, termed the staggered grid, for near-optimal motion planing for individual robots, which requires fewer samples than previous work. 

Our work raises interesting questions for further investigation both in practice and theory.
The scalability issues we encountered when testing our theoretical finding on \mccbs---a continuous extension of a state-of-the-art method for MAPF---motivate the study of more effective methods for exploring tensor roadmaps introduced by large \prm graphs. On the positive side, the scenarios that we did manage to solve suggest that near-optimality can be achieved with smaller sample sets than our theory prescribes. This motivates the development of even more compact sampling-distributions for the single-robot case, and refining our proof technique for the multi-robot case (Theorem~\ref{thm:multi_robot}). In this context, we mention that there exists a sample distribution by~\cite{coxeter1959covering} that requires fewer samples than the staggered grid, albeit it only applies currently for $d=2$, and it is unclear whether it can be extended to higher dimensions $d\geq 3$. 

Additional practical enhancements to our approach could be the incorporation of learning-based sample distributions, which could  soften the blow of the curse of dimensionality~\cite{DBLP:conf/icra/IchterHP18}. It would also be interesting to consider time-based distance functions (e.g., minimize total travel time of robots) by considering \prm graphs that cover a time-parametrized configuration space.

{\small 
\section*{Acknowledgments}
The authors thank Matt Tsao for fruitful discussions, Nir Goren for providing a Python motion-planning framework implementation, and Jiaoyang Li for providing an implementation for \mccbs. }

{\small 
\section*{Funding}
Work by D.\ Dayan and D.\ Halperin was supported in part by the Israel Science Foundation (grant no.~1736/19), by the US NSF/US-Israel BSF (grant no. 2019754), by the Isarel Ministry of Science and Technology (grant no.~103129), by the Blavatnik Computer Science Research Fund, and by the Yandex Machine Learning Initiative for Machine Learning at Tel Aviv University. 

\noindent
Work by K.\ Solovey and M.\ Pavone  was supported in part by the Toyota Research Institute (TRI) and the Center for Automotive Research at Stanford (CARS).
}

\ifijrrv
\bibliographystyle{SageH}
\else
\bibliographystyle{IEEEtran}
\fi
\bibliography{enmop}

\begin{thebibliography}{10}
\providecommand{\url}[1]{#1}
\csname url@rmstyle\endcsname
\providecommand{\newblock}{\relax}
\providecommand{\bibinfo}[2]{#2}
\providecommand\BIBentrySTDinterwordspacing{\spaceskip=0pt\relax}
\providecommand\BIBentryALTinterwordstretchfactor{4}
\providecommand\BIBentryALTinterwordspacing{\spaceskip=\fontdimen2\font plus
\BIBentryALTinterwordstretchfactor\fontdimen3\font minus
  \fontdimen4\font\relax}
\providecommand\BIBforeignlanguage[2]{{%
\expandafter\ifx\csname l@#1\endcsname\relax
\typeout{** WARNING: IEEEtran.bst: No hyphenation pattern has been}%
\typeout{** loaded for the language `#1'. Using the pattern for}%
\typeout{** the default language instead.}%
\else
\language=\csname l@#1\endcsname
\fi
#2}}

\bibitem{SolHal16j}
K.~Solovey and D.~Halperin, ``On the hardness of unlabeled multi-robot motion
  planning,'' \emph{International Journal of Robotic Research}, vol.~35,
  no.~14, pp. 1750--1759, 2016.

\bibitem{sy-snp84}
P.~G. Spirakis and C.-K. Yap, ``Strong {NP}-hardness of moving many discs,''
  \emph{Information Processing Letters}, vol.~19, no.~1, pp. 55--59, 1984.

\bibitem{hss-cmpmio}
J.~E. Hopcroft, J.~T. Schwartz, and M.~Sharir, ``On the complexity of motion
  planning for multiple independent objects; {PSPACE}-hardness of the
  ``{W}arehouseman's problem'','' \emph{International Journal of Robotics
  Research}, vol.~3, no.~4, pp. 76--88, 1984.

\bibitem{johnson2018relationship}
J.~K. Johnson, ``On the relationship between dynamics and complexity in
  multi-agent collision avoidance,'' \emph{Autonomous Robots}, vol.~42, no.~7,
  pp. 1389--1404, 2018.

\bibitem{long2018towards}
P.~Long, T.~Fan, X.~Liao, W.~Liu, H.~Zhang, and J.~Pan, ``Towards optimally
  decentralized multi-robot collision avoidance via deep reinforcement
  learning,'' in \emph{2018 IEEE International Conference on Robotics and
  Automation (ICRA)}.\hskip 1em plus 0.5em minus 0.4em\relax IEEE, 2018, pp.
  6252--6259.

\bibitem{yan2013survey}
Z.~Yan, N.~Jouandeau, and A.~A. Cherif, ``A survey and analysis of multi-robot
  coordination,'' \emph{International Journal of Advanced Robotic Systems},
  vol.~10, no.~12, p. 399, 2013.

\bibitem{5663672}
A.~Bicchi, A.~Fagiolini, and L.~Pallottino, ``Towards a society of robots,''
  \emph{IEEE Robotics Automation Magazine}, vol.~17, no.~4, pp. 26--36, 2010.

\bibitem{DBLP:journals/corr/abs-2103-11067}
F.~Rossi, S.~Bandyopadhyay, M.~T. Wolf, and M.~Pavone, ``Multi-agent algorithms
  for collective behavior: {A} structural and application-focused atlas,''
  \emph{CoRR}, vol. abs/2103.11067, 2021.

\bibitem{DBLP:journals/trob/RufliAS13}
M.~Rufli, J.~Alonso{-}Mora, and R.~Siegwart, ``Reciprocal collision avoidance
  with motion continuity constraints,'' \emph{{IEEE} Trans. Robotics}, vol.~29,
  no.~4, pp. 899--912, 2013.

\bibitem{han2019effective}
S.~D. Han and J.~Yu, ``Effective heuristics for multi-robot path planning in
  warehouse environments,'' in \emph{2019 International Symposium on
  Multi-Robot and Multi-Agent Systems (MRS)}.\hskip 1em plus 0.5em minus
  0.4em\relax IEEE, 2019, pp. 10--12.

\bibitem{WagCho15}
G.~Wagner and H.~Choset, ``Subdimensional expansion for multirobot path
  planning,'' \emph{Artificial Intelligence}, vol. 219, pp. 1--24, 2015.

\bibitem{sharon2015conflict}
G.~Sharon, R.~Stern, A.~Felner, and N.~R. Sturtevant, ``Conflict-based search
  for optimal multi-agent pathfinding,'' \emph{Artificial Intelligence}, vol.
  219, pp. 40--66, 2015.

\bibitem{MaEA2019}
H.~Ma, W.~H{\"{o}}nig, T.~K.~S. Kumar, N.~Ayanian, and S.~Koenig, ``Lifelong
  path planning with kinematic constraints for multi-agent pickup and
  delivery,'' in \emph{Conference on Artificial Intelligence, AAAI}, 2019, pp.
  7651--7658.

\bibitem{ChoudhuryEA2020}
S.~Choudhury, K.~Solovey, M.~J. Kochenderfer, and M.~Pavone, ``Efficient
  large-scale multi-drone delivery using transit networks,'' in
  \emph{International Conference on Robotics and Automation ({ICRA})}, 2020,
  pp. 4543--4550.

\bibitem{HonigEA2017}
W.~H{\"{o}}nig, T.~K.~S. Kumar, L.~Cohen, H.~Ma, H.~Xu, N.~Ayanian, and
  S.~Koenig, ``Summary: Multi-agent path finding with kinematic constraints,''
  in \emph{International Joint Conference on Artificial Intelligence}, 2017,
  pp. 4869--4873.

\bibitem{LiEA2019}
J.~Li, P.~Surynek, A.~Felner, H.~Ma, T.~K.~S. Kumar, and S.~Koenig,
  ``Multi-agent path finding for large agents,'' in \emph{Conference on
  Artificial Intelligence {AAAI}}, 2019, pp. 7627--7634.

\bibitem{IrivingEA2019}
I.~Solis, R.~Sandstrom, J.~Motes, and N.~M. Amato, ``Roadmap-optimal
  multi-robot motion planning using conflict-based search,'' \emph{CoRR}, vol.
  abs/1909.13352, 2019.

\bibitem{AdlETAL15}
A.~Adler, M.~de~Berg, D.~Halperin, and K.~Solovey, ``Efficient multi-robot
  motion planning for unlabeled discs in simple polygons,'' \emph{{IEEE} Trans.
  Automation Science and Engineering}, vol.~12, no.~4, pp. 1309--1317, 2015.

\bibitem{SolomonHalperin2018}
I.~Solomon and D.~Halperin, ``Motion planning for multiple unit-ball robots in
  $\mathbb{R}^d$,'' in \emph{Workshop on the Algorithmic Foundations of
  Robotics, {WAFR}}, 2018, pp. 799--816.

\bibitem{TurMicKum12}
M.~Turpin, N.~Michael, and V.~Kumar, ``Trajectory planning and assignment in
  multirobot systems,'' in \emph{Workshop on the Algorithmic Foundations of
  Robotics ({WAFR})}, 2012, pp. 175--190.

\bibitem{SolETAL15}
K.~Solovey, J.~Yu, O.~Zamir, and D.~Halperin, ``Motion planning for unlabeled
  discs with optimality guarantees,'' in \emph{Robotics: Science and Systems},
  2015.

\bibitem{PRM.508439}
L.~E. {Kavraki}, P.~{Svestka}, J.~. {Latombe}, and M.~H. {Overmars},
  ``Probabilistic roadmaps for path planning in high-dimensional configuration
  spaces,'' \emph{IEEE Transactions on Robotics and Automation}, vol.~12,
  no.~4, pp. 566--580, 1996.

\bibitem{660866}
L.~E. {Kavraki}, M.~N. {Kolountzakis}, and J.~. {Latombe}, ``Analysis of
  probabilistic roadmaps for path planning,'' \emph{IEEE Transactions on
  Robotics and Automation}, vol.~14, no.~1, pp. 166--171, 1998.

\bibitem{lavalle2006planning}
S.~M. LaValle, \emph{Planning algorithms}.\hskip 1em plus 0.5em minus
  0.4em\relax Cambridge university press, 2006.

\bibitem{KL2000_geom}
J.~J. Kuffner and S.~M. LaValle, ``{RRT}-{C}onnect: An efficient approach to
  single-query path planning,'' in \emph{{IEEE} International Conference on
  Robotics and Automation ({ICRA})}, 2000, pp. 995--1001.

\bibitem{KSLBH18}
M.~Kleinbort, K.~Solovey, Z.~Littlefield, K.~E. Bekris, and D.~Halperin,
  ``Probabilistic completeness of {RRT} for geometric and kinodynamic planning
  with forward propagation,'' \emph{{IEEE} Robotics Autom. Lett.}, vol.~4,
  no.~2, pp. 277--283, 2019.

\bibitem{doi:10.1177/0278364915577958}
L.~Janson, E.~Schmerling, A.~Clark, and M.~Pavone, ``Fast marching tree: A fast
  marching sampling-based method for optimal motion planning in many
  dimensions,'' \emph{The International Journal of Robotics Research}, vol.~34,
  no.~7, pp. 883--921, 2015.

\bibitem{karaman2011sampling}
S.~Karaman and E.~Frazzoli, ``Sampling-based algorithms for optimal motion
  planning,'' \emph{International Journal of Robotics Research}, vol.~30,
  no.~7, pp. 846--894, 2011.

\bibitem{SoloveyEA2020}
K.~Solovey, L.~Janson, E.~Schmerling, E.~Frazzoli, and M.~Pavone, ``Revisiting
  the asymptotic optimality of {RRT*},'' in \emph{{IEEE} International
  Conference on Robotics and Automation ({ICRA})}, 2020, pp. 2189--2195.

\bibitem{HauserZ16}
K.~Hauser and Y.~Zhou, ``Asymptotically optimal planning by feasible
  kinodynamic planning in a state-cost space,'' \emph{{IEEE} Trans. Robotics},
  vol.~32, no.~6, pp. 1431--1443, 2016.

\bibitem{KleinbortEA202}
M.~Kleinbort, E.~Granados, K.~Solovey, R.~Bonalli, K.~E. Bekris, and
  D.~Halperin, ``Refined analysis of asymptotically-optimal kinodynamic
  planning in the state-cost space,'' in \emph{{IEEE} International Conference
  on Robotics and Automation ({ICRA})}, 2020, pp. 6344--6350.

\bibitem{DBLP:journals/ral/PalmieriBMA20}
L.~Palmieri, L.~Bruns, M.~Meurer, and K.~O. Arras, ``Dispertio: Optimal
  sampling for safe deterministic motion planning,'' \emph{{IEEE} Robotics
  Autom. Lett.}, vol.~5, no.~2, pp. 362--368, 2020.

\bibitem{932820}
M.~Branicky, S.~LaValle, K.~Olson, and L.~Yang, ``Quasi-randomized path
  planning,'' in \emph{Proceedings 2001 ICRA. IEEE International Conference on
  Robotics and Automation (Cat. No.01CH37164)}, vol.~2, 2001, pp. 1481--1487
  vol.2.

\bibitem{DBLP:journals/ijrr/JansonIP18}
L.~Janson, B.~Ichter, and M.~Pavone, ``Deterministic sampling-based motion
  planning: Optimality, complexity, and performance,'' \emph{Int. J. Robotics
  Res.}, vol.~37, no.~1, pp. 46--61, 2018.

\bibitem{DBLP:conf/icra/IchterHP18}
B.~Ichter, J.~Harrison, and M.~Pavone, ``Learning sampling distributions for
  robot motion planning,'' in \emph{2018 {IEEE} International Conference on
  Robotics and Automation, {ICRA} 2018, Brisbane, Australia, May 21-25,
  2018}.\hskip 1em plus 0.5em minus 0.4em\relax {IEEE}, 2018, pp. 7087--7094.

\bibitem{TsaoSoloveyETAL2020}
M.~Tsao, K.~Solovey, and M.~Pavone, ``Sample complexity of probabilistic
  roadmaps via $\varepsilon$-nets,'' in \emph{2020 IEEE International
  Conference on Robotics and Automation (ICRA)}.\hskip 1em plus 0.5em minus
  0.4em\relax IEEE, 2020, pp. 2196--2202.

\bibitem{sl-upp}
G.~S{\'{a}}nchez{-}Ante and J.~Latombe, ``Using a {PRM} planner to compare
  centralized and decoupled planning for multi-robot systems,'' in \emph{{IEEE}
  International Conference on Robotics and Automation ({ICRA})}, 2002, pp.
  2112--2119.

\bibitem{SveOve98}
P.~\v{S}vestka and M.~H. Overmars, ``Coordinated path planning for multiple
  robots,'' \emph{Robotics and Autonomous Systems}, vol.~23, no.~3, pp.
  125--152, 1998.

\bibitem{sh-kcolor14}
K.~Solovey and D.~Halperin, ``{\it k}-{C}olor multi-robot motion planning,''
  \emph{International Journal of Robotic Research}, vol.~33, no.~1, pp. 82--97,
  2014.

\bibitem{KronitirisEA2014}
A.~Krontiris, R.~Shome, A.~Dobson, A.~Kimmel, and K.~E. Bekris, ``Rearranging
  similar objects with a manipulator using pebble graphs,'' in \emph{{IEEE-RAS}
  International Conference on Humanoid Robots, Humanoids}, 2014, pp.
  1081--1087.

\bibitem{drrt.doi:10.1177/0278364915615688}
K.~Solovey, O.~Salzman, and D.~Halperin, ``Finding a needle in an exponential
  haystack: Discrete {RRT} for exploration of implicit roadmaps in multi-robot
  motion planning,'' \emph{The International Journal of Robotics Research},
  vol.~35, no.~5, pp. 501--513, 2016.

\bibitem{DBLP:journals/arobots/ShomeSDHB20}
R.~Shome, K.~Solovey, A.~Dobson, D.~Halperin, and K.~E. Bekris,
  ``d{RRT}\({}^{\mbox{*}}\): Scalable and informed asymptotically-optimal
  multi-robot motion planning,'' \emph{Auton. Robots}, vol.~44, no. 3-4, pp.
  443--467, 2020.

\bibitem{DayanSoloveyETAL2021}
D.~Dayan, K.~Solovey, M.~Pavone, and D.~Halperin, ``Near-optimal multi-robot
  motion planning with finite sampling,'' in \emph{{IEEE} International
  Conference on Robotics and Automation ({ICRA})}, 2021.

\bibitem{solovey2020critical}
K.~Solovey and M.~Kleinbort, ``The critical radius in sampling-based motion
  planning,'' \emph{International Journal of Robotics Research}, vol.~39, no.
  2-3, pp. 266--285, 2020.

\bibitem{herbrich2001learning}
R.~Herbrich, \emph{Learning kernel classifiers: theory and algorithms}.\hskip
  1em plus 0.5em minus 0.4em\relax MIT press, 2001.

\bibitem{10.5555/2031398}
R.~Hammack, W.~Imrich, and S.~Klavzar, \emph{Handbook of Product Graphs, Second
  Edition}, 2nd~ed.\hskip 1em plus 0.5em minus 0.4em\relax USA: CRC Press,
  Inc., 2011.

\bibitem{BALAKRISHNAN19981}
R.~Balakrishnan and P.~Paulraja, ``Hamilton cycles in tensor product of
  graphs,'' \emph{Discrete Mathematics}, vol. 186, no.~1, pp. 1 -- 13, 1998.

\bibitem{Shitov_2019}
Shitov, ``Counterexamples to {H}edetniemi’s conjecture,'' \emph{Annals of
  Mathematics}, vol. 190, no.~2, p. 663, 2019.

\bibitem{kirkpatrick2016characterizing}
D.~G. Kirkpatrick and P.~Liu, ``Characterizing minimum-length coordinated
  motions for two discs,'' \emph{CoRR}, vol. abs/1607.04005, 2016.

\bibitem{coxeter1959covering}
H.~Coxeter, L.~Few, and C.~Rogers, ``Covering space with equal spheres,''
  \emph{Mathematika}, vol.~6, no.~2, pp. 147--157, 1959.

\end{thebibliography}


\ifextendstaggered
\newpage
\ifextendstaggered

\section{Staggered grid size analysis}\label{sec:staggered_grid_analysis}
In this section we further analyze the size of the staggered grid for completeness. We separate it to a unique section as the staggered grid is our solution to the general problem of \xcover{$\beta$} of a $d$-dimensional cube. Our benchmark will be the previous results from~\cite[Theorem~3]{TsaoSoloveyETAL2020}.

We now denote the relevant sizes. Let $\gamma, \beta > 0$ be the parameters as described for Lemma~\ref{lem:beta_cover}, and $d \geq 2$ the dimension of the $\gamma$-cube. We denote by $\Y_{\textup{lower}}$ the lower bound for the cube cover problem. Namely, \[\Y_{\textup{lower}} \approx \sqrt{\pi d} \bigpar{ \sqrt{\frac{d}{2 \pi e}} \cdot \frac{ 1-2\gamma}{\beta}}^d\;.\]
We denote by $\Y_{\textup{prev}}$ the previous upper bound for the cube cover problem. Namely, \[\Y_{\textup{prev}} \approx \sqrt{\pi d} \bigpar{ \sqrt{\frac{2d}{ \pi e}} \cdot \frac{ 1-2\gamma+\beta}{\beta}}^d\;.\]
Lastly, we denote by $\Y_{\textup{staggered}}$ the size of the staggered grid as defined in \ref{def:staggered_grid}. Namely, \[\Y_{\textup{staggered}} = \left(\left\lceil\frac{(1-2\gamma)\sqrt{d}}{\sqrt{8}\beta }\right\rceil\right)^d+\left(\left\lceil\frac{(1-2\gamma)\sqrt{d}}{\sqrt{8}\beta }\right\rceil+1\right)^d\;.\]

To analyze our results we use both the asymptotic regime, i.e. $\beta$ and $\gamma$ tend to $0$, and give a detailed table with exact results for selected values.

For the asymptotic regime, we first compare the quantities $\Y_{\textup{staggered}}$ and $\Y_{\textup{lower}}$ for small values of $\beta$ and $\gamma$. We consider the ratio $\frac{\Y_{\textup{staggered}}}{\Y_{\textup{lower}}}$ 
in the asymptotic regime where the values 
$\beta$ and $\gamma$ tend to zero.

\begin{equation}
\begin{aligned}
\label{eq:cube_cover_ratio}
\frac{\Y_{\textup{staggered}}}{\Y_{\textup{lower}}} &=\frac{\left(\left\lceil\frac{(1-2\gamma)\sqrt{d}}{\sqrt{8}\beta }\right\rceil\right)^d+\left(\left\lceil\frac{(1-2\gamma)\sqrt{d}}{\sqrt{8}\beta }\right\rceil+\bm{1}\right)^d}{\sqrt{\pi d} \bigpar{ \sqrt{\frac{d}{2 \pi e}} \cdot \frac{ 1-2\gamma}{\beta}}^d} \\
&\overset{(a)}{\approx} \frac{2\left(\frac{(1-2\gamma)\sqrt{d}}{\sqrt{8}\beta }\right)^d}{\sqrt{\pi d} \bigpar{ \sqrt{\frac{d}{2 \pi e}} \cdot \frac{ 1-2\gamma}{\beta}}^d}\\
&=\frac{2}{\sqrt{\pi d}} \bigpar{\sqrt{\frac{\pi e}{4}}}^d \approx \frac{2}{\sqrt{\pi d}}1.4611
^d\;,
\end{aligned}
\end{equation}
where the approximation in $(a)$ is due to ignoring the highlighted unit value $1$ and the roundups (the influence of both values tends to $0$ as $\beta$ or $\gamma$ tend to $0$). 
This implies that our result yields cover sets that are bigger than the lower bound by an exponential factor in $d$, but this factor is only $1.4611$ compared to an exponential factor of more than $2$ in the previous work. \dror{TODO also note this is not the motion-planning lower bound, the motion-planning bound is worse.}

Still in the asymptotic regime, we also compare the quantities $\Y_{\textup{staggered}}$ and $\Y_{\textup{prev}}$ for small values of $\beta$ and $\gamma$. We consider the ratio $\frac{\Y_{\textup{prev}}}{\Y_{\textup{staggered}}}$, again for the case where the values $\beta$ and $\gamma$ tend to zero.

\begin{equation}
\begin{aligned}
\label{eq:cube_cover_ratio_upper}
\frac{\Y_{\textup{prev}}}{\Y_{\textup{staggered}}} &=\frac{\sqrt{\pi d} \bigpar{ \sqrt{\frac{2d}{ \pi e}} \cdot \frac{ 1-2\gamma+\beta}{\beta}}^d}{\left(\left\lceil\frac{(1-2\gamma)\sqrt{d}}{\sqrt{8}\beta }\right\rceil\right)^d+\left(\left\lceil\frac{(1-2\gamma)\sqrt{d}}{\sqrt{8}\beta }\right\rceil+1\right)^d} \\ & \approx \frac{\sqrt{\pi d}}{2}1.3687^d\;,
\end{aligned}
\end{equation}
we omit the calculations as this are similar to those of Eq~\ref{eq:ratio}. Note that the cube cover improvement ratio is exactly the same as the motion-planning improvement ratio.

In Table~\ref{tbl:staggered_grid_size} we compare our results with the previous lower and upper bounds. The table support the theoretical results from Eq~\ref{eq:cube_cover_ratio} and Eq~\ref{eq:cube_cover_ratio_upper}. Furthermore it shows that for most feasible case ($d\leq 10$) the staggered grid is closer to the lower bound than to the previous upper bound. \dror{we can even say staggered/lower $\leq$ prev/staggered.} This table can be compared with Table~\ref{tbl:samp_comp_comparison} to show the difference between the lower bound to staggered grid in the 2 cases, note that for motion-planning with $\eps=\infty$ (the only one with a lower bound), the relevant cover results are those where $\beta=\gamma$. This leads us to the conclusion that while there are probably better ways to produce a \xcover{$\beta$} these will not be enough to close the gap between the motion-planning lower bound and our results, that is lowering the upper bound for the cube cover problem does not mean tight bound for the motion-planning problem. To close the motion-planning gap one would probably need to also increase the lower bound (from \cite[Theorem~1]{TsaoSoloveyETAL2020}) or find better ways to use the \xcover{$\beta$} (compared to \cite[Theorem~2]{TsaoSoloveyETAL2020} and Lemma~\ref{lem:super_duper}).

\begin{table*}[b]
\centerline{

\begin{tabular}{|c|c||c|c|c||c|c|c||c|c|c|}
\cline{3-11}
                      \multicolumn{2}{c|}{}   & \multicolumn{3}{c||}{$\beta=0.1$}       & \multicolumn{3}{c||}{$\beta=0.05$} & \multicolumn{3}{c|}{$\beta=0.01$} \\ \hline
$\gamma$  &  $d$                                   & $\Y_{\textup{lower}}$       & $\Y_{\textup{staggered}}$      & $\Y_{\textup{prev}}$     & $\Y_{\textup{lower}}$       & $\Y_{\textup{staggered}}$      & $\Y_{\textup{prev}}$     & $\Y_{\textup{lower}}$       & $\Y_{\textup{staggered}}$      & $\Y_{\textup{prev}}$  \\ \hline \hline
\multirow{9}{*}{$0.05$}                  & $2 $  & $26      $ & $61      $ & $128     $ & $104     $ & $181     $ & $460     $ & $2579    $ & $4141    $ & $1.05\cdot10^{4}$ \\ \cline{2-11}
                  & $3 $  & $175     $ & $559     $ & $1910    $ & $1393    $ & $3925    $ & $1.31\cdot10^{4}$ & $1.74\cdot10^{5}$ & $3.61\cdot10^{5}$ & $1.44\cdot10^{6}$ \\ \cline{2-11}
                  & $4 $  & $1330    $ & $6497    $ & $3.24\cdot10^{4}$ & $2.13\cdot10^{4}$ & $6.70\cdot10^{4}$ & $4.23\cdot10^{5}$ & $1.33\cdot10^{7}$ & $3.46\cdot10^{7}$ & $2.22\cdot10^{8}$ \\ \cline{2-11}
                  & $5 $  & $1.12\cdot10^{4}$ & $9.18\cdot10^{4}$ & $6.08\cdot10^{5}$ & $3.59\cdot10^{5}$ & $1.81\cdot10^{6}$ & $1.51\cdot10^{7}$ & $1.12\cdot10^{9}$ & $4.01\cdot10^{9}$ & $3.79\cdot10^{10}$ \\ \cline{2-11}
                  & $6 $  & $1.03\cdot10^{5}$ & $7.94\cdot10^{5}$ & $1.24\cdot10^{7}$ & $6.58\cdot10^{6}$ & $4.09\cdot10^{7}$ & $5.83\cdot10^{8}$ & $1.03\cdot10^{11}$ & $4.68\cdot10^{11}$ & $7.03\cdot10^{12}$ \\ \cline{2-11}
                  & $7 $  & $1.01\cdot10^{6}$ & $1.48\cdot10^{7}$ & $2.71\cdot10^{8}$ & $1.30\cdot10^{8}$ & $1.02\cdot10^{9}$ & $2.42\cdot10^{10}$ & $1.01\cdot10^{13}$ & $6.69\cdot10^{13}$ & $1.40\cdot10^{15}$ \\ \cline{2-11}
                  & $8 $  & $1.06\cdot10^{7}$ & $1.43\cdot10^{8}$ & $6.31\cdot10^{9}$ & $2.72\cdot10^{9}$ & $2.80\cdot10^{10}$ & $1.07\cdot10^{12}$ & $1.06\cdot10^{15}$ & $9.01\cdot10^{15}$ & $2.97\cdot10^{17}$ \\ \cline{2-11}
                  & $9 $  & $1.17\cdot10^{8}$ & $3.36\cdot10^{9}$ & $1.55\cdot10^{11}$ & $6.01\cdot10^{10}$ & $1.31\cdot10^{12}$ & $5.01\cdot10^{13}$ & $1.17\cdot10^{17}$ & $1.45\cdot10^{18}$ & $6.64\cdot10^{19}$ \\ \cline{2-11}
                  & $10$  & $1.37\cdot10^{9}$ & $8.79\cdot10^{10}$ & $4.02\cdot10^{12}$ & $1.40\cdot10^{12}$ & $4.32\cdot10^{13}$ & $2.46\cdot10^{15}$ & $1.37\cdot10^{19}$ & $2.32\cdot10^{20}$ & $1.56\cdot10^{22}$ \\ \hline \hline
\multirow{9}{*}{$0.01$}                  & $2 $  & $31      $ & $61      $ & $149     $ & $123     $ & $221     $ & $541     $ & $3058    $ & $4901    $ & $1.25\cdot10^{4}$ \\ \cline{2-11}
                  & $3 $  & $225     $ & $855     $ & $2406    $ & $1798    $ & $4941    $ & $1.67\cdot10^{4}$ & $2.25\cdot10^{5}$ & $4.65\cdot10^{5}$ & $1.85\cdot10^{6}$ \\ \cline{2-11}
                  & $4 $  & $1870    $ & $6497    $ & $4.41\cdot10^{4}$ & $2.99\cdot10^{4}$ & $8.90\cdot10^{4}$ & $5.84\cdot10^{5}$ & $1.87\cdot10^{7}$ & $4.94\cdot10^{7}$ & $3.11\cdot10^{8}$ \\ \cline{2-11}
                  & $5 $  & $1.72\cdot10^{4}$ & $9.18\cdot10^{4}$ & $8.93\cdot10^{5}$ & $5.50\cdot10^{5}$ & $2.47\cdot10^{6}$ & $2.26\cdot10^{7}$ & $1.72\cdot10^{9}$ & $5.96\cdot10^{9}$ & $5.78\cdot10^{10}$ \\ \cline{2-11}
                  & $6 $  & $1.71\cdot10^{5}$ & $1.53\cdot10^{6}$ & $1.97\cdot10^{7}$ & $1.10\cdot10^{7}$ & $5.81\cdot10^{7}$ & $9.46\cdot10^{8}$ & $1.71\cdot10^{11}$ & $7.82\cdot10^{11}$ & $1.17\cdot10^{13}$ \\ \cline{2-11}
                  & $7 $  & $1.84\cdot10^{6}$ & $2.95\cdot10^{7}$ & $4.64\cdot10^{8}$ & $2.35\cdot10^{8}$ & $2.17\cdot10^{9}$ & $4.26\cdot10^{10}$ & $1.84\cdot10^{13}$ & $1.16\cdot10^{14}$ & $2.53\cdot10^{15}$ \\ \cline{2-11}
                  & $8 $  & $2.10\cdot10^{7}$ & $3.14\cdot10^{8}$ & $1.17\cdot10^{10}$ & $5.37\cdot10^{9}$ & $6.34\cdot10^{10}$ & $2.05\cdot10^{12}$ & $2.10\cdot10^{15}$ & $1.77\cdot10^{16}$ & $5.82\cdot10^{17}$ \\ \cline{2-11}
                  & $9 $  & $2.53\cdot10^{8}$ & $7.52\cdot10^{9}$ & $3.10\cdot10^{11}$ & $1.29\cdot10^{11}$ & $2.00\cdot10^{12}$ & $1.04\cdot10^{14}$ & $2.53\cdot10^{17}$ & $2.97\cdot10^{18}$ & $1.42\cdot10^{20}$ \\ \cline{2-11}
                  & $10$  & $3.20\cdot10^{9}$ & $8.79\cdot10^{10}$ & $8.67\cdot10^{12}$ & $3.28\cdot10^{12}$ & $6.80\cdot10^{13}$ & $5.53\cdot10^{15}$ & $3.20\cdot10^{19}$ & $5.43\cdot10^{20}$ & $3.63\cdot10^{22}$ \\ \hline \hline
\multirow{9}{*}{$0$}                  & $2 $  & $32      $ & $61      $ & $155     $ & $128     $ & $221     $ & $562     $ & $3184    $ & $5101    $ & $1.30\cdot10^{4}$ \\ \cline{2-11}
                     & $3 $  & $239     $ & $855     $ & $2543    $ & $1910    $ & $4941    $ & $1.77\cdot10^{4}$ & $2.39\cdot10^{5}$ & $4.88\cdot10^{5}$ & $1.97\cdot10^{6}$ \\ \cline{2-11}
                     & $4 $  & $2027    $ & $1.07\cdot10^{4}$ & $4.75\cdot10^{4}$ & $3.24\cdot10^{4}$ & $1.16\cdot10^{5}$ & $6.31\cdot10^{5}$ & $2.03\cdot10^{7}$ & $5.23\cdot10^{7}$ & $3.37\cdot10^{8}$ \\ \cline{2-11}
                     & $5 $  & $1.90\cdot10^{4}$ & $9.18\cdot10^{4}$ & $9.79\cdot10^{5}$ & $6.08\cdot10^{5}$ & $2.47\cdot10^{6}$ & $2.48\cdot10^{7}$ & $1.90\cdot10^{9}$ & $6.76\cdot10^{9}$ & $6.39\cdot10^{10}$ \\ \cline{2-11}
                     & $6 $  & $1.94\cdot10^{5}$ & $1.53\cdot10^{6}$ & $2.19\cdot10^{7}$ & $1.24\cdot10^{7}$ & $8.11\cdot10^{7}$ & $1.06\cdot10^{9}$ & $1.94\cdot10^{11}$ & $8.98\cdot10^{11}$ & $1.31\cdot10^{13}$ \\ \cline{2-11}
                     & $7 $  & $2.12\cdot10^{6}$ & $2.95\cdot10^{7}$ & $5.28\cdot10^{8}$ & $2.71\cdot10^{8}$ & $2.17\cdot10^{9}$ & $4.88\cdot10^{10}$ & $2.12\cdot10^{13}$ & $1.35\cdot10^{14}$ & $2.90\cdot10^{15}$ \\ \cline{2-11}
                     & $8 $  & $2.46\cdot10^{7}$ & $3.14\cdot10^{8}$ & $1.35\cdot10^{10}$ & $6.31\cdot10^{9}$ & $6.34\cdot10^{10}$ & $2.39\cdot10^{12}$ & $2.46\cdot10^{15}$ & $2.08\cdot10^{16}$ & $6.83\cdot10^{17}$ \\ \cline{2-11}
                     & $9 $  & $3.03\cdot10^{8}$ & $7.52\cdot10^{9}$ & $3.66\cdot10^{11}$ & $1.55\cdot10^{11}$ & $3.01\cdot10^{12}$ & $1.23\cdot10^{14}$ & $3.03\cdot10^{17}$ & $3.84\cdot10^{18}$ & $1.70\cdot10^{20}$ \\ \cline{2-11}
                     & $10$  & $3.92\cdot10^{9}$ & $2.00\cdot10^{11}$ & $1.04\cdot10^{13}$ & $4.02\cdot10^{12}$ & $1.05\cdot10^{14}$ & $6.70\cdot10^{15}$ & $3.92\cdot10^{19}$ & $6.50\cdot10^{20}$ & $4.44\cdot10^{22}$ \\ \hline
\end{tabular}
}

\caption{\sf{A comparison of the $\beta$-cover size for the $d$-dimensional $(1-2\gamma)$-cube with various values for the attributes of $\gamma$, ball radius $\beta$, and dimension $d$. For every combination of attributes we report the following values, $\Y_{\textup{lower}}$ is the lower bound for the cube cover problem, $\Y_{\textup{prev}}$ is the previous result, both are from~\cite[Theorem~3]{TsaoSoloveyETAL2020}. $\Y_{\textup{staggered}}$ is the number of points required to cover the cube using the staggered grid. Note that $\gamma=0$ is the cover of the unit cube, and that for the motion-planning problem $\beta \leq \gamma$.}}\label{tbl:staggered_grid_size}

\end{table*}

\fi
\fi

\appendix 

We provide an example of $\L,\L^o$ and $V_0,\ldots, V_\ell$ from Theorem~\ref{thm:multi_robot} for a two-robot setting with 
\begin{align*}
& T_1= (0, 0.1, 0.2, 0.3, 0.4, 0.5, 0.7, 0.9, 1), \\  
& Z_1=(z^1_0,z^1_1,z^1_2,z^1_3,z^1_4,z^1_5,z^1_6,z^1_7,z^1_8), \\
& T_2= (0, 0.2, 0.4, 0.6, 0.8, 0.85, 0.9, 0.95, 1), \\
& Z_2=(z^2_0,z^2_1,z^2_2,z^2_3,z^2_4,z^2_5,z^2_6,z^2_7,z^2_8).
\end{align*}
Note that $z^i_0$ is robot $i$'s starting point, and $z^i_8$ is its goal. In particular, 
\begin{align*}
\L=\{& (1,0.1,z^1_1),(1,0.2,z^1_2),(1,0.3,z^1_3),(1,0.4,z^1_4), \\
& (1,0.5,z^1_5),(1,0.7,z^1_6),(1,0.9,z^1_7),(1,1,z^1_8), \\
& (2,0.2,z^2_1),(2,0.4,z^2_2),(2,0.6,z^2_3),(2,0.8,z^2_4) \\
& (2,0.85,z^2_5),(2,0.9,z^2_6),(2,0.95,z^2_7),(2,1,z^2_8)\}.
\end{align*}
\begin{align*}
\L^o=\{& (1,0.1,z^1_1),(1,0.2,z^1_1),(2,0.2,z^2_1),(1,0.3,z^1_3), \\
& (1,0.4,z^1_4),(2,0.4,z^2_2),(1,0.5,z^1_5),(2,0.6,z^2_3), \\
& (1,0.7,z^1_6),(2,0.8,z^2_4),(2,0.85,z^2_5),(1,0.9,z^1_7), \\
& (2,0.9,z^2_6),(2,0.95,z^2_7),(1,1,z^1_8),(2,1,z^2_8)\},
\end{align*} 
yielding the vertices $V_0 = (z^1_0,z^2_0), V_1 = (z^1_1,z^2_0), V_2 = (z^1_2,z^2_0), V_3 = (z^1_2,z^2_1), V_4 = (z^1_3,z^2_1), V_5 = (z^1_4,z^2_1), V_6 = (z^1_4,z^2_2), V_7 = (z^1_5,z^2_2), V_8 = (z^1_5,z^2_3), V_9 = (z^1_6,z^2_3), V_{10} = (z^1_6,z^2_4), V_{11} = (z^1_6,z^2_5), V_{12} = (z^1_7,z^2_5), V_{13} = (z^1_7,z^2_6), V_{14} = (z^1_7,z^2_7), V_{15} = (z^1_8,z^2_7), V_{16} = (z^1_8,z^2_8).$

An illustration for two vertices can be found in Figure~\ref{fig:example-fig}. A full illustration for all vertices can be found at \href{https://www.geogebra.org/m/j3cz3fbb}{https://www.geogebra.org/m/j3cz3fbb}.

\begin{figure}[ht]
  \centering
  \begin{subfigure}[b]{0.95\linewidth}
  \centering
    \includegraphics[trim=5.5 1 8 1, clip, width=1\textwidth]{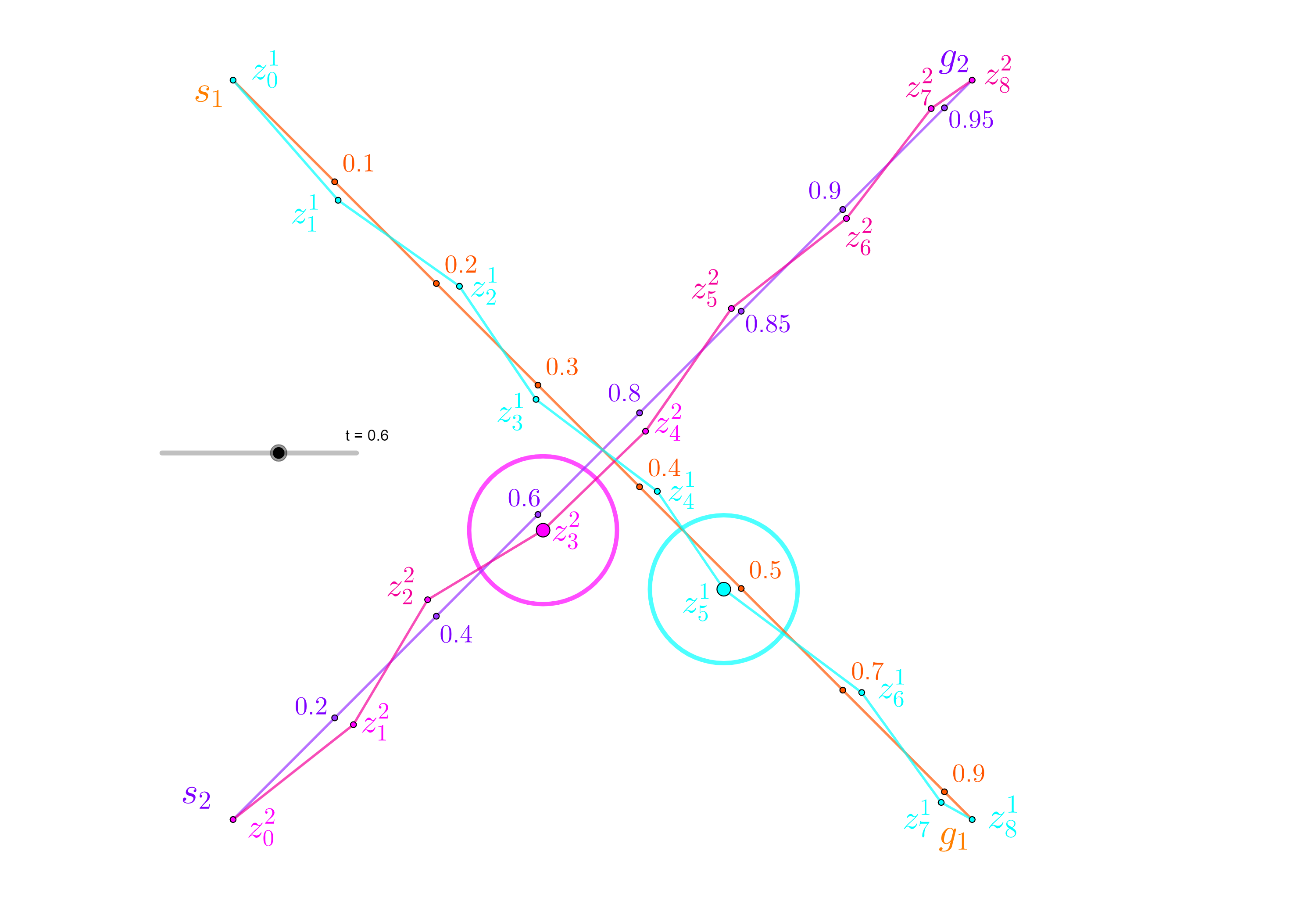}
    \caption{$V_8$}
  \end{subfigure}
  \begin{subfigure}[b]{0.95\linewidth}
  \centering
    \includegraphics[trim=5.5 1 8 1, clip, width=1\textwidth]{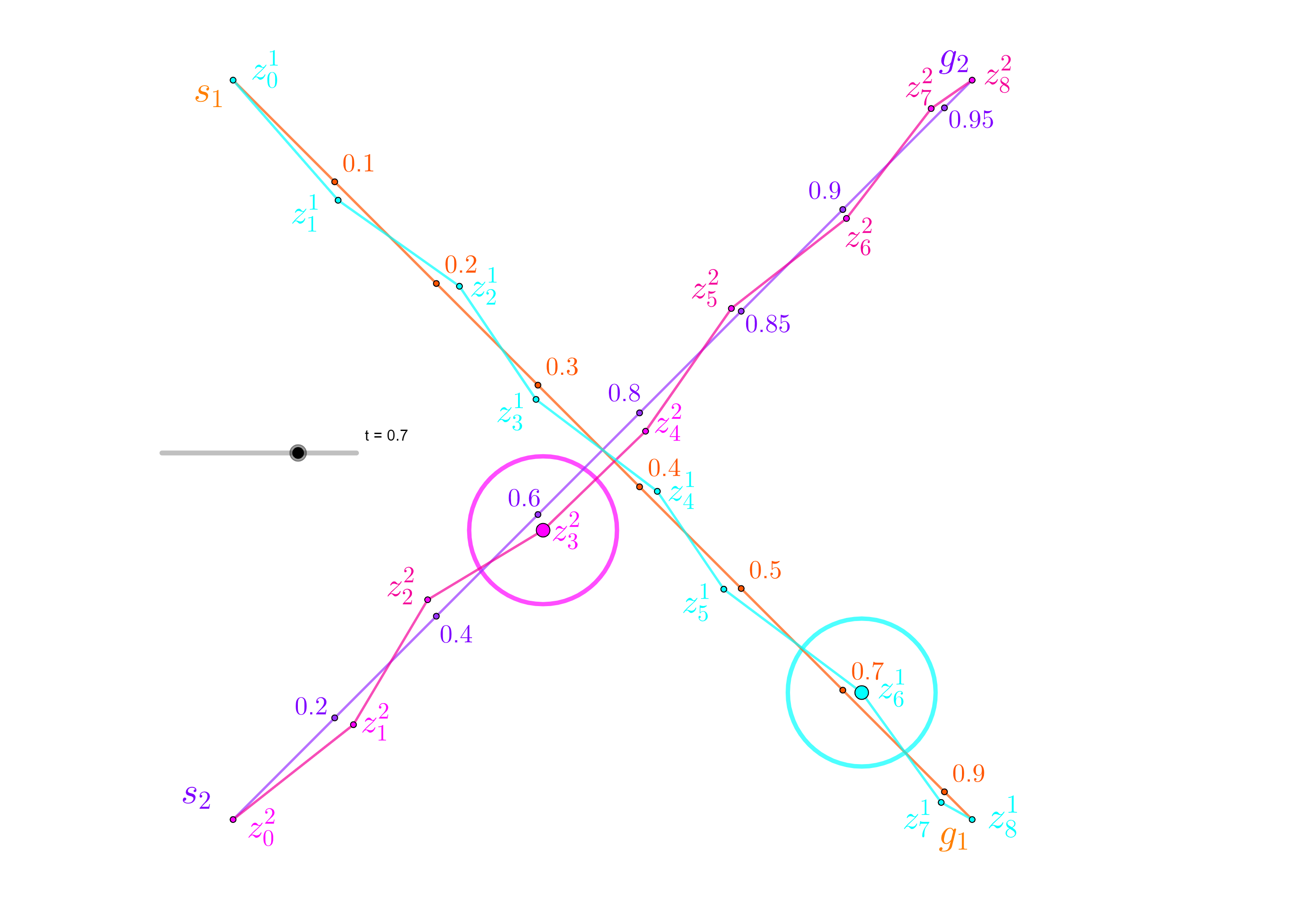}
    \caption{$V_9$}
  \end{subfigure}

  \caption{\sf{An illustration for the example in the. appendix. The robots are denoted as empty discs. Robot $1$ is denoted in cyan. Its reference path is the orange line, the path calculated for it is represented by the cyan path. Notice for each time stamp in $T_1$ there exist a point on the orange path and a the point from $Z_1$ next to it in cyan. Robot $2$ is denoted in pink. Its reference path is the purple line and the path calculated for it is represented by the pink path.}}
  \label{fig:example-fig}
\end{figure}

\end{document}
